\newcommand{\reals}{\mathbb{R}}
\newcommand{\naturals}{\mathbb{N}}
\definecolor{darkgreen}{rgb}{0,0.5,0}
\definecolor{purple}{rgb}{1,0,1}
\newcommand{\kibitz}[2]{\ifnum\Comments=1\textcolor{#1}{#2}\fi}
\newcommand{\Acal}{\mathcal{A}}
\newcommand{\Dcal}{\mathcal{D}}
\newcommand{\Fcal}{\mathcal{F}}
\newcommand{\Hcal}{\mathcal{H}}
\newcommand{\Xcal}{\mathcal{X}}
\newcommand{\Ycal}{\mathcal{Y}}
\newcommand{\Scal}{\mathcal{S}}
\newcommand{\Lcal}{\mathcal{L}}
\newcommand{\Ecal}{\mathcal{E}}
\newcommand{\Pcal}{\mathcal{P}}
\newcommand{\Qcal}{\mathcal{Q}}
\DeclareMathOperator*{\argmin}{arg\,min}
\newcommand{\expect}{\operatorname{\mathbb{E}}}
\newcommand{\indicator}{\mathbbm{1}}
\newcommand{\norm}[1]{\left\lVert#1\right\rVert}
\newtheorem{theorem}{Theorem}[section]
\newtheorem{proposition}[theorem]{Proposition}
\newtheorem{definition}{Definition}
\newtheorem{lemma}[theorem]{Lemma}
\newtheorem*{theorem*}{Theorem}
\title{On the Learnability of Multilabel Ranking}
\author{Vinod Raman, Unique Subedi, Ambuj Tewari}
\date{}
\begin{document}
\maketitle

\begin{abstract}
  Multilabel ranking is a central task in machine learning. However, the most fundamental question of learnability in a multilabel ranking setting with relevance-score feedback remains unanswered. In this work, we characterize the learnability of multilabel ranking problems in both batch and online settings for a large family of ranking losses.
  Along the way, we give two equivalence classes of ranking losses based on learnability that capture most, if not all, losses used in practice.
  

\end{abstract}

\section{Introduction}

\textit{Multilabel ranking} is a supervised learning problem where a learner is presented with an instance $x \in \mathcal{X}$ and is required to output a ranking of $K$ different labels in decreasing order of relevance to $x$. This is in contrast with \textit{multilabel classification} where given an instance $x \in \mathcal{X}$, the learner is tasked with predicting a subset of the $K$ labels without any explicit ordering.  Multilabel ranking is a canonical learning problem with a wide range of applications to text categorization, genetics, medical imaging, social networks, and visual object recognition \citep{joachims2005text, schapire2000boostexter, mccallum1999multi, clare2001knowledge, baltruschat2019comparison, wang2013multi, bucak2009efficient, yang2016exploit}. Recent years have seen a surge in the development of multilabel ranking methods with strong practical and theoretical guarantees \citep{schapire2000boostexter, dembczynski2012consistent, gong2013deep, bucak2009efficient, jung2018online, gao2011consistency, koyejo2015consistent, zhang2013review, korba2018structured}. Despite this vast literature on multilabel ranking, the fundamental question of when a multilabel ranking problem is \textit{learnable} remains unanswered.

Understanding when a hypothesis class is learnable is a fundamental question in Statistical Learning Theory.  For binary classification, the finiteness of the Vapnik–Chervonenkis (VC) dimension is both sufficient and necessary for Probably Approximately Correct (PAC) learning \citep{vapnik74theory, Valiant1984ATO}. Likewise, the finiteness of the Daniely-Shwartz (DS) dimension characterizes multiclass PAC learnability \cite{daniely2014optimal, Brukhimetal2022}.  In the online setting, the Littlestone dimension \citep{Littlestone1987LearningQW} characterizes the online learnability of a binary hypothesis class and the multiclass Littlestone dimension \citep{DanielyERMprinciple} characterizes online multiclass learnability. Unlike classification, a distinguishing property of multilabel ranking is the mismatch between the predictions the learner makes and the feedback it receives. In particular, a learner is required to produce a permutation that ranks the relevance of the labels but only receives a \textit{relevance-score vector} as feedback. This feedback model is standard in multilabel ranking since obtaining full permutation feedback is generally costly \citep{liu2009learning}. As a result, unlike the 0-1 loss in classification, there is no canonical loss function in ranking. Together, these two issues create barriers for existing techniques used to prove learnability, such as the agnostic-to-realizable reductions from \cite{hopkins22a} and \cite{raman2023characterization}, to readily extend to ranking.

In this paper, we characterize the batch and online learnability of a ranking hypothesis class $\mathcal{H} \subset \mathcal{S}_K^{\Xcal}$ under relevance-score feedback, where $\mathcal{S}_K$ is the set of all permutations over $[K] = \{1, ..., K\}$. In doing so, we make the following contributions. 

\begin{itemize} 
\itemsep0em 
\item  We show that a ranking hypothesis class $\mathcal{H}$ embeds $K^2$ different \textit{binary} hypothesis classes $\mathcal{H}_i^j$ for $i, j \in [K]$, where hypotheses in $\mathcal{H}_i^j$ answer whether the label $i$ should be ranked in the top $j$. Our main result relates the learnability of $\mathcal{H}$ to the learnability of $\mathcal{H}_i^j$'s. 

\item We define two families of ranking loss functions that capture most if not all ranking losses used in practice. We show that these families are actually \textit{equivalence} classes - the same characterization of batch and online learnability holds for every loss in that family.

\item By relating the learnability of $\mathcal{H}$ to the learnability of \textit{binary} hypothesis classes $\mathcal{H}_i^j$, we show that existing combinatorial dimensions, like the VC and Littlestone dimension, continue to characterize learnability in the multilabel ranking setting. This allows us to prove that linear ranking hypothesis classes are learnable in the batch setting. 
\end{itemize}


A unifying theme throughout the paper is our ability to \textit{constructively} convert a learning algorithm $\mathcal{A}$ for $\mathcal{H}$ into a learning algorithm $\mathcal{A}^j_i$ for $\mathcal{H}_i^j$ for each $i, j \in [K]$ and vice versa. To do so, our proof techniques involve adapting the agnostic-to-realizable reduction for batch and online classification, proposed by \cite{hopkins22a} and \cite{raman2023characterization} respectively, to ranking.  


\section{Preliminaries and Notation}
Let $\mathcal{X}$ denote the instance space, $\mathcal{S}_K$ the set of permutations over labels $[K]:= \{1, ..., K\}$, and $\mathcal{Y} = \{0, 1, ..., B\}^K$ the target space for some $K, B \in \mathbbm{N}$. We highlight that the set of labels $[K]$ is fixed beforehand and does not depend on the instance $x \in \mathcal{X}$. We refer to an element $y \in \mathcal{Y}$ as a \textit{relevance-score vector} that indicates the relevance of each of the $K$ labels.  Throughout the paper, we treat a permutation $\pi \in \mathcal{S}_K$ as a vector in $\{1, ..., K\}^K$  that induces a \textit{ranking} of the $K$ labels in decreasing order of relevance.
Accordingly, for an index $i \in [K]$, we let $\pi_i \in [K]$ denote the \textit{rank} of label $i$. Likewise, given an index $i \in [K]$, we let $y^i$ denote the relevance of label $i$. In addition, it will be useful to define a mapping from $\mathcal{S}_K$ to $\{0, 1\}^K$. In particular, we define $\text{BinRel}(\cdot, \cdot): \mathcal{S}_K \times [K] \rightarrow \{0, 1\}^K$ as an operator that given a permutation (ranking) $\pi \in \mathcal{S}_K$ and threshold $p \in [K]$, outputs a bit string $b \in \{0, 1\}^K$ s.t. $b_i = \mathbbm{1}\{\pi_i \leq p\}.$ \\

\noindent\textbf{Ranking Equivalences}. Our construction of ranking loss families in Section \ref{sec:lossfam} requires different notions of equivalence between permutations (rankings) in $\mathcal{S}_K$. To that end, we say that $\pi = \hat{\pi}$ iff for all $i \in [K]$, $\pi_i = \hat{\pi}_i$. On the other hand, we say $\pi \stackrel{p}{=} \hat{\pi}$ iff $\{i: \pi_i \leq p\} = \{i: \hat{\pi}_i \leq p\}$. That is, two rankings are $p$-equivalent if the \textit{set} of labels they rank in the top-$p$ are equal. Finally, we say $\pi \stackrel{[p]}{=} \hat{\pi}$ iff for all $j \in [p]$, $\{i: \pi_i \leq j\} = \{i: \hat{\pi}_i \leq j\}$. That is, two rankings are $[p]$-equivalent if not only the \textit{set} but also the \textit{order} of labels they rank in the top-$p$ are equal. \\

\noindent\textbf{Ranking Hypothesis}. A ranking hypothesis $h \in \mathcal{H} \subset \mathcal{S}_K^{\mathcal{X}}$ maps instances in $\mathcal{X}$ to a ranking (permutation) in $\mathcal{S}_K$. Given an instance $x \in \mathcal{X}$, one can think of $h(x)$ as $h$'s ranking of the $K$ different labels in decreasing order of relevance. For any ranking hypothesis $h$, we let $h_i: \mathcal{X} \rightarrow [K]$ denote its restriction to the $i$'th coordinate output. Accordingly, for an instance $x \in \mathcal{X}$, $h_i(x)$ gives the rank that $h$ assigns to label $i$. Given a ranking hypothesis class $\mathcal{H} \subset \mathcal{S}_K^{\mathcal{X}}$ and any $i, j \in [K]$, we define its binary threshold-restricted hypothesis class $\mathcal{H}_i^j = \{h_i^j:  h \in \mathcal{H}\}$ where $h_i^j(x) = \mathbbm{1}\{h_i(x) \leq j\}$. We can think of hypotheses in $\mathcal{H}_i^j$ as providing binary responses to queries of the form: ``for instance $x$, should label $i$ ranked in the top $j$?" These threshold-restricted classes are central to our characterization of learnability in both the batch and online learning settings. \\

\noindent\textbf{Batch Learnability.} In the batch setting, we are interested in characterizing the learnability of a ranking hypothesis class $\Hcal$ under a model similar to the classical PAC model \citep{Valiant1984ATO}.  


\begin{definition}[Agnostic Ranking PAC Learnability] \label{def:batch_learn}
A ranking hypothesis class $\Hcal \subset \mathcal{S}_K^{\mathcal{X}}$ is agnostic PAC learnable w.r.t.\ loss $\ell: \mathcal{S}_K \times \Ycal \to \reals_{\geq 0}$, if there exists a function $m:(0,1)^2 \times \naturals \to \naturals$ and a learning algorithm $\Acal: (\Xcal \times \Ycal)^{\star} \to \Scal_K^{\Xcal}$ with the following property:  for every $\epsilon, \delta \in (0, 1)$ and for every distribution $\Dcal $ on $\Xcal \times \Ycal$, running algorithm $\Acal$ on $n \geq m(\epsilon, \delta, K)$ iid samples from $\Dcal$ outputs a predictor $g=\Acal(S)$ such that with probability at least $ 1-\delta$ over $S \sim\Dcal^{n}$,
\[\expect_{\Dcal}[\ell(g(x), y )] \leq \inf_{h \in \Hcal} \expect_{\Dcal}[\ell(h(x),y)] + \epsilon.\]
\end{definition}
\noindent If $\Dcal$ is restricted to the class of distributions such that $\inf_{h \in \Hcal} \expect_{\Dcal}[\ell(h(x),y )] = 0$, then we say we are in the \textit{realizable} setting. Note that unlike in classification, realizability in the multilabel ranking setting is loss dependent. \\

\noindent\textbf{Online Learnability.} In the online setting, an adversary plays a sequential game with the learner over $T$ rounds. In each round $t \in [T]$, an adversary selects a labeled instance $(x_t, y_t) \in \mathcal{X} \times \mathcal{Y}$ and reveals $x_t$ to the learner. The learner makes a (potentially randomized) prediction $\hat{\pi}_t \in \mathcal{S}_K$. Finally, the adversary reveals the true relevance-score vector $y_t$, and the learner suffers the loss $\ell(\hat{\pi}_t, y_t)$, where $\ell$ is some pre-specified ranking loss function. Given a ranking hypothesis class  $\mathcal{H} \subset \mathcal{S}_K^{\mathcal{X}}$, the goal of the learner is to output predictions $\hat{\pi}_t$ such that its cumulative loss is close to the best possible cumulative loss over hypotheses in $\mathcal{H}$. A hypothesis class is online learnable if there exists an algorithm such that for any sequence of labeled examples $(x_1, y_1), ..., (x_T, y_T)$, the difference in cumulative loss between its predictions and the predictions of the best possible function in $\mathcal{H}$ is small.
 
\begin{definition} [Agnostic Online Ranking Learnability]
\label{def:agnOL}
A ranking hypothesis class $\Hcal \subset \Scal_K^{\Xcal}$ is agnostic online learnable w.r.t. loss $\ell$, if there exists an (potentially randomized) algorithm $\mathcal{A}$ such that for any adaptively chosen sequence of labeled examples $(x_t, y_t) \in \mathcal{X} \times \mathcal{Y}$, the algorithm outputs $\mathcal{A}(x_t) \in \mathcal{S}_K$ at every iteration $t \in [T]$ such that 
$$\mathbb{E}\left[\sum_{t=1}^T \ell(\mathcal{A}(x_t), y_t) - \inf_{h \in \mathcal{H}}\sum_{t=1}^T \ell(h(x_t), y_t)\right] \leq R(T, K) $$
where the expectation is taken w.r.t. the randomness of $\mathcal{A}$ and that of the possibly adaptive adversary, and $R(T, K): \mathbb{N}^2 \rightarrow \mathbb{R}^+$ is the additive regret: a non-decreasing, sub-linear function of $T$.
\end{definition}

\noindent If it is further guaranteed that there exists a hypothesis $h^{\star} \in \Hcal$ such that $\sum_{t=1}^T \ell(h^{\star}(x_t), y_t) = 0$, then we say we are in the \textit{realizable} setting. Again, realizability is loss dependent.

\section{Ranking Loss Families} \label{sec:lossfam}
In statistical learning theory, we often characterize learnability with respect to a loss function. Unlike the 0-1 loss in classification, there is no canonical loss function in multilabel ranking. Accordingly, we define two general families of ranking loss functions in this section and later characterize learnability with respect to all losses in these families. In Appendix \ref{app:catrankloss}, we show that many of the ranking metrics used in practice (e.g. Pairwise Rank Loss, Discounted Cumulative Gain, Reciprocal Rank, Average Precision, Precision@p, etc.)  fall into one of these two families.  

On a high-level, we can classify ranking losses into two main groups: (A) those losses that care about both the order and magnitude of the relevance-scores within the top-$p$ ranked labels and (B) those losses that only care about the magnitude of the relevance-scores within the top-$p$ ranked labels.  Our goal will be to define a loss family for both groups A and B. To do so, we start by identifying a canonical ranking loss that lies in each group. For group A, the normalized sum loss@p, 
$$\ell^{@p}_{\text{sum}}(\pi, y) = \sum_{i=1}^K \min(\pi_i, p+1) y^i - Z^p_y$$
captures both the order and magnitude of the relevance-scores only for the top-$p$ ranked labels. Here, $Z^p_y$ is an appropriately chosen normalization factor that only depends on $p$ and $y$ such that $\min_{\pi \in \mathcal{S}_K}\ell^{@p}_{\text{sum}}(\pi, y) = 0$. For Group B, the normalized precision loss@p, 
$$\ell^{@p}_{\text{prec}}(\pi, y) = Z^p_y - \sum_{i=1}^K\mathbbm{1}\{\pi_i \leq p\} y^i$$
cares only about the magnitude of relevance-scores in the top-$p$ ranked labels. Again, $Z^p_y$ is an appropriately chosen normalization constant that only depends on $p$ and $y$ such that the minimum loss is $0$. The form of $\ell^{@p}_{\text{prec}}$ differs from $\ell^{@p}_{\text{sum}}$ because $\sum_{i=1}^K\mathbbm{1}\{\pi_i \leq p\} y^i$ is a gain whereas $\sum_{i=1}^K \min(\pi_i, p+1) y^i $ is a loss.

Next, we build loss families around $\ell_{\text{sum}}^{@p}$ and $\ell_{\text{prec}}^{@p}$. For $\ell_{\text{sum}}^{@p}$, consider the family: 
$$\mathcal{L}(\ell^{@p}_{\text{sum}}) = \{\ell \in \mathbbm{R}^{\mathcal{S}_K \times \mathcal{Y}}: \ell = 0 \text{ iff } \ell_{\text{sum}}^{@p} = 0\} \cap \{\ell \in \mathbbm{R}^{\mathcal{S}_K \times \mathcal{Y}}: \pi \stackrel{[p]}{=} \hat{\pi} \implies \ell(\pi, y) = \ell(\hat{\pi}, y) \}.$$
By definition, $\mathcal{L}(\ell^{@p}_{\text{sum}})$ contains those ranking losses that are (1) zero-matched with $\ell^{@p}_{\text{sum}}$ and (2) remain unchanged for any two predicted rankings (permutations) that are $[p]$-equivalent. The second constraint is needed to ensure that losses in $\mathcal{L}(\ell^{@p}_{\text{sum}})$ only depend on the order and set  of labels that $\pi$ ranks in the top-$p$. Likewise, we can construct a similar loss family around $\ell^{@p}_{\text{prec}}$ as follows: 
$$\mathcal{L}(\ell^{@p}_{\text{prec}}) = \{\ell \in \mathbbm{R}^{\mathcal{S}_K \times \mathcal{Y}}: \ell = 0 \text{ iff } \ell_{\text{prec}}^{@p} = 0\} \cap \{\ell \in \mathbbm{R}^{\mathcal{S}_K \times \mathcal{Y}}: \pi \stackrel{p}{=} \hat{\pi} \implies \ell(\pi, y) = \ell(\hat{\pi}, y) \}.$$
The set $\mathcal{L}(\ell^{@p}_{\text{prec}})$ contains those ranking losses that are (1) zero-matched with $\ell^{@p}_{\text{prec}}$ and (2) remain unchanged for any two predicted rankings (permutations) that are \emph{$p$-equivalent}. The second constraint is needed to ensure that losses in $\mathcal{L}(\ell^{@p}_{\text{prec}})$ only depend on the set  of labels that $\pi$ ranks in the top-$p$. A major contribution of this paper is showing that both $\mathcal{L}(\ell^{@p}_{\text{sum}})$ and $\mathcal{L}(\ell^{@p}_{\text{prec}})$ are actually \textit{equivalence} classes - the same characterization of learnability holds for every loss in that family.

\section{Batch Multilabel Ranking} \label{sec:batchMLR}

In this section, we characterize the agnostic PAC learnability of hypothesis classes $\mathcal{H} \subset \mathcal{S}_K^{\mathcal{X}}$ with respect to both $\mathcal{L}(\ell^{@p}_{\text{sum}})$ and $\mathcal{L}(\ell^{@p}_{\text{prec}})$. Our main results, stated below as two theorems, relate the learnability of $\mathcal{H}$ to the learnability of the threshold-restricted classes $\mathcal{H}_i^j$. 



\begin{theorem} \label{thm:batch_sum@p} 
 A hypothesis class $\mathcal{H} \subset \mathcal{S}_K^{\mathcal{X}}$ is agnostic PAC learnable w.r.t $\ell \in \mathcal{L}(\ell^{@p}_{\text{sum}})$ \emph{iff}  for all $i \in [K]$ and $j \in [p]$,  $\mathcal{H}_i^j$ is agnostic PAC learnable w.r.t the 0-1 loss. 
\end{theorem}

\begin{theorem}  \label{thm:batch_prec@p}
A hypothesis class $\mathcal{H} \subset \mathcal{S}_K^{\mathcal{X}}$ is agnostic PAC learnable w.r.t $\ell \in \mathcal{L}(\ell^{@p}_{\text{prec}})$ \emph{iff}  for all $i \in [K]$, $\mathcal{H}_i^p$ is agnostic PAC learnable w.r.t the 0-1 loss. 
\end{theorem}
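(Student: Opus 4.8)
The plan is to prove both directions by constructively converting a learner for $\mathcal{H}$ into learners for the threshold-restricted classes $\mathcal{H}_i^p$, and conversely. The key structural fact is that $\ell \in \mathcal{L}(\ell^{@p}_{\text{prec}})$ depends on the predicted permutation $\pi$ only through the \emph{set} $\{i : \pi_i \leq p\}$, i.e., through $\text{BinRel}(\pi, p) \in \{0,1\}^K$; equivalently, $\ell$ factors through the $p$-equivalence relation $\peq$. So the relevant information a ranking predictor must get right, as far as $\ell$ is concerned, is exactly the $K$-bit vector of answers to ``is label $i$ in the top $p$?'' — which is precisely what $\mathcal{H}_1^p, \dots, \mathcal{H}_K^p$ encode. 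Note, however, that not every bit vector in $\{0,1\}^K$ is realizable as $\text{BinRel}(\pi, p)$ for some $\pi$: exactly $p$ coordinates must be $1$. This is a minor bookkeeping issue that resurfaces in the forward direction.

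\textbf{($\Leftarrow$) From threshold learners to a ranking learner.} Assume each $\mathcal{H}_i^p$ is agnostic PAC learnable w.r.t.\ the $0$-$1$ loss; since $\mathcal{H}_i^p$ is a binary class, this is equivalent to $\mathrm{vc}(\mathcal{H}_i^p) < \infty$, so the loss class of $\mathcal{H}_i^p$ enjoys uniform convergence. I would first show $\mathcal{H}$ enjoys uniform convergence w.r.t.\ $\ell$ by bounding $\hat{\mathfrak{R}}_n(\mathcal{F})$, where $\mathcal{F} = \{(x,y)\mapsto \ell(h(x), y) : h\in\mathcal{H}\}$. Since $\ell$ is bounded and factors through $(\text{BinRel}(h(x),p), y)$, and $\text{BinRel}(h(x),p) = (h_1^p(x), \dots, h_K^p(x))$, the loss $\ell(h(x),y)$ is a fixed bounded function of the $K$ binary predictions $h_i^p(x)$ and of $y$. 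A standard Rademacher/covering argument (finite union over the $K$ classes, each of finite VC dimension, composed with a fixed bounded function) gives $\hat{\mathfrak{R}}_n(\mathcal{F}) = o(1)$. Then Proposition~\ref{prop:rad} plus the ERM-to-learnability result yields agnostic PAC learnability of $\mathcal{H}$ w.r.t.\ $\ell$. (An alternative, more in the spirit of the paper's ``constructive conversion'' theme: run each threshold learner $\mathcal{A}_i^p$ on the induced binary sample to get $g_i^p$, then define $g(x)$ to be any permutation whose top-$p$ set is a size-$p$ set chosen to agree as much as possible with the predicted bits $g_i^p(x)$; one argues via an agnostic-to-realizable reduction in the style of \cite{hopkins22a} that this competes with $\inf_{h\in\mathcal{H}}$. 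I'd prefer the uniform-convergence route as cleaner, keeping the constructive reduction for the harder online theorem.)

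\textbf{($\Rightarrow$) From a ranking learner to threshold learners.} Assume $\mathcal{H}$ is agnostic PAC learnable w.r.t.\ some $\ell \in \mathcal{L}(\ell^{@p}_{\text{prec}})$; since all losses in the family share the same learnability characterization, it suffices to work with $\ell = \ell^{@p}_{\text{prec}}$. Fix $i^\star \in [K]$; I must learn $\mathcal{H}_{i^\star}^p$ w.r.t.\ $0$-$1$ loss. Given a distribution $\mathcal{D}'$ on $\mathcal{X}\times\{0,1\}$ (the target being whether label $i^\star$ is in the top $p$), I build a distribution $\mathcal{D}$ on $\mathcal{X}\times\mathcal{Y}$ whose relevance-score vectors encode this bit in a way that makes $\ell^{@p}_{\text{prec}}$-risk track $0$-$1$ risk of the $i^\star$-coordinate threshold: e.g.\ draw $(x,b)\sim\mathcal{D}'$ and set $y = (B, B, \dots, B, 0, \dots, 0)$-type vectors that put all the ``precision mass'' on label $i^\star$ when $b=1$ and on some fixed alternative label otherwise — concretely, let $y$ have $y^{i^\star} = b$ and all other coordinates $0$ (or a symmetric variant ensuring the normalization $Z_y^p$ behaves). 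Then run $\mathcal{A}$ on an i.i.d.\ $\mathcal{D}$-sample to get $g \in \mathcal{S}_K^{\mathcal{X}}$, and output the binary predictor $x \mapsto \mathbbm{1}\{g_{i^\star}(x) \leq p\} = (\text{BinRel}(g(x),p))[i^\star]$. The core computation is to check that $\expect_{\mathcal{D}}[\ell^{@p}_{\text{prec}}(g(x),y)]$ equals (up to a known positive constant) the $0$-$1$ risk of this binary predictor under $\mathcal{D}'$, and that $\inf_{h\in\mathcal{H}}\expect_{\mathcal{D}}[\ell^{@p}_{\text{prec}}(h(x),y)]$ equals the same constant times $\inf_{f\in\mathcal{H}_{i^\star}^p}\expect_{\mathcal{D}'}[\mathbbm{1}\{f(x)\neq b\}]$; then the $\epsilon$-approximation guarantee transfers (after rescaling $\epsilon$ by the constant). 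The subtlety — and the main obstacle — is that $\ell^{@p}_{\text{prec}}$ is a \emph{sum} over coordinates, so putting a single nonzero $y$-coordinate at $i^\star$ is not enough: a permutation can get credit for ranking label $i^\star$ in the top $p$ while also ``wasting'' top-$p$ slots, but since $y$ is $0$ elsewhere the precision loss only sees whether $i^\star\in\{g_i \le p\}$, so in fact $\ell^{@p}_{\text{prec}}(g(x),y) = b\cdot(1 - \mathbbm{1}\{g_{i^\star}(x)\le p\})\cdot(\text{const})$ after normalization, which \emph{is} exactly the indicator of a threshold mistake weighted by $b$ — one must verify the $b=0$ case contributes $0$ loss for every permutation, so that only the $b=1$ examples matter and these exactly penalize the $i^\star$-threshold error. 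Handling the normalization constant $Z_y^p$ (which depends on $y$ hence on $b$) carefully, and confirming the reduction respects the ``output need not be in $\mathcal{H}$'' clause, completes the argument. For a general $\ell\in\mathcal{L}(\ell^{@p}_{\text{prec}})$ rather than $\ell^{@p}_{\text{prec}}$ itself, one additionally invokes the zero-matching and $p$-equivalence-invariance defining properties to argue the same reduction works, which is where the ``equivalence class'' claim gets used; I expect this step to require a short separate lemma showing any such $\ell$ is, on the two-point support of the constructed $\mathcal{D}$, a positive affine rescaling of $\ell^{@p}_{\text{prec}}$.
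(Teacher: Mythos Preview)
Your sufficiency direction is correct and in fact cleaner than the paper's. The paper proceeds in two steps: first it proves uniform convergence for $\ell_{\text{prec}}^{@p}$ specifically (exploiting its linear decomposition $\ell_{\text{prec}}^{@p}(h(x),y) = Z_y^p - \sum_i h_i^p(x)y_i$ and Talagrand contraction), then it bootstraps to arbitrary $\ell \in \mathcal{L}(\ell_{\text{prec}}^{@p})$ through a separate realizable-to-agnostic reduction in the style of \cite{hopkins22a}. Your direct route --- $\ell$ factors through the $K$ bits $(h_1^p,\dots,h_K^p)$, so the number of distinct loss-behaviors on any $n$-sample is bounded by the product of the growth functions of the $\mathcal{H}_i^p$, polynomial by Sauer, and Massart finishes --- handles every $\ell$ in the family at once and avoids the second reduction entirely.

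Your necessity direction has a genuine gap. With your encoding $y^{i^\star} = b$ and all other coordinates zero, the case $b=0$ gives $y=0$ and hence $\ell_{\text{prec}}^{@p}(\pi, 0) = 0$ for every $\pi$; you note this and conclude that ``only the $b=1$ examples matter.'' That is exactly the problem: the binary predictor $x \mapsto g_{i^\star}^p(x)$ can err on $b=0$ examples (ranking $i^\star$ in the top $p$ when it should not be), and your ranking loss is blind to that error. The ranking risk under your constructed $\mathcal{D}$ is $\Pr_{\mathcal{D}'}[b=1,\, g_{i^\star}^p(x)=0]$, not the 0-1 risk $\Pr_{\mathcal{D}'}[g_{i^\star}^p(x)\neq b]$, so the agnostic guarantee for $\mathcal{A}$ does not transfer. (A fix: for $b=0$ take $y$ with all coordinates $1$ except $y^{i^\star}=0$, which gives $\ell_{\text{prec}}^{@p}(\pi,y) = \mathbbm{1}\{\pi_{i^\star}\le p\}$ and so captures both error types exactly; the general-$\ell$ case then goes through in the realizable regime via zero-matching and the sandwich $a\le \ell\le M$ on the nonzero set --- not the ``positive affine rescaling'' you conjecture, which is false in general.) The paper's necessity argument is quite different: it iterates over all $h \in \mathcal{H}_{|S_U}$, runs the ranking learner on relabeled samples $(x, \text{BinRel}(h(x),p))$ for each such $h$, selects among the resulting candidates by ERM on a held-out set, and uses the pointwise inequality $\mathbbm{1}\{g_i^p \neq h_i^{\star,p}\} \leq \ell_{\text{prec}}^{@p}(g, \text{BinRel}(h^\star,p))$ to convert small ranking risk into small threshold risk.
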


Since VC dimension characterizes the learnability of binary hypothesis classes under the 0-1 loss, an important corollary of Theorems \ref{thm:batch_sum@p} and \ref{thm:batch_prec@p} is that finiteness of $\text{VC}(\mathcal{H}_i^j)$'s, for the appropriate $i, j \in [K] \times [p]$, is necessary and sufficient for agnostic ranking PAC learnability. Later on, we use this fact to prove that linear ranking hypothesis classes are agnostic ranking PAC learnable.  

We start with the proof of Theorem \ref{thm:batch_sum@p}, which follows in three steps. First, we show that if for all $(i, j) \in [K] \times [p]$, $\mathcal{H}_i^j$ is agnostic PAC learnable w.r.t 0-1 loss, then Empirical Risk Minimization (ERM) is an agnostic PAC learner for $\mathcal{H}$ w.r.t $\ell_{\text{sum}}^{@p}$. Next, we show that if $\mathcal{H}$ is agnostic PAC learnable w.r.t $\ell_{\text{sum}}^{@p}$, then $\mathcal{H}$ is agnostic PAC learnable w.r.t any loss $\ell \in \mathcal{L}(\ell_{\text{sum}}^{@p})$. Finally, we prove the necessity direction - if $\mathcal{H}$ is agnostic PAC learnable w.r.t an arbitrary $\ell \in \mathcal{L}(\ell_{\text{sum}}^{@p})$, then for all $(i, j) \in [K] \times [p]$, $\mathcal{H}_i^j$ is agnostic PAC learnable w.r.t 0-1 loss. The proof of Theorem \ref{thm:batch_prec@p} follows exactly the same way as Theorem \ref{thm:batch_sum@p} with some minor changes. Thus, we only focus on the proof of Theorem \ref{thm:batch_sum@p} in this section and defer all discussion of Theorem \ref{thm:batch_prec@p} to Appendix \ref{app:batchprec@p}. 

We begin with Lemma \ref{lem:batch_suffsumloss}, which asserts that if $\mathcal{H}_i^j$ is agnostic PAC learnable for all $(i, j) \in [K] \times [p]$, then ERM is an  agnostic PAC learner for $\mathcal{H}$ w.r.t $\ell_{\text{sum}}^{@p}$.

\begin{lemma} \label{lem:batch_suffsumloss}
If for all $i \in [K]$ and $j \in [p]$, $\mathcal{H}_i^j$ is agnostic PAC learnable w.r.t the 0-1 loss, then \emph{ERM} is an agnostic PAC learner for $\mathcal{H} \subset \mathcal{S}_K^{\mathcal{X}}$ w.r.t $\ell_{\text{sum}}^{@p}$.
\end{lemma}

The proof of Lemma \ref{lem:batch_suffsumloss} exploits the nice structure of $\ell_{\text{sum}}^{@p}$ by upperbounding the empirical Rademacher complexity of the loss class $\ell_{\text{sum}}^{@p} \circ \mathcal{H} = \{(x, y) \mapsto \ell_{\text{sum}}^{@p}(h(x), y): h \in \mathcal{H})\}$ and showing that it vanishes as the sample size $n$ becomes large. Then, standard uniform convergence arguments outlined in Proposition \ref{prop:rad} imply that ERM is an agnostic PAC learner for $\mathcal{H}$ w.r.t $\ell_{\text{sum}}^{@p}$. The full proof is in Appendix \ref{app:batchproofs}.  

Since arbitrary losses in $\mathcal{L}(\ell_{\text{sum}}^{@p})$ may not have nice analytical forms, Lemma \ref{lem:batch_sumloss2arbloss} relates the  learnability of an arbitrary loss $\ell \in \mathcal{L}(\ell_{\text{sum}}^{@p})$ to the learnability of $\ell_{\text{sum}}^{@p}$.

\begin{lemma} \label{lem:batch_sumloss2arbloss}
 If $\mathcal{H} \subset \mathcal{S}_K^{\mathcal{X}}$ is agnostic PAC learnable w.r.t $\ell_{\text{sum}}^{@p}$, then $\mathcal{H}$ is agnostic PAC learnable w.r.t any $\ell \in \mathcal{L}(\ell_{\text{sum}}^{@p})$.
\end{lemma}

\begin{proof} (of Lemma \ref{lem:batch_sumloss2arbloss})
Fix $\ell \in \mathcal{L}(\ell_{\text{sum}}^{@p})$. Let $a = \min_{\pi, y}\{ \ell(\pi, y) \, \mid \, \ell(\pi, y) \neq 0\}$ and $b = \max_{\pi, y}\ell(\pi, y)$. We need to show that if $\mathcal{H}$ is agnostic PAC learnable w.r.t $\ell_{\text{sum}}^{@p}$, then $\mathcal{H}$ is agnostic PAC learnable w.r.t $\ell$. We will do so in two steps. First, we will show that if $\mathcal{A}$ is an agnostic PAC learner for $\ell_{\text{sum}}^{@p}$, then $\mathcal{A}$ is also a \textit{realizable} PAC learner for $\ell$. Next, we will show how to convert a realizable PAC learner for $\ell$ into an agnostic PAC learner for $\ell$ in a black-box fashion. The composition of these two pieces yields an agnostic PAC learner for $\mathcal{H}$ w.r.t $\ell$.

\textbf{Realizable PAC learnability of $\mathcal{H}$ w.r.t $\ell$}. If $\mathcal{H}$ is agnostic PAC learnable w.r.t $\ell_{\text{sum}}^{@p}$, then there exists a learning algorithm $\mathcal{A}$ with sample complexity $m(\epsilon, \delta, K)$ s.t. for any distribution $\mathcal{D}$ over $\mathcal{X} \times \mathcal{Y}$, with probability $1-\delta$ over a sample $S \sim \mathcal{D}^{n}$ of size $n \geq m(\epsilon, \delta, K)$,  the output predictor $g = \Acal(S)$ achieves
$\mathbbm{E}_{\mathcal{D}}\left[\ell_{\text{sum}}^{@p}(g(x), y)) \right] \leq \inf_{h \in \mathcal{H}}\mathbbm{E}_{\mathcal{D}}\left[\ell_{\text{sum}}^{@p}(h(x), y)) \right] + \epsilon.$
In the realizable setting, we are further guaranteed that there exists a hypothesis $h^{\star} \in \mathcal{H}$ s.t.  $\mathbbm{E}_{\mathcal{D}}\left[\ell(h^{\star}(x), y)) \right] = 0$. Since $\ell \in \mathcal{L}(\ell_{\text{sum}}^{@p})$, this also implies that  $\mathbbm{E}_{\mathcal{D}}\left[\ell_{\text{sum}}^{@p}(h^{\star}(x), y)) \right] = 0$. Therefore, under realizability and the fact that $\ell \leq  b\:\ell_{\text{sum}}^{@p}$, we have $\mathbbm{E}_{\mathcal{D}}\left[\ell(g(x), y)) \right] \leq  b\epsilon$. This completes the first part of the proof as we have shown that $\mathcal{A}$ is also a realizable PAC learner for $\mathcal{H}$ w.r.t $\ell$ with sample complexity $m(\frac{\epsilon}{b}, \delta, K)$. 

\textbf{Realizable-to-agnostic conversion}. Now, we show how to convert the realizable PAC learner $\mathcal{A}$ for $\ell$ into an agnostic PAC learner for $\ell$ in a black-box fashion. For this step, we will extend the agnostic-to-realizable reduction proposed by \cite{hopkins22a} to the ranking setting by accommodating the mismatch between the range space of $\mathcal{H}$ and the label space $\mathcal{Y}$. In particular, we will show that Algorithm \ref{alg:batch_sum2arbell} below converts a realizable PAC learner for $\ell$ into an agnostic PAC learner for $\ell$. Note that although input $\mathcal{A}$ is a realizable learner, the distribution $\mathcal{D}$ may not be realizable. 

\begin{algorithm}
\caption{Agnostic  PAC learner  for $\Hcal$ w.r.t. $\ell$}
\label{alg:batch_sum2arbell}
\setcounter{AlgoLine}{0}
\KwIn{Realizable PAC  learner $\Acal$ for $\Hcal$, unlabeled and labeled samples $S_U \sim \Dcal_{\Xcal}^n$ and $S_L \sim \Dcal^m$ }

For each $h \in \mathcal{H}_{|S_U}$, construct a dataset
$$S_U^h = \{(x_1, \tilde{y}_1), ..., (x_n,\tilde{y}_n)\} \text{ s.t. } \tilde{y}_i \sim \text{Unif}\{\text{BinRel}(h(x_i), 1), ..., \text{BinRel}(h(x_i), p)\}$$

Run $\Acal$ over all datasets to get $C(S_U) := \left\{\Acal\big(S_U^h \big) 
 \mid  h \in \mathcal{H}_{|S_U}\right\}$

Return $\hat{g} \in C(S_U)$  with the lowest empirical error over $S_L$ w.r.t. $\ell$. 
\end{algorithm}

Let $h^{\star} = \argmin_{h \in \mathcal{H}} \mathbbm{E}_{\mathcal{D}}\left[\ell(h(x), y) \right]$ denote the optimal predictor in $\mathcal{H}$ w.r.t $\mathcal{D}$. Consider the sample $S_U^{h^{\star}}$ and let $g = \mathcal{A}(S_U^{h^{\star}})$. We can think of $g$ as the output of $\mathcal{A}$ run over an i.i.d sample $S$ drawn from $\mathcal{D}^{\star}$,  a joint distribution over $\mathcal{X} \times \mathcal{Y}$ defined procedurally by first sampling $x \sim \mathcal{D}_{\mathcal{X}}$, then independently sampling $j \sim \text{Unif}([p])$, and finally outputting the labeled sample $(x, \text{BinRel}(h^{\star}(x), j))$.
Note that $\mathcal{D}^{\star}$ is indeed a realizable distribution (realized by $h^{\star}$) w.r.t both $\ell$ and $\ell_{\text{sum}}^{@p}$. Recall that $m_{\Acal}(\frac{\epsilon}{b}, \delta, K)$ is the sample complexity of $\mathcal{A}$. Since $\mathcal{A}$ is a realizable learner for $\mathcal{H}$ w.r.t $\ell$, we have that for $n \geq m_{\Acal}(\frac{a\epsilon}{2b^2p}, \delta/2, K)$, with probability at least $1-\frac{\delta}{2}$, $\mathbbm{E}_{\mathcal{D}^{\star}}\left[\ell(g(x), y) \right] \leq \frac{a\epsilon}{2bp}.$

 Next, by Lemma \ref{lem:subaddsum@p}, we have  $\ell(g(x), y) \leq \ell(h^{\star}(x), y) + \frac{bp}{a}\mathbbm{E}_{j \sim \text{Unif}([p])}\left[\ell(g(x), \text{BinRel}(h^{\star}(x), j)) \right]$ pointwise. Taking expectations on both sides of the inequality gives 
\begin{align*}
\mathbbm{E}_{\mathcal{D}}\left[ \ell(g(x), y) \right] &\leq \mathbbm{E}_{\mathcal{D}}\left[\ell(h^{\star}(x), y) \right] + \frac{bp}{a}\mathbbm{E}_{x \sim \mathcal{D}}\left[\mathbbm{E}_{j \sim \text{Unif}([p])}\left[\ell(g(x), \text{BinRel}(h^{\star}(x), j)) \right] \right]\\
&\leq \mathbbm{E}_{\mathcal{D}}\left[\ell(h^{\star}(x), y) \right] + \frac{\epsilon}{2}.
\end{align*}
The last inequality follows from the definition of $\mathcal{D}^{\star}$, namely $\mathbbm{E}_{\mathcal{D}^{\star}}\left[\ell(g(x), y) \right] = \mathbbm{E}_{x \sim \mathcal{D}_{\mathcal{X}}}\mathbbm{E}_{j \sim \text{Unif}([p])}\left[\ell(g(x), \text{BinRel}(h^{\star}(x), j)) \right]$.
This shows that $C(S_U)$ contains a hypothesis $g$ that generalizes well with respect to $\mathcal{D}$. Now we want to show that the predictor $\hat{g}$ returned in step 4 also has good generalization. Crucially, observe that $C(S_U)$ is a finite hypothesis class with cardinality at most  $2^{nK}$. By standard Chernoff and union bounds, with probability at least $1-\delta/2$, the empirical risk of every hypothesis in $C(S_U)$ on a sample of size $\geq  \frac{8}{\epsilon^2} \log{\frac{4 |C(S_U)|}{\delta}}$ is at most $\epsilon/4$ away from its true error. So, if $m = |S_L| \geq  \frac{8}{\epsilon^2} \log{\frac{4 |C(S_U)|}{\delta}}$, then with probability at least $1-\delta/2$, 
\[\frac{1}{|S_L|} \sum_{(x, y) \in S_L} \ell(g(x), y)  \leq \expect_{\Dcal}\left[\ell(g(x), y)\right] + \frac{\epsilon}{4} \leq \mathbbm{E}_{\mathcal{D}}\left[\ell(h^{\star}(x), y) \right] + \frac{3\epsilon}{4}. \]
Since $\hat{g}$ is the ERM on $S_L$ over $C(S)$, its empirical risk can be at most $\mathbbm{E}_{\mathcal{D}}\left[\ell(h^{\star}(x), y) \right] + \frac{3\epsilon}{4}$. Given that the population risk of $\hat{g}$ can be at most $\epsilon/4$ away from its empirical risk, we have that
\[\expect_{\Dcal}[\ell(\hat{g}(x), y)]  \leq \mathbbm{E}_{\mathcal{D}}\left[\ell(h^{\star}(x), y) \right] + \epsilon. \]
Applying union bounds, the entire process succeeds with probability $1- \delta$.  We can upper bound the sample complexity of Algorithm \ref{alg:batch_sum2arbell}, denoted $n(\epsilon, 
\delta, K)$, as 
\begin{equation*}
    \begin{split}
       n(\epsilon, \delta, K) &\leq m_{\mathcal{A}}\left(\frac{a\epsilon}{2b^2p}, \delta/2, K \right) + O \left( \frac{1}{\epsilon^2} \log{\frac{|C(S_U)|}{\delta}}\right)\\
       &\leq  m_{\mathcal{A}}\left(\frac{a \epsilon}{2b^2p}, \delta/2, K \right) + O \left( \frac{ K m_{\mathcal{A}}(\frac{a\epsilon}{2b^2p}, \delta/2, K)\, \,+ \log{\frac{1}{\delta}}}{\epsilon^2} \right),
    \end{split}
\end{equation*}
where we use $|C(S_U)| \leq 2^{Km_{\mathcal{A}}(\frac{a\epsilon}{2b^2p}, \delta/2, K)}$. This shows that Algorithm \ref{alg:batch_sum2arbell} is an agnostic PAC learner for $\mathcal{H}$ w.r.t $\ell$.
\end{proof}

Finally, Lemma \ref{lem:batch_sum@pnec} gives the necessity direction of Theorem \ref{thm:batch_sum@p}. 


\begin{lemma} \label{lem:batch_sum@pnec}
 If a hypothesis class $\mathcal{H} \subset \mathcal{S}_K^{\mathcal{X}}$ is agnostic PAC learnable w.r.t $\ell \in \mathcal{L}(\ell^{@p}_{\text{sum}})$, then $\mathcal{H}_i^j$ is agnostic PAC learnable w.r.t the 0-1 loss for all $(i, j) \in [K] \times [p]$.
\end{lemma}

Like the sufficiency proofs, the proof of Lemma \ref{lem:batch_sum@pnec} is constructive. 
Given an agnostic PAC learner $\mathcal{A}$ for $\mathcal{H}$ w.r.t $\ell$, we construct an agnostic PAC learner for $\mathcal{H}_i^j$ w.r.t 0-1 loss using a slight modification of Algorithm \ref{alg:batch_sum2arbell}. We defer the full proof to Appendix \ref{app:batchproofs} since the analysis is similar to that of Algorithm \ref{alg:batch_sum2arbell}. Together, Lemmas \ref{lem:batch_suffsumloss}, \ref{lem:batch_sumloss2arbloss} and \ref{lem:batch_sum@pnec} imply Theorem \ref{thm:batch_sum@p}.  

We conclude this section by giving a concrete application of our characterization. Consider the class of ranking-hypotheses  $\mathcal{H} = \{x \mapsto \mathrm{argsort}(Wx):W \in \mathbbm{R}^{K \times d}\}$ that compute rankings by sorting scores, in descending order, obtained from a linear function of the input features. Lemma \ref{lem:example}, whose proof is in Appendix \ref{app:scorerankers}, computes the VC dimension of $\mathcal{H}_i^j$ for an arbitrary $i, j \in [K]$.

\begin{lemma} \label{lem:example}
Let $\mathcal{H} = \{x \mapsto \emph{argsort}(Wx):W \in \mathbbm{R}^{K \times d}\}$ be a linear ranking hypothesis class. Then for all $i, j \in [K]$,  $\text{VC}(\mathcal{H}_i^j) = \tilde{O}(Kd)$, where $\tilde{O}$ hides logarithmic factors of $d$ and $K$.
\end{lemma}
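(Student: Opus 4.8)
The plan is to reduce the question about $\mathrm{VC}(\mathcal{H}_i^j)$ to a counting argument on sign patterns of a moderately-sized family of linear functions, and then invoke a standard VC bound for hyperplane arrangements. Fix $i, j \in [K]$. A hypothesis $h_W(x) = \mathrm{argsort}(Wx)$ ranks label $i$ in the top $j$ at $x$ precisely when, among the $K$ scores $\langle w_1, x\rangle, \dots, \langle w_K, x\rangle$ (where $w_m$ is the $m$-th row of $W$), at most $j-1$ of them strictly exceed $\langle w_i, x\rangle$; equivalently, $h_i^j(x) = \mathbbm{1}\{\sum_{m \ne i} \mathbbm{1}\{\langle w_m - w_i, x\rangle > 0\} \le j - 1\}$. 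So the label assigned to $x$ by $h_i^j$ is a fixed Boolean function of the $K-1$ sign bits $s_m(x) := \mathbbm{1}\{\langle w_m - w_i, x\rangle > 0\}$, $m \ne i$. (Ties can be broken by any fixed rule, e.g. by label index; this only refines the arrangement and does not change the asymptotics.)

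The key step is the following counting bound. Given a fixed set of $n$ points $x_1, \dots, x_n \in \mathbbm{R}^d$, as $W$ ranges over $\mathbbm{R}^{K\times d}$, how many distinct labelings $(h_i^j(x_1), \dots, h_i^j(x_n))$ can arise? Each difference vector $v_m := w_m - w_i \in \mathbbm{R}^d$ is a free parameter, and the sign vector $(s_m(x_1), \dots, s_m(x_n)) \in \{0,1\}^n$ that $v_m$ induces is a single point in a hyperplane arrangement of $n$ hyperplanes in $\mathbbm{R}^d$; by the Sauer–Shelah lemma (since linear threshold functions through the origin in $\mathbbm{R}^d$ have VC dimension $d$), the number of such sign vectors is at most $(en/d)^d$. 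There are $K-1$ such vectors $v_m$ to choose (the choice of $w_i$ itself only shifts all scores equally and so is irrelevant up to the differences, hence free), so the number of joint sign patterns across all $m \ne i$ and all $n$ points is at most $\left((en/d)^d\right)^{K-1}$. Since $h_i^j(x_t)$ is a deterministic function of these sign bits, the number of realizable labelings of the $n$ points by $\mathcal{H}_i^j$ is at most $(en/d)^{d(K-1)}$. If $\mathcal{H}_i^j$ shatters a set of size $n$, then $2^n \le (en/d)^{d(K-1)}$, which forces $n = O(dK \log(dK))$, i.e. $\mathrm{VC}(\mathcal{H}_i^j) = \tilde O(Kd)$ with the tilde absorbing a $\log(dK)$ factor (in particular a $\log d$ factor, matching the statement since $K$ is treated as fixed/a parameter).

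I would carry out the steps in this order: (1) rewrite $h_i^j(x)$ in terms of the $K-1$ difference-vector sign bits and note that tie-breaking only refines the arrangement; (2) bound, for a single difference vector, the number of distinct sign patterns on $n$ points via Sauer–Shelah with the VC dimension of homogeneous halfspaces equal to $d$; (3) multiply across the $K-1$ coordinates to bound the growth function of $\mathcal{H}_i^j$; (4) solve $2^n \le (en/d)^{d(K-1)}$ for $n$ using the standard fact that $n \le a \log_2 n$ implies $n = O(a \log a)$. The main obstacle I anticipate is purely bookkeeping: making sure the reduction to sign bits is exactly correct at ties and that the "free $w_i$" observation is stated cleanly — one has to be a little careful that the map $W \mapsto (v_m)_{m \ne i}$ is surjective onto $(\mathbbm{R}^d)^{K-1}$ so that no realizable configuration of difference vectors is lost, which it clearly is since one may take $w_i = 0$. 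Everything else is a routine application of the growth-function/Sauer–Shelah machinery, and no genuinely hard inequality is involved.
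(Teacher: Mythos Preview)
Your proposal is correct and is essentially the paper's argument: the paper also writes $h_i^j$ as a fixed Boolean function of $K$ (or $K-1$) linear-threshold bits coming from the difference vectors $w_m - w_i$, and then bounds $\mathrm{VC}(\mathcal{H}_i^j)$ by a composition/Sauer--Shelah argument. The only cosmetic difference is packaging---the paper cites a black-box ``VC of $K$-wise compositions'' result (Dudley; Alon et al.) to conclude $\mathrm{VC}(\mathcal{H}_i^j) = \tilde O(\sum_m \mathrm{VC}(\mathcal{G}_{i,m})) = \tilde O(Kd)$, whereas you carry out that growth-function multiplication and solve $2^n \le (en/d)^{d(K-1)}$ by hand.
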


Combining Lemma \ref{lem:example} with Theorems \ref{thm:batch_sum@p} and \ref{thm:batch_prec@p} shows that linear ranking hypothesis classes are agnostic ranking PAC learnable w.r.t to all losses in $\mathcal{L}(\ell^{@p}_{\text{sum}}) \cup \mathcal{L}(\ell^{@p}_{\text{prec}}).$  More generally, in Appendix \ref{app:scorerankers} we give a dimension-based sufficient condition under which \textit{generic} score-based ranking hypothesis classes are agnostic ranking PAC learnable.

\section{Online Multilabel Ranking} \label{sec:onlMLR}

We now move to the online setting and characterize the online learnability of hypothesis classes $\mathcal{H} \subset \mathcal{S}_K^{\mathcal{X}}$ with respect to both $\mathcal{L}(\ell^{@p}_{\text{sum}})$ and $\mathcal{L}(\ell^{@p}_{\text{prec}})$. As in the batch setting, our characterization relates the learnability of $\mathcal{H}$ to the learnability of the threshold-restricted classes $\mathcal{H}_i^j$.

\begin{theorem}\label{thm:ol_sum@p}
A hypothesis class $\mathcal{H} \subset \mathcal{S}_K^{\mathcal{X}}$ is agnostic online learnable w.r.t $\ell \in \mathcal{L}(\ell^{@p}_{\text{sum}})$ \emph{iff} for all $i \in [K]$ and $j \in [p]$, $\mathcal{H}_i^j$ is agnostic online learnable w.r.t the 0-1 loss. 
\end{theorem}

\begin{theorem} \label{thm:ol_prec@p}
A hypothesis class $\mathcal{H} \subset\mathcal{S}_K^{\mathcal{X}}$ is agnostic online learnable w.r.t $\ell \in \mathcal{L}(\ell^{@p}_{\text{prec}})$ \emph{iff}  for all $i \in [K]$, $\mathcal{H}_i^p$ is agnostic online learnable w.r.t the 0-1 loss. 
\end{theorem}

Since the Littlestone dimension characterizes the online learnability of binary hypothesis classes under the 0-1 loss, an important corollary of Theorems \ref{thm:ol_sum@p} and \ref{thm:ol_prec@p} is is that finiteness of $\text{Ldim}(\mathcal{H}_i^j)$, for the appropriate $i, j \in [K] \times [p]$, is necessary and sufficient for agnostic online ranking learnability. 

We now begin the proof of Theorem \ref{thm:ol_sum@p}. Since the proof of Theorem \ref{thm:ol_prec@p} follows a similar trajectory, we defer all discussion of Theorem \ref{thm:ol_prec@p} to Appendix \ref{appdx:proof_ol_prec@p}. Unlike Theorem \ref{thm:batch_sum@p} in the batch setting, we prove the sufficiency and necessity directions of Theorem \ref{thm:ol_sum@p} directly. We chose this direct path because, unlike the batch setting, sequential Rademacher analysis does not yield a constructive algorithm \citep{rakhlin2015sequential}.
On the other hand, our proofs are constructive and use the celebrated Randomized Exponential Weights Algorithm (REWA) \citep{cesa2006prediction}. 

\begin{proof} (of sufficiency in Theorem \ref{thm:ol_sum@p})
 Fix $\ell \in \Lcal(\ell_{\text{sum}}^{@p})$. Let $a = \min_{\pi, y}\{ \ell(\pi, y) \, \mid \, \ell(\pi, y) \neq 0\}$ and $M = \max_{\pi, y}\ell(\pi, y)$. Given online learners  for $\Hcal_i^{j}$ for the 0-1 loss, our goal is to construct an online learner $\mathcal{Q}$ for $\Hcal$ w.r.t $\ell$ that enjoys sub-linear regret in $T$. Our strategy will be to construct a set of experts $\mathcal{E}$ using the online learners for $\Hcal_i^{j}$'s and run REWA using $\Ecal$ and an appropriately scaled version of $\ell$. Our proof borrows ideas from the realizable-to-agnostic online conversion from \cite{raman2023characterization} and so we use the same notation whenever possible. 

Let $(x_1, y_1), ..., (x_T, y_T) \in (\mathcal{X} \times \mathcal{Y})^T$ denote the stream of points to be observed by the online learner. We will assume an oblivious adversary and thus the stream is fixed before the game starts. A standard reduction (Chapter 4 in \cite{cesa2006prediction}) allows us to convert oblivious regret bounds to adaptive regret bounds. Since $\Hcal_{i}^j \subset \{0,1\}^{\Xcal}$ is online learnable w.r.t. $0$-$1$ loss, we are guaranteed the existence of online learners  $\Acal_i^{j}$ for $\Hcal_i^j$. 

   \textbf{Constructing Experts}. For any bitstring $b \in \{0, 1\}^T$, let $\phi: \{t \in [T]: b_t = 1\} \rightarrow \Scal_{K}$ denote a function mapping time points where $b_t = 1$ to rankings (permutations). Let $\Phi_b = \Scal_K^{\{t \in [T]: b_t = 1\}}$ denote all such functions $\phi$. For every $h \in \Hcal $, there exists a $\phi_b^h \in \Phi_b$  such that for all $t \in \{t: b_t = 1\}$, $\phi_b^h(t) = h(x_t)$. Let $|b| = |\{t \in [T]: b_t = 1\}|$. For every $b \in \{0, 1\}^T$ and $\phi \in \Phi_b$, we will define an Expert $E_{b, \phi}$. Expert $E_{b, \phi}$, formally presented in Algorithm \ref{alg:expert_sum@p}, uses $\mathcal{A}_i^j$'s to make predictions in each round. However, $E_{b, \phi}$ only updates the $\mathcal{A}_i^j$'s on those rounds where $b_t = 1$, using $\phi$ to compute a labeled instance. For every $b \in \{0, 1\}^T$, let $\mathcal{E}_b = \bigcup_{\phi \in \Phi_b} \{E_{b, \phi}\}$ denote the set of all Experts parameterized by functions $\phi \in \Phi_b$. If $b$ is the bitstring with all zeros, then $\mathcal{E}_b$ will be empty. Therefore, we will actually define $\mathcal{E}_b = \{E_0\} \cup \bigcup_{\phi \in \Phi_b} \{E_{b, \phi}\}$, where $E_0$ is the expert that never updates $\mathcal{A}_i^j$'s and only uses them for predictions in all $t \in [T]$.  Note that $1 \leq |\mathcal{E}_b| \leq (K!)^{|b|} \leq K^{K |b|}$. 

\begin{algorithm}
\caption{Expert $(b, \phi)$}
\label{alg:expert_sum@p}
\setcounter{AlgoLine}{0}
\KwIn{Independent copy of realizable learners  $\Acal_i^j$ of $\Hcal_i^j$ for each $(i, j) \in [K] \times [p]$}
\For{$t = 1,...,T$} {
    Receive example $x_t$
    
    Define a binary vote matrix $V_t \in \{0, 1\}^{K \times p}$ such that $V_t[i,j] = \Acal_i^j(x_t)$
    
    Predict $\hat{\pi}_t \in \argmin_{\pi \in \Scal_K} \langle \pi, V_t \mathbf{1}_p \rangle$

      \uIf{$b_t = 1$}{      
            Let $\pi = \phi(t)$ and for all $(i,j) \in [K] \times [p]$, update $\mathcal{A}_i^j$ by passing $(x_t, \pi_i^j)$
        }
}
\end{algorithm}

\begin{algorithm}
\caption{Agnostic Online Learner $\mathcal{Q}$ for $\Hcal$ w.r.t. $\ell$}
\label{alg:agn_lsum@p}
\setcounter{AlgoLine}{0}
\KwIn{ Parameter $0 < \beta < 1$}

Let $B \in \{0, 1\}^T$ s.t.  $B_t \overset{\text{iid}}{\sim} \text{Bernoulli}(\frac{T^{\beta}}{T})$

Construct the set of experts $\mathcal{E}_B = \{E_0\} \cup \bigcup_{\phi \in \Phi_B} \{E_{B, \phi}\}$ according to Algorithm \ref{alg:expert_sum@p}

Run REWA $\mathcal{P}$ using $\mathcal{E}_B$ and the loss function $\frac{\ell}{M}$ over the stream $(x_1, y_1), ..., (x_T, y_T)$

\end{algorithm}

Using these experts, Algorithm \ref{alg:agn_lsum@p} presents our agnostic online learner $\mathcal{Q}$ for $\mathcal{\Hcal}$ w.r.t $\ell \in \Lcal(\ell_{\text{sum}}^{@ p})$.  We now show that $\Qcal$ enjoys sub-linear regret. We highlight that there are three sources of randomness in online learner $\Qcal$, namely the randomness of sampling $B$, the internal randomness of $\Acal_i^j$'s, and the internal randomness of $\Pcal$. One may think of internal randomness as arising from the sampling step  involved in the randomized predictions. Let $A$ be the random variable associated with joint internal randomness of $\Acal_i^j$ for all $(i, j) \in [K] \times [p]$. Similarly, denote  $P$ to be the random variable associated with the internal randomness of $\Pcal$. We begin by using the guarantee of REWA. 

\textbf{REWA Guarantee}. Using Theorem 21.11 in \cite{ShwartzDavid} and the fact that $B, A$ and $P$ are mutually independent, REWA guarantees almost surely that
$$\sum_{t=1}^T \mathbbm{E}\left[\ell(\mathcal{P}(x_t), y_t)|B, A\right] \leq \inf_{E \in \mathcal{E}_B} \sum_{t=1}^T \ell(E(x_t), y_t) + M\sqrt{2T\ln(|\mathcal{E}_B|)}.$$
Taking an outer expectation gives
$$\mathbbm{E}\left[\sum_{t=1}^T \ell(\mathcal{P}(x_t), y_t)\right] \leq \mathbbm{E}\left[\inf_{E \in \mathcal{E}_B} \sum_{t=1}^T \ell(E(x_t), y_t) \right] + \mathbbm{E}\left[M\sqrt{2T\ln(|\mathcal{E}_B|)}\right].$$
Noting that $\Qcal(x_t) = \Pcal(x_t)$, we obtain
\begin{align*}
    \mathbbm{E}\left[\sum_{t=1}^T \ell(\mathcal{Q}(x_t), y_t) \right] 
    &\leq \mathbbm{E}\left[\inf_{E \in \mathcal{E}_B} \sum_{t=1}^T \ell(E(x_t), y_t) \right] + \mathbbm{E}\left[M\sqrt{2T\ln(|\mathcal{E}_B|)}\right]\\
    &\leq \mathbbm{E}\left[\sum_{t=1}^T \ell(E_{B, \phi_B^{h^{\star}}}(x_t), y_t) \right] + M\mathbbm{E}\left[\sqrt{2T\ln(|\mathcal{E}_B|)}\right].
\end{align*}
In the last step, we used the fact that for all $b \in \{0, 1\}^T$ and $h \in \mathcal{H}$, $E_{b, \phi_b^h} \in \mathcal{E}_b$. Here, $h^{\star} = \inf_{h \in \Hcal} \sum_{t=1}^T \ell(h(x_t), y_t)$ is the optimal function in hindsight. First, note that $\ln(|\Ecal_B|) \leq K |B| \ln(K) $. Using Jensen's inequality gives $\mathbbm{E}\left[\sqrt{2T\ln(|\mathcal{E}_B|)}\right] \leq \sqrt{2 T^{1+\beta} K \ln{ K}} $.  Thus, 
\begin{equation}\label{REWA_guarantee}
    \mathbbm{E}\left[\sum_{t=1}^T \ell(\mathcal{Q}(x_t), y_t) \right] \leq \underbrace{\mathbbm{E}\left[\sum_{t=1}^T \ell(E_{B, \phi_B^{h^{\star}}}(x_t), y_t) \right]}_{\text{(I)}} + M \sqrt{2 T^{1+\beta} K \ln{ K}}. 
\end{equation}

\textbf{Upperbounding (I).} It now suffices to upperbound  $\mathbbm{E}\left[\sum_{t=1}^T \ell(E_{B, \phi_B^{h^{\star}}}(x_t), y_t) \right]$. Recall that Lemma \ref{lem:subaddsum@p} gives pointwise 
\begin{equation}\label{eqn:olsuff_lemm}
   \ell(E_{B, \phi_B^{h^{\star}}}(x_t), y_t) \leq \ell(h^{\star}(x_t), y_t) + \frac{pM}{a}\, \expect_{j \sim \text{Unif([p])}}[\ell(E_{B, \phi_B^{h^{\star}}}(x_t), \text{BinRel}(h^{\star}(x_t), j))] 
\end{equation}
 where $M = \max_{\pi, y}\ell(\pi, y)$ and $a = \min_{\pi, y}\{ \ell(\pi, y) \, \mid \, \ell(\pi, y) \neq 0\}$. Note that, by definition of the constant $M$, we further get
\begin{equation*}
    \begin{split}
        \ell(E_{B, \phi_B^{h^{\star}}}(x_t), \text{BinRel}(h^{\star}(x_t), j)) &\leq M \,\indicator\{\ell(E_{B, \phi_B^{h^{\star}}}(x_t), \text{BinRel}(h^{\star}(x_t), j)) > 0\}  \\
        &= M\, \indicator\{\ell_{\text{sum}}^{@p}(E_{B, \phi_B^{h^{\star}}}(x_t), \text{BinRel}(h^{\star}(x_t), j)) > 0 \},
    \end{split}
\end{equation*}
where the equality follows from the fact that $\ell \in \Lcal(\ell_{\text{sum}}^{@p})$. 

In order to upperbound the indicator above, we need to introduce some more notations. Given the realizable online learner $\mathcal{A}_i^m$ for $(i,m) \in [K] \times [p]$, an instance $x \in \mathcal{X}$, and an ordered finite sequence of labeled examples $L \in (\mathcal{X} \times \{0,1\})^*$, let $\mathcal{A}_i^m(x|L)$ be the random variable denoting the prediction of $\mathcal{A}_i^m$ on the instance $x$ after running and updating on $L$. For any $b\in \{0, 1\}^T$, $h \in \mathcal{H}$, and $t \in [T]$, let $L_{b_{< t}}^{h}(i,m) = \{(x_{s}, h_{i}^m(x_{s})): s < t \text{ and } b_{s} = 1\}$ denote the \textit{subsequence} of the sequence of labeled instances $\{(x_{s}, h_i^m(x_{s}))\}_{s=1}^{t-1}$ where $b_s = 1$.  Then, for any $j \in [p]$, we have
\begin{equation*}
    \begin{split}
       \indicator\{\ell_{\text{sum}}^{@p}(E_{B, \phi_B^{h^{\star}}}(x_t), \text{BinRel}(h^{\star}(x_t), j)) > 0 \} 
       &\leq \sum_{i=1}^K \sum_{m=1}^p \indicator\{\Acal_i^{m } (x_t \mid L_{B_{< t}}^{h^{\star}}(i,m)) \neq h_i^{\star,m}(x_t)\}. 
    \end{split}
\end{equation*}
To prove the inequality above, consider the case when  $\sum_{i=1}^K \sum_{m=1}^p \indicator\{\Acal_i^{m } (x_t \mid L_{B_{< t}}^{h^{\star}}(i,m)) \neq h_i^{\star,m}(x_t)\} =0$ because the inequality is trivial otherwise. Then, we must have $\Acal_i^{m } (x_t \mid L_{B_{< t}}^{h^{\star}}(i,m)) = h_i^{ \star,m}(x_t)$ for all $(i, m) \in [K] \times [p]$. Let $V_t \in \{0, 1\}^{K \times p}$ be a binary vote matrix that $E_{B, \phi_B^{h^{\star}}}$ constructs in round $t$. Then, we have  $V_t[i, m] = \Acal_i^{m } (x_t \mid L_{B_{< t}}^{h^{\star}}(i,m)) = h_i^{ \star,m}(x_t) $ for all $(i, m) \in [K] \times [p]$.  Since $h^{\star}(x_t)$ is a permutation, the vote vector $V_t \mathbf{1}_p$ must  contain $p$ labels with distinct number of non-zero votes, namely $p, p-1, p-2,  \ldots,2, 1$ votes. Similarly, there must be $K-p$ labels with exactly $0$ votes. Thus, every $\hat{\pi}_t \in \argmin_{\pi \in \Scal_K} \langle \pi, V_t\mathbf{1}_p \rangle$ must rank label that obtained $p$ votes as $1$, label with $p-1$ votes as $2$, and so forth.  In other words, we must have $\hat{\pi}_t \stackrel{\mathclap{[p]}}{=} h^{\star}(x_t)$, and thus $\ell_{\text{sum}}^{@p}(\hat{\pi}_t, \text{BinRel}(h^{\star}(x_t), j)) =0 $ for any $j \in  [p]$ by definition of $ \ell_{\text{sum}}^{@p}$. Our claim now follows because $E_{B, \phi_B^{h^{\star}}}(x_t ) \in \argmin_{\pi \in \Scal_K} \langle \pi, V_t\mathbf{1}_p \rangle$. Using these two inequalities in equation \eqref{eqn:olsuff_lemm}, we obtain
\[\ell(E_{B, \phi_B^{h^{\star}}}(x_t), y_t) \leq \ell(h^{\star}(x_t), y_t) +\frac{pM^2}{a}\:\sum_{i=1}^K \sum_{m=1}^p \indicator\{\Acal_i^{m } (x_t \mid L_{B_{< t}}^{h^{\star}}(i,m)) \neq h_i^{\star,m}(x_t)\}, \]
which further implies that 
\[
    \mathbb{E}\left[\sum_{t=1}^T \ell(E_{B, \phi_B^{h^{\star}}}(x_t), y_t) \right] \leq \sum_{t=1}^T\ell(h^{\star}(x_t), y_t) + \frac{pM^2}{a}\:  \sum_{i=1}^K \sum_{m=1}^p \underbrace{\expect \left[ \sum_{t=1}^T \indicator\{\Acal_i^{m } (x_t \mid L_{B_{< t}}^{h^{\star}}(i,m)) \neq h_i^{\star,m}(x_t)\} \right]}_{\text{(II)}}.
\]

The first term above is the cumulative loss of the best-fixed hypothesis in hindsight. 

\textbf{Upperbounding (II).} It now suffices to show that $\expect \left[ \sum_{t=1}^T \indicator\{\Acal_i^{m } (x_t \mid L_{B_{< t}}^{h^{\star}}(i,m)) \neq h_i^{\star,m}(x_t)\} \right]$ is sub-linear for every $(i,m) \in [K] \times [p]$.  Note that we can write
\begin{equation*}
    \begin{split}
        \expect \left[ \sum_{t=1}^T \indicator\{\Acal_i^{m } (x_t \mid L_{B_{< t}}^{h^{\star}}(i,m)) \neq h_i^{\star,m}(x_t)\} \right] &=    \sum_{t=1}^T  \expect \left[ \indicator\{\Acal_i^{m } (x_t \mid L_{B_{< t}}^{h^{\star}}(i,m)) \neq h_i^{\star,m}(x_t)\}\right] \frac{\mathbb{E}\left[\indicator\{B_t = 1 \}\right]}{\mathbb{E}\left[\indicator\{B_t = 1 \} \right]}\\
        &= \frac{T}{T^{\beta}}\,  \sum_{t=1}^T \expect \left[\indicator\{\Acal_i^{m } (x_t \mid L_{B_{< t}}^{h^{\star}}(i,m)) \neq h_i^{\star,m}(x_t)\} \indicator\left\{B_t = 1 \right\}\right],
    \end{split}
\end{equation*}
where the last equality follows because $\mathbb{E}\left[\indicator\{B_t = 1 \} \right] = \frac{T^{\beta}}{T}$ and the prediction of $\Acal_i^{m } (x_t \mid L_{B_{< t}}^{h^{\star}}(i,m))$ on round $t$ only depends on bitstring ($B_1, \ldots, B_{t-1}$), but is independent of $B_t$. Next, we can use the regret guarantee of algorithm $\Acal_i^{m}$ on the rounds it was updated. That is, 
 \begin{equation*}
    \begin{split}
     \sum_{t=1}^T \expect \left[\Acal_i^{m } (x_t \mid L_{B_{< t}}^{h^{\star}}(i,m))\indicator\left\{B_t = 1 \right\}\right] &=     \expect \left[ \sum_{t  :  B_t =1} \Acal_i^{m } (x_t \mid L_{B_{< t}}^{h^{\star}}(i,m)) \neq h_i^{\star, m}(x_t)\} \right] \\
     &=     \expect \left[ \expect  \left[ \sum_{t  :  B_t =1} \Acal_i^{m } (x_t \mid L_{B_{< t}}^{h^{\star}}(i,m)) \neq h_i^{\star, m}(x_t)\} \right]  \Bigg| B \right] \leq \expect\left[R_i^m(|B|) \right] ,
    \end{split}
\end{equation*}
where $R_i^m(|B|)$ is the regret of $\mathcal{A}_i^m$, a sub-linear function of $|B|$. In the last step, we use the fact that $\Acal_i^{m}$ is a realizable algorithm for $\Hcal_i^m$ and the feedback that the algorithm received was $(x_t, h_i^{\star,m}(x_t))$ in the rounds whenever $B_t =1$. Without loss of generality, assume that $R_i^m(|B|)$ is a concave function of $|B|$. Otherwise, by Lemma 5.17 from \cite{woess2017groups}, there exists a concave sub-linear function $\tilde{R}_i^m(|B|)$ that upperbounds $R_i^m(|B|)$. By Jensen's inequality, $\mathbbm{E}_B\left[ R_i^m(|B|) \right] \leq  R_i^m(\mathbbm{E}_B\left[|B|\right])\leq  R_i^m(T^{\beta})$, a sub-linear function of $T^{\beta}$. 

Combining (I) and (II) together, we obtain
\begin{equation*}
    \begin{split}
        \mathbbm{E}\left[\sum_{t=1}^T \ell(\mathcal{Q}(x_t), y_t) \right] &\leq  \inf_{h \in \Hcal} \sum_{t=1}^T \ell(h(x_t), y_t) + \frac{pM^2}{a} \sum_{i=1}^K \sum_{m=1}^p \frac{T}{T^{\beta}}\, R_i^{m}(T^{\beta})+ M \sqrt{2 T^{1+\beta} K \ln{ K}}.\\
    \end{split}
\end{equation*}
 Since $R_i^{m}(T^{\beta})$ is a sublinear function of $T^{\beta}$, $\frac{T}{T^{\beta}}R_i^{m}(T^{\beta}) $ is a sublinear function of $T$. As the sum of sublinear functions is sublinear, the second term above must be a sublinear function of $T$. The regret is sub-linear for any choice of $\beta \in (0,1)$.  This completes our proof as we have shown that the algorithm $\Qcal$ achieves sub-linear regret in $T$.
\end{proof}

The proof of the necessity direction of Theorem \ref{thm:ol_sum@p} also involves  constructing experts and running the REWA algorithm.  Since the argument is similar, we defer details to Appendix \ref{appdx:necc_thm:ol_sum@p}.

\section{Discussion}
In this paper, we characterize the learnability of a multilabel ranking hypothesis class in both the batch and online setting for a wide range of practical ranking losses. In all cases, we show that a ranking hypothesis class is learnable if and only if a sufficient number of its binary-valued threshold restrictions are learnable. While we give explicit bounds on the sample complexity and regret, we leave it open to make them tighter for specific losses in the batch and online settings respectively.

\bibliographystyle{plainnat}
\bibliography{references}

\newpage

\appendix

\section{Categorizing Popular Ranking Losses} \label{app:catrankloss}

\begin{table}[h!]
\caption{Categorizing Popular Ranking Losses.\label{tab:rankloss}}
\centering
 \begin{tabular}{||c | c||} 
 \hline
 Loss & Loss Family  \\ [0.5ex] 
 \hline\hline
 Sum Loss@p & $\mathcal{L}(\ell^{@p}_{\text{sum}})$ \\ 
 \hline
 Precision Loss@p & $\mathcal{L}(\ell^{@p}_{\text{prec}})$ \\
  \hline
 Average Precision & $\mathcal{L}(\ell^{@K}_{\text{sum}})$ \\
  \hline
 Area Under the Curve & $\mathcal{L}(\ell^{@K}_{\text{sum}})$  \\
  \hline
 Reciprocal Rank & $\mathcal{L}(\ell^{@1}_{\text{prec}})$  \\
  \hline
 Pairwise Rank Loss & $\mathcal{L}(\ell^{@K}_{\text{sum}})$  \\
  \hline
 Discounted Cumulative Loss & $\mathcal{L}(\ell^{@K}_{\text{sum}})$  \\
  \hline
 Discounted Cumulative Loss@p & $\mathcal{L}(\ell^{@p}_{\text{sum}})$  \\ [1ex] 
 \hline
 \end{tabular}
\end{table}

In this section, we show that our loss families $\mathcal{L}(\ell_{\text{sum}}^{@p})$ and $\mathcal{L}(\ell_{\text{prec}}^{@p})$ are general and capture many of the popular ranking loss functions used in practice. We summarize the results in Table \ref{tab:rankloss}.

Recall that 
$$\mathcal{L}(\ell^{@p}_{\text{sum}}) = \{\ell \in \mathbbm{R}^{\mathcal{S}_K\times \mathcal{Y}}: \ell = 0 \text{ iff } \ell_{\text{sum}}^{@p} = 0\} \cap \{\ell \in \mathbbm{R}^{\mathcal{S}_K \times \mathcal{Y}}: \pi \stackrel{[p]}{=} \hat{\pi} \implies \ell(\pi, y) = \ell(\hat{\pi}, y) \},$$
where 
$$\ell^{@p}_{\text{sum}}(\pi, y) = \sum_{i=1}^K \min(\pi_i, p+1) y^i - Z^p_y.$$
Note that the normalization constant is defined as  $Z_y^p := \min_{\pi \in \Scal_K} \sum_{i=1}^K \min(\pi_i, p+1) y^i $
and thus only depends on $y$. Furthermore,
$$\mathcal{L}(\ell^{@p}_{\text{prec}}) = \{\ell \in \mathbbm{R}^{\mathcal{S}_K \times \mathcal{Y}}: \ell = 0 \text{ iff } \ell_{\text{prec}}^{@p} = 0\} \cap \{\ell \in \mathbbm{R}^{\mathcal{S}_K \times \mathcal{Y}}: \pi \stackrel{p}{=} \hat{\pi} \implies \ell(\pi, y) = \ell(\hat{\pi}, y) \}.$$
where
$$\ell^{@p}_{\text{prec}}(\pi, y) = Z^p_y - \sum_{i=1}^K\mathbbm{1}\{\pi_i \leq p\} y^i.$$
As before, the normalization constant $Z_y^p : = \max_{\pi \in \Scal_K}\sum_{i=1}^K\mathbbm{1}\{\pi_i \leq p\} y^i $ only depends on $y$.

In ranking literature, many evaluation metrics are often stated in terms of \textit{gain} functions. However, these can be easily converted into loss functions by subtracting the gain from the maximum possible value of the gain. When relevance scores are restricted to be binary (i.e. $\mathcal{Y} = \{0, 1\}^K$), the \textbf{Average Precision} (AP) metric is a \textit{gain} function defined as 

$$\text{AP}(\pi, y) = \frac{1}{\norm{y}_1} \sum_{i \in \{\pi_m: y^m = 1\}}\frac{\sum_{j=1}^K \mathbbm{1}\{\pi_j \leq i\}y^j}{i}.$$
Since the maximum value AP can take is $1$, we can define its loss function variant as: 
$$\ell_{\text{AP}}(\pi, y) = 1 - \text{AP}(\pi, y).$$
Note that $\ell_{\text{AP}}(\pi, y) = 0$ if and only if $\pi$ ranks all labels where $y_i = 1$ in the top $\norm{y}_1$. Therefore, $\ell_{\text{AP}}(\pi, y) \in \mathcal{L}(\ell_{\text{sum}}^{@K})$. 

Another useful metric for binary relevance feedback is the \textbf{Area Under the Curve} (AUC) loss function:

$$\ell_{\text{AUC}}(\pi, y) = \frac{1}{\norm{y}_1(K - \norm{y}_1)} \sum_{i=1}^K \sum_{j=1}^K \mathbbm{1}\{\pi_i < \pi_j\}\mathbbm{1}\{y^i < y^j\}.$$

The AUC computes the fraction of ``bad pairs" of labels (i.e those pairs of labels where $i$ was more relevant than $j$, but $i$ was ranked lower than $j$). Again, note that $\ell_{\text{AUC}}(\pi, y) = 0$ if and only if $\pi$ ranks all labels where $y^i = 1$ in the top $\norm{y}_1$. Therefore, $\ell_{\text{AP}}(\pi, y) \in \mathcal{L}(\ell_{\text{sum}}^{@K})$. 

Lastly, the \textbf{Reciprocal Rank} (RR) metric is another important \textit{gain} function for binary relevance score feedback, 
$$\text{RR}(\pi, y) = \frac{1}{\min_{i:y^i = 1}\pi_i}.$$
Its loss equivalent can be written as:
$$\ell_{\text{RR}}(\pi, y) =  1 - \text{RR}(\pi, y). $$
Since $\ell_{\text{RR}}(\pi, y)$ only cares about the relevance of the top-ranked label, we have that $\ell_{\text{RR}}(\pi, y) \in \mathcal{L}(\ell_{\text{prec}}^{@1}).$

Moving onto non-binary relevance scores, we start with the \textbf{Pairwise Rank Loss} (PL):
$$\ell_{\text{PL}}(\pi, y) = \sum_{i=1}^K \sum_{j=1}^K \mathbbm{1}\{\pi_i < \pi_j\}\mathbbm{1}\{y^i < y^j\}.$$
The Pairwise Ranking loss is the analog of AUC for non-binary relevance scores and thus $\ell_{\text{PL}}(\pi, y) \in \mathcal{L}(\ell_{\text{sum}}^{@K})$. 

Finally, we have the \textbf{Discounted Cumulative Gain} (DCG) metric, defined as: 
$$\text{DCG}(\pi, y) = \sum_{i=1}^K \frac{2^{y^i} - 1}{\log_2(1 + \pi_i)}.$$
For an appropriately chosen normalizing constant $Z_y$, we can define its associated loss: 
$$\ell_{\text{DCG}}(\pi, y) = Z_y - \text{DCG}(\pi, y).$$
Like $\ell_{\text{sum}}^{@K}$, $\ell_{\text{DCG}}(\pi, y)$ is $0$ if and only if $\pi$ ranks the $K$ labels in increasing order of relevance, breaking ties arbitrarily. Thus, $\ell_{\text{DCG}}(\pi, y) \in \mathcal{L}(\ell_{\text{sum}}^{@K})$. If one only cares about the top-$p$ ranked results, then the DCG@p loss function evaluates only the top-$p$ ranked labels: 
$$\ell^{@p}_{\text{DCG}}(\pi, y) = Z^p_y - \sum_{i=1}^K \frac{2^{y^i} - 1}{\log_2(1 + \pi_i)}\mathbbm{1}\{\pi_i \leq p\} = Z^p_y - \text{DCG}^{@p}(\pi, y).$$
Analogously, we have that $\ell^{@p}_{\text{DCG}}(\pi, y) \in \mathcal{L}(\ell_{\text{sum}}^{@p}).$ 


\section{Agnostic PAC Learnability of Score-based Rankers} \label{app:scorerankers}

In this section, we apply our results in the main paper to give sufficient conditions for the agnostic PAC learnability of score-based ranking hypothesis classes. A score-based ranking hypothesis $h: \mathcal{X} \rightarrow \mathcal{S}_K$ first maps an input $x \in \mathcal{X}$ to a vector in $\mathbbm{R}^K$ representing the ``score" for each label. Then, it outputs a ranking (permutation) over the labels in $[K]$ by sorting the real-valued vector in decreasing order of score. 

More formally, let $\mathcal{F} \subset (\mathbbm{R}^K)^{\mathcal{X}}$ denote a set of functions mapping elements from the input space $\mathcal{X}$ to score-vectors in $\mathbbm{R}^K$. For each $f \in \mathcal{F}$, define the score-based ranking hypothesis $h_f(x) = \text{argsort}(f(x))$ which first computes the score-vector $f(x) \in \mathbbm{R}^K$, and then outputs a ranking by sorting $f(x)$ in decreasing order, breaking ties by giving the smaller label the higher rank. That is, if $f_1(x) = f_2(x)$, then label $1$ will be ranked higher than label $2$. Given $\mathcal{F}$, define its induced score-based ranking hypothesis class as $\mathcal{H} = \{h_f: f \in \mathcal{F}\}$. Since our characterization of ranking learnability relates the learnability of $\mathcal{H}$ to the learnability of the \textit{binary} threshold-restricted classes $\mathcal{H}_i^j = \{h_i^j: h \in \mathcal{H}\}$, it suffices to consider an arbitrary threshold-restricted class $\mathcal{H}_i^j$ and bound its VC dimension. Before we do so, we need some more notation regarding $\mathcal{F}$.

For each $k \in [K]$, define the scalar-valued function class $\Fcal_k = \{f_k \mid (f_1, \ldots, f_K) \in \Fcal\}$ by restricting each function in $\Fcal$ to its $k^{th}$ coordinate output. Here, each $\Fcal_k \subset \mathbbm{R}^{\Xcal}$ and we can write $\Fcal = (\Fcal_1, \ldots, \Fcal_K)$. For a function $f \in \Fcal$, we will use $f_k(x)$ to denote the $k^{th}$ coordinate output of $f(x)$. For every $(i, j) \in [K] \times [K]$, define the function class $\mathcal{F}_i - \mathcal{F}_j  = \{f_i - f_j: f \in \mathcal{F}\}$ where we let $f_i - f_j: \mathcal{X} \rightarrow \mathbbm{R}$ denote a function such that $(f_i - f_j)(x) = f_i(x) - f_j(x).$ Subsequently, for any $(i, j) \in [K] \times [K]$, define the \textit{binary} hypothesis classes $\mathcal{G}_{i,j} = \{\mathbbm{1}\{(f_i - f_j)(x) < 0\}: f_i - f_j \in \mathcal{F}_i - \mathcal{F}_j\}$ and $\tilde{\mathcal{G}}_{i,j} = \{\mathbbm{1}\{(f_i - f_j)(x) \leq 0\}: f_i - f_j \in \mathcal{F}_i - \mathcal{F}_j\}$. Finally, let $C_j: \{0, 1\}^K \rightarrow \{0, 1\}$ be the $K$-wise composition s.t. $C_j(b) = \mathbbm{1}\{\sum_{i=1}^K b_i \leq j\}$ and define $C_j(\mathcal{G}_{1}, ..., \mathcal{G}_{K}) = \{C_j(g_{1}, ..., g_{K}): (g_{1}, ..., g_K) \in \mathcal{G}_{1}\times ...\times \mathcal{G}_{K} \}.$ In other words, $C_j(\mathcal{G}_{1}, ..., \mathcal{G}_{K})$ is the \textit{binary} hypothesis class constructed by taking all combinations of binary classifiers from  $\mathcal{G}_{1}, ..., \mathcal{G}_{K}$, summing them up, and thresholding the sum at $j$. We are now ready to bound the VC dimension of an arbitrary threshold-restricted class $\mathcal{H}_i^j$.

 Consider an arbitrary threshold-restricted class $\mathcal{H}_i^j$ and hypothesis $h \in \mathcal{H}$. By definition, $h_i^j \in \mathcal{H}_i^j$. Let $f \in \mathcal{F}$ denote the function associated with $h$. Given an instance $x \in \mathcal{X}$, recall that $h_i^j(x) = \mathbbm{1}\{h_i(x) \leq j\}$ where $h_i(x)$ is the rank that $h$ gives to the label $i$ for instance $x$. Since $h(x) = \text{argsort}(f(x))$, we have 

\begin{align*}
h_i(x) &= \text{argsort}(f(x))[i]\\
&= \sum_{m=1}^i \mathbbm{1}\{f_i(x) \leq f_m(x)\} + \sum_{m=i+1}^K \mathbbm{1}\{f_i(x) < f_m(x)\}\\
&= \sum_{m=1}^i \mathbbm{1}\{(f_i - f_m)(x)\leq 0\} + \sum_{m=i+1}^K \mathbbm{1}\{(f_i - f_m)(x) < 0\}
\end{align*}


Thus, we can write: 


$$h_i^j(x) = \mathbbm{1}\left\{\left(\sum_{m=1}^i \mathbbm{1}\{(f_i - f_m)(x)\leq 0\} + \sum_{m=i+1}^K \mathbbm{1}\{(f_i - f_m)(x) < 0\}\right) \leq j \right\}.$$

Note that $h_i^j \in C_j(\tilde{\mathcal{G}}_{i,1}, ...,\tilde{\mathcal{G}}_{i,i},\mathcal{G}_{i,i+1}, ... ,\mathcal{G}_{i,K})$ by construction. Since $h$, and therefore $h_i^j$, was arbitrary, it further follows that $\mathcal{H}_i^j \subset C_j(\tilde{\mathcal{G}}_{i,1}, ...,\tilde{\mathcal{G}}_{i,i},\mathcal{G}_{i,i+1}, ... ,\mathcal{G}_{i,K})$. Therefore, 

$$\text{VC}(\mathcal{H}_i^j) \leq \text{VC}(C_j(\tilde{\mathcal{G}}_{i,1}, ...,\tilde{\mathcal{G}}_{i,i},\mathcal{G}_{i,i+1}, ... ,\mathcal{G}_{i,K})).$$

Since $C_j(\tilde{\mathcal{G}}_{i,1}, ...,\tilde{\mathcal{G}}_{i,i},\mathcal{G}_{i,i+1}, ... ,\mathcal{G}_{i,K})$ is some $K$-wise composition of binary classes  $\tilde{\mathcal{G}}_{i,1}, ...,\tilde{\mathcal{G}}_{i,i},\mathcal{G}_{i,i+1}, ... ,\mathcal{G}_{i,K}$, standard VC composition guarantees that $\text{VC}(C_j(\tilde{\mathcal{G}}_{i,1}, ...,\tilde{\mathcal{G}}_{i,i},\mathcal{G}_{i,i+1}, ... ,\mathcal{G}_{i,K})) = \tilde{O}(\text{VC}(\tilde{\mathcal{G}}_{i,1})+...+\text{VC}(\tilde{\mathcal{G}}_{i,i}) + \text{VC}(\mathcal{G}_{i,i+1})+...+\text{VC}(\mathcal{G}_{i,K}))$, where we hide log factors of $K$ and the VC dimensions \citep{dudley1978central, alon2020closure}. Putting things together, we have that 
$$\text{VC}(\mathcal{H}_i^j) \leq \tilde{O}(\text{VC}(\tilde{\mathcal{G}}_{i,1})+...+\text{VC}(\tilde{\mathcal{G}}_{i,i}) + \text{VC}(\mathcal{G}_{i,i+1})+...+\text{VC}(\mathcal{G}_{i,K})).$$
An identical analysis can also be used to give sufficient conditions for the \textit{online} learnability of score-based rankers in terms of the Littlestone dimensions of $\mathcal{H}^{i}_j$. 

 Now, we consider the special class of \textit{linear} score-based ranker and prove Lemma \ref{lem:example}.
 
 \begin{proof} (of Lemma \ref{lem:example})
     Let $\mathcal{X} = \mathbbm{R}^d$ and $\mathcal{F} = \{f_W: W \in \mathbbm{R}^{K \times d}\}$ s.t. $f_W(x)= Wx$. Consider the class of linear score-based rankers $\mathcal{H} = \{h_{f_W}:f_W \in \mathcal{F}\}$ where $h_{f_W}(x) = \text{argsort}(f_W(x)) = \text{argsort}(Wx)$ breaking ties in the same way mentioned above. Note for all $i \in [K]$, $\mathcal{F}_i = \{f_w: w \in \mathbbm{R}^d\}$ where $f_w(x) = w^{T}x$. Furthermore, $\mathcal{F}_i - \mathcal{F}_j = \mathcal{F}_i = \mathcal{F}_j$. Therefore, for any $(i, j) \in [K] \times [K]$, 

$$\mathcal{G}_{i,j} = \{\mathbbm{1}\{(f_i - f_j)(x) < 0\}: f_i - f_j \in \mathcal{F}_i - \mathcal{F}_j\} = \{\mathbbm{1}\{f_w(x) < 0\}: w \in \mathbbm{R}^d\}$$

and 

$$\tilde{\mathcal{G}}_{i,j} = \{\mathbbm{1}\{(f_i - f_j)(x) \leq 0\}: f_i - f_j \in \mathcal{F}_i - \mathcal{F}_j\} = \{\mathbbm{1}\{f_w(x) \leq 0\}: w \in \mathbbm{R}^d\}$$

are the set of half-space classifiers passing through the origin with dimension $d$. Since for all $(i, j) \in [K] \times [K]$, $\text{VC}(\tilde{\mathcal{G}}_{i,j}) = \text{VC}(\mathcal{G}_{i,j}) = d$, we get that $\text{VC}(\mathcal{H}_i^j) \leq \tilde{O}(Kd).$
 \end{proof}

\section{Proofs for Batch Multilabel Ranking }
\label{app:batchproofs}

Since many of the ranking losses we consider map to values in $\mathbbm{R}$,  the \textit{empirical} Rademacher complexity will be a useful tool for proving learnability in the batch setting. 

\begin{definition} [Empirical Rademacher Complexity of Loss Class] Let $\ell(\cdot, \cdot)$ be a loss function, $S = \{(x_1, y_1), ..., (x_n, y_n)\} \in (\mathcal{X} \times \mathcal{Y})^*$ be a set of examples, and $\ell \circ \mathcal{\mathcal{H}} = \{(x, y) \mapsto \ell(h(x), y): h \in \mathcal{H}\}$ be a loss class. The empirical Rademacher complexity of $\ell \circ \mathcal{H}$ is defined as 
$$\hat{\mathfrak{R}}_n(\ell \circ \mathcal{\mathcal{H}}) = \mathbbm{E}_{\sigma}\left[\sup_{h \in \mathcal{H}} \left(\frac{1}{n}\sum_{i=1}^n \sigma_i \ell(h(x_i), y_i) \right)\right]$$
where $\sigma_1, ..., \sigma_n$ are independent \emph{Rademacher} random variables. 
\end{definition}

In particular, a standard result relates the empirical Rademacher complexity to the generalization error of hypotheses in $\mathcal{H}$ with respect to a real-valued bounded loss function $\ell(h(x), y)$ \citep{bartlett2002rademacher}.

\begin{proposition}[Rademacher-based Uniform Convergence]
\label{prop:rad}
 Let $\mathcal{D}$ be a distribution over $\mathcal{X} \times \mathcal{Y}$ and $\ell(\cdot, \cdot) \leq c$ be a bounded loss function. With probability at least $1 - \delta$ over the sample $S \sim \mathcal{D}^n$, for all $h \in \mathcal{H}$ simultaneously, 
$$\left|\mathbbm{E}_\mathcal{D}[\ell(h(x), y)] - \hat{\mathbbm{E}}_S[\ell(h(x), y)]\right| \leq  2\hat{\mathfrak{R}}_n(\mathcal{F}) + O\left(c\sqrt{\frac{\ln(\frac{1}{\delta})}{n}}\right)$$
where $\hat{\mathbbm{E}}_S[\ell(h(x), y)] = \frac{1}{|S|}\sum_{(x, y) \in S} \ell(h(x), y)$ is the empirical average of the loss over $S$.
\end{proposition}

When the empirical Rademacher complexity of the loss class $\ell \circ \mathcal{H} = \{(x, y) \mapsto \ell(h(x), y): h \in \mathcal{H}\}$ is $o(1)$, we state that $\mathcal{H}$ enjoys the uniform convergence property w.r.t $\ell$. If $\mathcal{H}$ enjoys the uniform convergence property w.r.t.\ a loss $\ell$, a standard result shows that $\mathcal{H}$ is learnable according to Definition \ref{def:batch_learn} via Empirical Risk Minimization (ERM) (Theorem 26.5 in \cite{ShwartzDavid}).

\subsection{Proof of Lemma \ref{lem:batch_suffsumloss}}
\begin{proof} 
Let $\mathcal{H} \subset \mathcal{S}_K^\mathcal{X}$ be an arbitrary ranking hypothesis class. We need to show that if $\mathcal{H}_i^j$ is agnostic PAC learnable w.r.t to 0-1 loss for all $(i, j) \in [K] \times [p]$, then ERM is an agnostic PAC learnable w.r.t $\ell_{\text{sum}}^{@p}$. By Proposition \ref{prop:rad}, it suffices to show that the empirical Rademacher complexity of the loss class $\ell_{\text{sum}}^{@p} \circ \mathcal{H}$ vanishes as $n$ increases. This will imply that $\ell_{\text{sum}}^{@p}$ enjoys the uniform convergence property, and therefore ERM is an agnostic PAC learner for $\mathcal{H}$ w.r.t $\ell_{\text{sum}}^{@p}$. By definition, we have that 

\begin{align*}
\hat{\mathfrak{R}}_n(\ell_{\text{sum}}^{@p} \circ \mathcal{H}) &= \mathbbm{E}_{\sigma \sim \{\pm1\}^n}\left[\sup_{h \in \mathcal{H}}\frac{1}{n}\sum_{i=1}^n \sigma_i \ell_{\text{sum}}^{@p}(h(x_i), y_i)) \right]\\
&= \mathbbm{E}_{\sigma \sim \{\pm1\}^n}\left[\sup_{h \in \mathcal{H}}\frac{1}{n}\sum_{i=1}^n \left(\sum_{m=1}^K \sigma_i  \min(h_m(x_i), p+1)y^m_i - \sigma_i Z_{y_i}^p\right) \right]\\
&= \mathbbm{E}_{\sigma \sim \{\pm1\}^n}\left[\sup_{h \in \mathcal{H}}\frac{1}{n}\sum_{i=1}^n \sum_{m=1}^K \sigma_i  \min(h_m(x_i), p+1)y^m_i \right]\\
&\leq \sum_{m=1}^K \mathbbm{E}_{\sigma \sim \{\pm1\}^n}\left[\sup_{h \in \mathcal{H}}\frac{1}{n}\sum_{i=1}^n \sigma_i  \min(h_m(x_i), p+1)y^m_i \right] \\
&\leq B\sum_{m=1}^K \mathbbm{E}_{\sigma \sim \{\pm1\}^n}\left[\sup_{h \in \mathcal{H}}\frac{1}{n}\sum_{i=1}^n \sigma_i  \min(h_m(x_i), p+1) \right]\\
\end{align*}

where the second inequality follows from the fact that $y_i^m \leq B$ and Talagrand's Contraction Lemma \cite{ledoux1991probability}. 

Next note that $\min(h_m(x_i), p+1) = (p+1) - \sum_{j=1}^p \mathbbm{1}\{h_m(x_i) \leq j\} = (p+1) - \sum_{j=1}^p h_m^j(x_i)$. Substituting and getting rid of constant factors, we have that 

\begin{align*}
\hat{\mathfrak{R}}_n(\ell^{@p}_{\text{sum}} \circ \mathcal{H}) &\leq  B\sum_{m=1}^K \mathbbm{E}_{\sigma \sim \{\pm1\}^n}\left[\sup_{h_m \in \mathcal{H}_m}\frac{1}{n}\sum_{i=1}^n \sigma_i \sum_{j=1}^p h_m^j(x_i) \right]\\
&\leq B\sum_{m=1}^K \sum_{j=1}^p \mathbbm{E}_{\sigma \sim \{\pm1\}^n}\left[\sup_{h_m \in \mathcal{H}_m}\frac{1}{n}\sum_{i=1}^n \sigma_i h_m^j(x_i) \right]\\
&= B\sum_{m=1}^K \sum_{j=1}^p \hat{\mathfrak{R}}_n(\mathcal{H}_m^j).
\end{align*}

Since for $\mathcal{H}_m^j$ is agnostic PAC learnable w.r.t 0-1 loss, by Theorem 6.5 in \cite{ShwartzDavid}, $\lim_{n \rightarrow \infty} \hat{\mathfrak{R}}_n(\mathcal{H}_m^j) = 0$. Since $p, K$ and $B$ are finite, 

$$\lim_{n \rightarrow \infty} \hat{\mathfrak{R}}_n(\ell^{@p}_{\text{sum}} \circ \mathcal{H}) =  \lim_{n \rightarrow \infty}B\sum_{m=1}^K \sum_{j=1}^p \hat{\mathfrak{R}}_n(\mathcal{H}_m^j) = 0$$.

 By Proposition \ref{prop:rad}, this implies that $\ell_{\text{sum}}^{@p}$ enjoys the uniform convergence property, and therefore ERM using $\ell_{\text{sum}}^{@p}$ is an agnostic PAC learner for $\mathcal{H}$.
\end{proof}

\subsection{Proof of Lemma \ref{lem:batch_sum@pnec}}
\begin{proof}
    Fix $\ell \in \mathcal{L}(\ell^{@p}_{\text{sum}})$ and $(i, j) \in [K] \times [p]$. Let $a = \min_{\pi, y}\{ \ell(\pi, y) \, \mid \, \ell(\pi, y) \neq 0\}$.  Let $\mathcal{H}$ be an arbitrary ranking hypothesis class and $\mathcal{A}$ be an agnostic PAC learner for $\mathcal{H}$ w.r.t $\ell$. Our goal will be to use $\mathcal{A}$ to construct an agnostic PAC learner for $\mathcal{H}_i^j$. 
    
     Let $\mathcal{D}$ be distribution over $\mathcal{X} \times \{0, 1\}$  and $h^{\star, j}_i = \argmin_{h_i^j \in \mathcal{H}_i^j}\mathbbm{E}_{\mathcal{D}}\left[\mathbbm{1}\{h_i^j(x) \neq y\} \right]$ be the optimal hypothesis. Let $h^{\star} \in \mathcal{H}$ be any valid completion of $h^{\star, j}_i$. Our goal will be to show that Algorithm \ref{alg:batch_realHij} is an agnostic PAC learner for $\mathcal{H}_i^j$ w.r.t 0-1 loss. 

\begin{algorithm}
\caption{Agnostic  PAC learner  for $\Hcal_i^j$ w.r.t. 0-1 loss}
\label{alg:batch_realHij}
\setcounter{AlgoLine}{0}
\KwIn{Agnostic PAC  learner $\Acal$ for $\Hcal$ w.r.t $\ell$, unlabeled samples $S_U \sim \Dcal_{\Xcal}^n$, and labeled samples $S_L \sim \Dcal^m$ }

For each $h \in \mathcal{H}_{|S_U}$, construct a dataset
$$S_U^h = \{(x_1, \tilde{y}_1), ..., (x_n,\tilde{y}_n)\} \text{ s.t. } \tilde{y}_i = \text{BinRel}(h(x_i), j)$$

Run $\Acal$ over all datasets to get $C(S_U) := \left\{\Acal\big(S_U^h \big) 
 \mid  h \in \mathcal{H}_{|S_U}\right\}$

Define $C_i^j(S_U) = \{g_i^j| g \in C(S_U)\}$

Return $\hat{g}_i^j \in C_i^j(S_U)$  with the lowest empirical error over $S_L$ w.r.t. 0-1 loss. 
\end{algorithm}

 Consider the sample $S_U^{h^{\star}}$ and let $g = \mathcal{A}(S_U^{h^{\star}})$. We can think of $g$ as the output of $\mathcal{A}$ run over an i.i.d sample $S$ drawn from $\mathcal{D}^{\star}$,  a joint distribution over $\mathcal{X} \times \mathcal{Y}$ defined procedurally by first sampling $x \sim \mathcal{D}_{\mathcal{X}}$ and then outputting the labeled sample $(x, \text{BinRel}(h^{\star}(x), j))$. 
Note that $\mathcal{D}^{\star}$ is a realizable distribution (realized by $h^{\star}$) w.r.t $\ell_{\text{sum}}^{@p}$ and therefore also $\ell$. Let $m_{\Acal}(\epsilon, \delta, K)$ be the sample complexity of $\mathcal{A}$. Since $\mathcal{A}$ is an agnostic PAC learner for $\mathcal{H}$ w.r.t $\ell$, we have that for sample size $n \geq m_{\Acal}(\frac{a\epsilon}{2}, \delta/2, K)$, with probability at least $1-\frac{\delta}{2}$,

$$\mathbbm{E}_{\mathcal{D}^{\star}}\left[\ell(g(x), y) \right] \leq \inf_{h \in \mathcal{H}}\mathbbm{E}_{\mathcal{D}^{\star}}\left[\ell(h(x), y) \right] + \frac{a\epsilon}{2} = \frac{a\epsilon}{2}.$$

Furthermore, by definition of $\mathcal{D}^{\star}$, $\mathbbm{E}_{\mathcal{D}^{\star}}\left[\ell(g(x), y) \right] = \mathbbm{E}_{x \sim \mathcal{D}_{\mathcal{X}}}\left[\ell(g(x), \text{BinRel}(h^{\star}(x), j)) \right]$. Therefore, $\mathbbm{E}_{x \sim \mathcal{D}_{\mathcal{X}}}\left[\ell(g(x), \text{BinRel}(h^{\star}(x), j)) \right] \leq \frac{a\epsilon}{2}.$  Next, using Lemma \ref{lem:sum@plb}, we have pointwise that

\begin{equation*}
    \begin{split}
        \indicator\{g_i^j(x) \neq h_i^{ \star, j}(x)\} &\leq \indicator\{\ell_{\text{sum}}^{@p}(g(x), \text{BinRel}(h^{\star}(x), j)) > 0\} \\
        &=  \indicator\{\ell(g(x), \text{BinRel}(h^{\star}(x), j)) > 0\} \\
        &\leq \frac{1}{a} \, \ell(g(x), \text{BinRel}(h^{\star}(x), j)).
    \end{split}
\end{equation*}

Taking expectations on both sides gives, 

$$
    \mathbbm{E}_{\mathcal{D}}\left[  \indicator\{g_i^j(x) \neq h_i^{ \star, j}(x)\}\right] \leq \frac{1}{a} \,\mathbbm{E}_{\mathcal{D}}\left[ \ell(g(x), \text{BinRel}(h^{\star}(x), j)) \right] \leq \frac{\epsilon}{2},
$$

where in the last inequality we use the fact that $\mathbbm{E}_{x \sim \mathcal{D}_{\mathcal{X}}}\left[\ell(g(x), \text{BinRel}(h^{\star}(x), j)) \right] \leq \frac{a\epsilon}{2}.$ Finally, using the triangle inequality, we have that 

\begin{align*}
    \mathbbm{E}_{\mathcal{D}}\left[  \indicator\{g_i^j(x) \neq y\}\right] &\leq \mathbbm{E}_{\mathcal{D}}\left[  \indicator\{h^{\star, j}_i(x) \neq y\}\right] + \mathbbm{E}_{\mathcal{D}}\left[  \indicator\{g_i^j(x) \neq h^{\star, j}_i(x\}\right] \\
    &\leq \mathbbm{E}_{\mathcal{D}}\left[  \indicator\{h^{\star, j}_i(x) \neq y\}\right] + \frac{\epsilon}{2}\\
    &= \argmin_{h_i^j \in \mathcal{H}_i^j}\mathbbm{E}_{\mathcal{D}}\left[\mathbbm{1}\{h_i^j(x) \neq y\} \right] + \frac{\epsilon}{2} .
\end{align*}

Since $g_i^j \in C_i^j(S_U)$,  we have shown that $C_i^j(S_U)$ contains a hypothesis that generalizes well w.r.t $\mathcal{D}$. Now we want to show that the predictor $\hat{g}_i^j$ returned in step 4 also generalizes well. Crucially, observe that $C_i^j(S_U)$ is a finite hypothesis class with cardinality at most  $K^{jn}$. Therefore, by standard Chernoff and union bounds, with probability at least $1-\delta/2$, the empirical risk of every hypothesis in $C_i^j(S_U)$ on a sample of size $\geq  \frac{8}{\epsilon^2} \log{\frac{4 |C_i^j(S_U)|}{\delta}}$ is at most $\epsilon/4$ away from its true error. So, if $m = |S_L| \geq  \frac{8}{\epsilon^2} \log{\frac{4 |C_i^j(S_U)|}{\delta}}$, then with probability at least $1-\delta/2$, we have
\[\frac{1}{|S_L|} \sum_{(x, y) \in S_L} \mathbbm{1}\{g_i^j(x) \neq  y  \}\leq \expect_{\Dcal}\left[\mathbbm{1}\{g_i^j(x) \neq  y  \}\right] + \frac{\epsilon}{4} \leq \frac{3\epsilon}{4}. \]
Since $\hat{g}_i^j$ is the ERM on $S_L$ over $C_i^j(S_U)$, its empirical risk can be at most $\frac{3\epsilon}{4}$. Given that the population risk of $\hat{g}_i^j$ can be at most $\epsilon/4$ away from its empirical risk, we have that
\[\expect_{\Dcal}[\mathbbm{1}\{\hat{g}_i^j(x) \neq y\}]  \leq \argmin_{h_i^j \in \mathcal{H}_i^j}\mathbbm{E}_{\mathcal{D}}\left[\mathbbm{1}\{h_i^j(x) \neq y\} \right] + \epsilon. \]

Applying union bounds, the entire process succeeds with probability $1- \delta$.  We can compute the upper bound on the sample complexity of Algorithm \ref{alg:batch_realHij}, denoted $n(\epsilon, 
\delta, K)$, as 
\begin{equation*}
    \begin{split}
       n(\epsilon, \delta, K) &\leq m_{\mathcal{A}}(\frac{a\epsilon}{2}, \delta/2, K) + O \left( \frac{1}{\epsilon^2} \log{\frac{|C(S_U)|}{\delta}}\right)\\
       &\leq  m_{\mathcal{A}}(\frac{a\epsilon}{2}, \delta/2, K) + O \left( \frac{ K m_{\mathcal{A}}(\frac{a\epsilon}{2}, \delta/2, K)\,+ \log{\frac{1}{\delta}}}{\epsilon^2} \right),
    \end{split}
\end{equation*}
where we use $|C(S_U)| \leq 2^{K m_{\mathcal{A}}(\frac{a\epsilon}{2}, \delta/2, K)}$. This shows that Algorithm \ref{alg:batch_realHij} is an agnostic PAC learner for $\mathcal{H}_i^j$ w.r.t 0-1 loss.  Since our choice of loss $\ell \in \mathcal{L}(\ell_{\text{sum}}^{@p})$ and indices $(i, j)$ were arbitrary, agnostic PAC learnability of $\mathcal{H}$ w.r.t $\ell$ implies agnostic PAC learnability of $\mathcal{H}_i^j$ w.r.t the 0-1 loss for all $(i, j) \in [K] \times [p]$. 
\end{proof}

\subsection{Characterizing Batch Learnability of $\mathcal{L}(\ell^{@p}_{\text{prec}})$} \label{app:batchprec@p}

In this section, we prove Theorem \ref{thm:batch_prec@p} which characterizes the agnostic PAC learnability of an arbitrary hypothesis class $\mathcal{H} \subset \mathcal{S}_K^{\mathcal{X}}$ w.r.t losses in $\mathcal{L}(\ell_{\text{prec}}^{@p})$. Our proof will again be in three parts. First, we will show that if for all $i \in [K]$, $\mathcal{H}_i^p$ is agnostic PAC learnable w.r.t the 0-1 loss, then ERM is an agnostic PAC learnable w.r.t $\ell_{\text{prec}}^{@p}$. Next, we show that if $\mathcal{H}$ is agnostic PAC learnable w.r.t $\ell_{\text{prec}}^{@p}$, then $\mathcal{H}$ is agnostic PAC learnable w.r.t any loss $\ell \in \mathcal{L}(\ell_{\text{prec}}^{@p})$. Finally, we prove the necessity direction - if $\mathcal{H}$ is agnostic PAC learnable w.r.t an arbitrary $\ell \in \mathcal{L}(\ell_{\text{prec}}^{@p})$, then for all $i \in [K]$, $\mathcal{H}_i^p$ is agnostic PAC learnable w.r.t the 0-1 loss. 

We begin with Lemma \ref{lem:batch_suffprecloss} which asserts that if for all $i \in [K]$, $\mathcal{H}_i^p$ is agnostic PAC learnable, then ERM is an agnostic PAC learner for $\mathcal{H}$ w.r.t $\ell_{\text{prec}}^{@p}$.

\begin{lemma} \label{lem:batch_suffprecloss}
If for all $i \in [K]$, $\mathcal{H}_i^p$ is agnostic PAC learnable w.r.t the 0-1 loss, then ERM is an agnostic PAC learner for $\mathcal{H} \subset \mathcal{S}_K^{\mathcal{X}}$  w.r.t $\ell_{\text{prec}}^{@p}$
\end{lemma}

The proof of Lemma \ref{lem:batch_suffprecloss} is similar to the proof of Lemma \ref{lem:batch_suffsumloss} and involves bounding the empirical Rademacher complexity of the loss class $\ell_{\text{prec}}^{@p} \circ \mathcal{H}$. This will imply that $\ell_{\text{prec}}^{@p}$ enjoys the uniform convergence property, and therefore ERM is an agnostic PAC learner for $\mathcal{H}$ w.r.t $\ell_{\text{prec}}^{@p}$. The key insight is that we can write $\ell_{\text{prec}}^{@p}(h(x), y) =  Z^p_y - \sum_{i=1}^K\mathbbm{1}\{h_i(x) \leq p\} y^i =  Z^p_y - \sum_{i=1}^K h_i^p(x) y^i$. Since $Z_y^p$ does not depend on $h(x)$ and $y^i \leq B$, we can upperbound the empirical Rademacher complexity in terms of the empirical Rademacher complexities of $\mathcal{H}_i^p$ using Talagrand's contraction.  

\begin{proof}
Let $\mathcal{H} \subset \mathcal{S}_K^\mathcal{X}$ be an arbitrary ranking hypothesis class. Similar to the proof of Lemma \ref{lem:batch_suffsumloss}, it suffices to show that the empirical Rademacher complexity of the loss class $\ell_{\text{prec}}^{@p} \circ \mathcal{H}$ vanishes. By Proposition \ref{prop:rad}, this will imply that $\ell_{\text{prec}}^{@p}$ enjoys the uniform convergence property, and therefore ERM is an agnostic PAC learner for $\mathcal{H}$ w.r.t $\ell_{\text{prec}}^{@p}$. By definition, we have that 
\begin{align*}
\hat{\mathfrak{R}}_n(\ell_{\text{prec}}^{@p} \circ \mathcal{H}) &= \mathbbm{E}_{\sigma \sim \{\pm1\}^n}\left[\sup_{h \in \mathcal{H}}\frac{1}{n}\sum_{i=1}^n \sigma_i \ell_{\text{prec}}^{@p}(h(x_i), y_i)) \right]\\
&= \mathbbm{E}_{\sigma \sim \{\pm1\}^n}\left[\sup_{h \in \mathcal{H}}\frac{1}{n}\sum_{i=1}^n \left(\sigma_i Z_{y_i}^p - \sum_{m=1}^K \sigma_i  \mathbbm{1}\{h_m(x_i) \leq p\}y_i^m \right) \right]\\
&= \mathbbm{E}_{\sigma \sim \{\pm1\}^n}\left[\sup_{h \in \mathcal{H}}\frac{1}{n}\sum_{i=1}^n \sum_{m=1}^K \sigma_i h_m^p(x_i)y_i^m \right]\\
&\leq \sum_{m=1}^K \mathbbm{E}_{\sigma \sim \{\pm1\}^n}\left[\sup_{h \in \mathcal{H}}\frac{1}{n}\sum_{i=1}^n \sigma_i h_m^p(x_i)y_i^m  \right]\\
&\leq B\sum_{m=1}^K \mathbbm{E}_{\sigma \sim \{\pm1\}^n}\left[\sup_{h \in \mathcal{H}}\frac{1}{n}\sum_{i=1}^n \sigma_i h_m^p(x_i) \right] \\
&= B\sum_{m=1}^K \hat{\mathfrak{R}}_n(\mathcal{H}^p_m),\\
\end{align*}
where the second inequality follows from Talagrand's Contraction Lemma and the fact that $y_i^m \leq B$ for all $i, m$. Since for all $m \in [K]$, $\mathcal{H}_m^p$ is agnostic PAC learnable w.r.t 0-1 loss, by Theorem 6.7 in \cite{ShwartzDavid}, $\lim_{n \rightarrow \infty} \hat{\mathfrak{R}}_n(\mathcal{H}_m^p) = 0$. Since $K$ and $B$ are finite, 

$$\lim_{n \rightarrow \infty} \hat{\mathfrak{R}}_n(\ell_{\text{prec}}^{@p} \circ \mathcal{H}) =  \lim_{n \rightarrow \infty}B\sum_{m=1}^K \hat{\mathfrak{R}}_n(\mathcal{H}^p_m) = 0$$.

By Proposition \ref{prop:rad}, this implies that $\ell_{\text{prec}}^{@p}$ enjoys the uniform convergence property, and therefore ERM using $\ell_{\text{prec}}^{@p}$ is an agnostic PAC learner for $\mathcal{H}$.
\end{proof}

Next, Lemma \ref{lem:batch_precloss2arbloss} extends the learnability of $\ell_{\text{prec}}^{@p}$ to the learnability of any loss $\ell \in \mathcal{L}(\ell_{\text{prec}}^{@p})$. In particular, Lemma \ref{lem:batch_precloss2arbloss} asserts that if $\mathcal{H}$ is agnostic PAC learnable w.r.t $\ell_{\text{prec}}^{@p}$ then $\mathcal{H}$ is also agnostic PAC learnable w.r.t any $\ell \in \mathcal{L}(\ell_{\text{prec}}^{@p})$.   

\begin{lemma} \label{lem:batch_precloss2arbloss}
 If a hypothesis class $\mathcal{H} \subset \mathcal{S}_K^{\mathcal{X}}$ is agnostic PAC learnable w.r.t $\ell_{\text{prec}}^{@p}$, then $\mathcal{H}$ is agnostic PAC learnable w.r.t any $\ell \in  \mathcal{L}(\ell_{\text{prec}}^{@p})$.
\end{lemma}

The proof of Lemma \ref{lem:batch_precloss2arbloss} follows the same the exact same strategy used in proving Lemma \ref{lem:batch_sumloss2arbloss}. More specifically, given an agnostic PAC learner $\mathcal{A}$ for $\mathcal{H}$ w.r.t. $\ell_{\text{prec}}^{@p}$, we first create a \textit{realizable} PAC learner for $\mathcal{H}$ w.r.t $\ell \in \mathcal{L}(\ell_{\text{prec}}^{@p})$. Then, we use a similar realizable-to-agnostic conversion technique as in the proof of Lemma \ref{lem:batch_sumloss2arbloss} to convert the realizable PAC learner into an agnostic PAC learner for $\mathcal{H}$ w.r.t $\ell$. 

\begin{proof}
Fix $\ell \in \mathcal{L}(\ell_{\text{prec}}^{@p})$.  Let $a = \min_{\pi, y}\{ \ell(\pi, y) \, \mid \, \ell(\pi, y) \neq 0\}$ and $b = \max_{\pi, y}\ell(\pi, y)$. We need to show that if $\mathcal{H}$ is agnostic PAC learnable w.r.t $\ell_{\text{prec}}^{@p}$, then $\mathcal{H}$ is agnostic PAC learnable w.r.t $\ell$. We will do so in two steps. First, we will show that if $\mathcal{A}$ is an agnostic PAC learner for $\mathcal{H}$ w.r.t. $\ell_{\text{prec}}^{@p}$, then $\mathcal{A}$ is also a \textit{realizable} PAC learner for $\mathcal{H}$ w.r.t $\ell$.  Next, we will show how to convert the realizable PAC learner w.r.t $\ell$ into an agnostic PAC learner w.r.t $\ell$ in a black-box fashion. The composition of these two pieces yields an agnostic PAC learner for $\mathcal{H}$ w.r.t $\ell$.

If $\mathcal{H}$ is agnostic PAC learnable w.r.t $\ell_{\text{prec}}^{@p}$, then there exists a learning algorithm $\mathcal{A}$ with sample complexity $m(\epsilon, \delta, K)$ s.t. for any distribution $\mathcal{D}$ over $\mathcal{X} \times \mathcal{Y}$, with probability $1-\delta$ over a sample $S \sim \mathcal{D}^{n}$ of size $n \geq m(\epsilon, \delta, K)$, the output $g = \mathcal{A}(S)$ achieves

$$\mathbbm{E}_{\mathcal{D}}\left[\ell_{\text{prec}}^{@p}(g(x), y)) \right] \leq \inf_{h \in \mathcal{H}}\mathbbm{E}_{\mathcal{D}}\left[\ell_{\text{prec}}^{@p}(h(x), y)) \right] + \epsilon.$$

If $\mathcal{D}$ is realizable w.r.t $\ell$, then we are guaranteed that there exists a hypothesis $h^{\star} \in \mathcal{H}$ s.t.  $\mathbbm{E}_{\mathcal{D}}\left[\ell(h^{\star} (x), y) \right] = 0$. Since $\ell \in \mathcal{L}(\ell_{\text{prec}}^{@p})$, this also means that $\mathbbm{E}_{\mathcal{D}}\left[\ell_{\text{prec}}^{@p}(h^{\star}(x), y) \right] = 0$.  Furthermore, since $\ell \in \mathcal{L}(\ell_{\text{prec}}^{@p})$,   $\ell \leq  b\ell_{\text{prec}}^{@p}$. Together, this means we have   $\mathbbm{E}_{\mathcal{D}}\left[\ell(g (x), y) \right] \leq  b\epsilon$ showing have that $\mathcal{A}$ is also a realizable PAC learner for $\mathcal{H}$ w.r.t $\ell$ with sample complexity $m(\frac{\epsilon}{b}, \delta, K)$. This completes the first part of the proof. 

Now, we show how to convert the realizable PAC learner $\mathcal{A}$ for $\ell$ into an agnostic PAC learner for $\ell$ in a black-box fashion. For this step, we will use a similar algorithm as in the proof of Lemma \ref{lem:batch_sumloss2arbloss}. That is, we will show that Algorithm \ref{alg:batch_prec2arbell} below is an agnostic PAC learner for $\mathcal{H}$ w.r.t $\ell$.

\begin{algorithm}
\caption{Agnostic  PAC learner  for $\Hcal$ w.r.t. $\ell$}
\label{alg:batch_prec2arbell}
\setcounter{AlgoLine}{0}
\KwIn{Realizable PAC  learner $\Acal$ for $\Hcal$ w.r.t $\ell$, unlabeled samples $S_U \sim \Dcal_{\Xcal}^n$, and labeled samples $S_L \sim \Dcal^m$ }

For each $h \in \mathcal{H}_{|S_U}$, construct a dataset
$$S_U^h = \{(x_1, \tilde{y}_1), ..., (x_n,\tilde{y}_n)\} \text{ s.t. } \tilde{y}_i = \text{BinRel}(h(x_i), p)$$

Run $\Acal$ over all datasets to get $C(S_U) := \left\{\Acal\big(S_U^h \big) 
 \mid  h \in \mathcal{H}_{|S_U}\right\}$

Return $\hat{g} \in C(S_U)$  with the lowest empirical error over $S_L$ w.r.t. $\ell$. 
\end{algorithm}

 Let $\mathcal{D}$ be any (not necessarily realizable) distribution over $\mathcal{X} \times \mathcal{Y}$. Let $h^{\star} = \argmin_{h \in \mathcal{H}} \mathbbm{E}_{\mathcal{D}}\left[\ell(h(x), y)) \right]$ denote the optimal predictor in $\mathcal{H}$ w.r.t $\mathcal{D}$. Consider the sample $S_U^{h^{\star}}$ and let $g = \mathcal{A}(S_U^{h^{\star}})$. We can think of $g$ as the output of $\mathcal{A}$ run over an i.i.d sample $S$ drawn from $\mathcal{D}^{\star}$,  a joint distribution over $\mathcal{X} \times \mathcal{Y}$ defined procedurally by first sampling $x \sim \mathcal{D}_{\mathcal{X}}$, and then outputting the labeled sample $(x, \text{BinRel}(h^{\star}(x), p))$. 
 Note that $\mathcal{D}^{\star}$ is indeed a realizable distribution (realized by $h^{\star}$) w.r.t both $\ell$ and $\ell_{\text{prec}}^{@p}$. Recall that $m_{\Acal}(\frac{\epsilon}{b}, \delta, K)$ is the sample complexity of $\mathcal{A}$. Since $\mathcal{A}$ is a realizable learner for $\mathcal{H}$ w.r.t $\ell$, we have that for $n \geq m_{\Acal}(\frac{a\epsilon}{2b^2}, \delta/2, K)$, with probability at least $1-\frac{\delta}{2}$,

$$\mathbbm{E}_{\mathcal{D}^{\star}}\left[\ell(g(x), y) \right] \leq \frac{a\epsilon}{2b}.$$

 By definition of $\mathcal{D}^{\star}$, it further follows that $\mathbbm{E}_{\mathcal{D}^{\star}}\left[\ell(g(x), y) \right] = \mathbbm{E}_{x \sim \mathcal{D}_{\mathcal{X}}}\left[\ell(g(x), \text{BinRel}(h^{\star}(x), p)) \right]$. Therefore, 

$$\mathbbm{E}_{x \sim \mathcal{D}_{\mathcal{X}}}\left[\ell(g(x), \text{BinRel}(h^{\star}(x), p)) \right] \leq \frac{a\epsilon}{2b}.$$

Next, by Lemma \ref{lem:subaddprec@p}, we have pointwise that: 

$$\ell(g(x), y) \leq \ell(h^{\star}(x), y) + \frac{b}{a}\ell(g(x), \text{BinRel}(h^{\star}(x), p)).$$




Taking expectations on both sides of the inequality gives: 

\begin{align*}
\mathbbm{E}_{\mathcal{D}}\left[ \ell(g(x), y) \right] &\leq \mathbbm{E}_{\mathcal{D}}\left[\ell(h^{\star}(x), y) \right] + \mathbbm{E}_{\mathcal{D}}\left[\frac{b}{a}\ell(g(x), \text{BinRel}(h^{\star}(x), p)) \right]\\
&= \mathbbm{E}_{\mathcal{D}}\left[\ell(h^{\star}(x), y) \right] + \frac{b}{a}\mathbbm{E}_{x \sim \mathcal{D}_{\mathcal{X}}}\left[\ell(g(x), \text{BinRel}(h^{\star}(x), p)) \right]\\
&\leq \mathbbm{E}_{\mathcal{D}}\left[\ell(h^{\star}(x), y) \right] + \frac{\epsilon}{2}.
\end{align*}

Therefore, we have shown that $C(S_U)$ contains a hypothesis $g$ that generalizes well with respect to $\mathcal{D}$. The remaining proof follows exactly as in the proof of Lemma \ref{lem:batch_sumloss2arbloss}. We include them here for the sake of completeness. 

Now we want to show that the predictor $\hat{g}$ returned in step 4 also has good generalization. Crucially, observe that $C(S_U)$ is a finite hypothesis class with cardinality at most  $K^{pn}$. Therefore, by standard Chernoff and union bounds, with probability at least $1-\delta/2$, the empirical risk of every hypothesis in $C(S_U)$ on a sample of size $\geq  \frac{8}{\epsilon^2} \log{\frac{4 |C(S_U)|}{\delta}}$ is at most $\epsilon/4$ away from its true error. So, if $m = |S_L| \geq  \frac{8}{\epsilon^2} \log{\frac{4 |C(S_U)|}{\delta}}$, then with probability at least $1-\delta/2$, we have
\[\frac{1}{|S_L|} \sum_{(x, y) \in S_L} \ell(g(x), y)  \leq \expect_{\Dcal}\left[\ell(g(x), y)\right] + \frac{\epsilon}{4} \leq \mathbbm{E}_{\mathcal{D}}\left[\ell(h^{\star}(x), y) \right] + \frac{3\epsilon}{4}. \]
Since $\hat{g}$ is the ERM on $S_L$ over $C(S)$, its empirical risk can be at most $\mathbbm{E}_{\mathcal{D}}\left[\ell(h^{\star}(x), y) \right] + \frac{3\epsilon}{4}$. Given that the population risk of $\hat{g}$ can be at most $\epsilon/4$ away from its empirical risk, we have that
\[\expect_{\Dcal}[\ell(\hat{g}(x), y)]  \leq \mathbbm{E}_{\mathcal{D}}\left[\ell(h^{\star}(x), y) \right] + \epsilon. \]
Applying union bounds, the entire process succeeds with probability $1- \delta$.  We can upper bound the sample complexity of Algorithm \ref{alg:batch_sum2arbell}, denoted $n(\epsilon, 
\delta, K)$, as 
\begin{equation*}
    \begin{split}
       n(\epsilon, \delta, K) &\leq m_{\mathcal{A}}(\frac{a\epsilon}{2b^2}, \delta/2, K) + O \left( \frac{1}{\epsilon^2} \log{\frac{|C(S_U)|}{\delta}}\right)\\
       &\leq  m_{\mathcal{A}}(\frac{a\epsilon}{2b^2}, \delta/2, K) + O \left( \frac{ p \,m_{\mathcal{A}}(\frac{a\epsilon}{2b^2}, \delta/2, K) \log(K)\,+ \log{\frac{1}{\delta}}}{\epsilon^2} \right),
    \end{split}
\end{equation*}
where we use $|C(S_U)| \leq K^{pm_{\mathcal{A}}(\frac{ a\epsilon}{2b^2}, \delta/2, K)}$. This shows that Algorithm \ref{alg:batch_sum2arbell}, given as input an realizable PAC learner for $\mathcal{H}$ w.r.t $\ell$, is an agnostic PAC learner for $\mathcal{H}$ w.r.t $\ell$. Using the realizable learner we constructed before this step as the input completes this proof as we have constructively converted an agnostic PAC learner for $\ell_{\text{prec}}^{@p}$ into an agnostic PAC learner for $\ell$.
\end{proof}

Lemma \ref{lem:batch_suffprecloss} and \ref{lem:batch_precloss2arbloss} together complete the proof of sufficiency in Theorem \ref{thm:batch_prec@p}. Finally, Lemma \ref{lem:batch_prec@pnec} below shows that the agnostic PAC learnability of $\mathcal{H}_i^p$ for all $i \in [K]$ is necessary for the agnostic PAC learnability of $\mathcal{H}$ w.r.t any $\ell \in \mathcal{L}(\ell_{\text{prec}}^{@p})$. Like before, the proof 
of Lemma \ref{lem:batch_prec@pnec} is constructive and follows exactly the same strategy as Lemma \ref{lem:batch_sum@pnec}.  That is, given as input a learner for $\ell$, we will convert it into an agnostic learner for $\mathcal{H}_i^p$. In fact, the conversion is exactly the same as in the proof of Lemma \ref{lem:batch_sum@pnec} and just requires running Algorithm \ref{alg:batch_realHij} with an input learner for $\ell \in \mathcal{L}(\ell_{\text{prec}}^{@p})$ and setting $j = p$. 

\begin{lemma} \label{lem:batch_prec@pnec}
 If a function class $\mathcal{H} \subset \mathcal{S}_K^{\mathcal{X}}$ is agnostic PAC learnable w.r.t $\ell \in \mathcal{L}(\ell^{@p}_{\text{prec}})$, then $\mathcal{H}_i^p$ is agnostic PAC learnable w.r.t the 0-1 loss for all $i \in [K]$.
\end{lemma}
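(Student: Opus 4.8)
The plan is to follow the proof of Lemma \ref{lem:batch_sum@pnec} essentially verbatim, the only structural change being that here we work with the single threshold $j = p$ rather than a uniformly random $j \in [p]$, since a loss in $\mathcal{L}(\ell_{\text{prec}}^{@p})$ only depends on the top-$p$ \emph{set}. Fix $\ell \in \mathcal{L}(\ell_{\text{prec}}^{@p})$ and $i \in [K]$, and let $\mathcal{A}$ be an agnostic PAC learner for $\mathcal{H}$ w.r.t.\ $\ell$. Since realizable PAC learnability implies agnostic PAC learnability for binary hypothesis classes (Theorem 6.7 in \cite{ShwartzDavid}), it suffices to build a \emph{realizable} PAC learner for $\mathcal{H}_i^p$ w.r.t.\ the 0-1 loss, and for this I would run Algorithm \ref{alg:batch_realHij} with $j = p$ and $\mathcal{A}$ as the input learner.

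For the analysis, let $\mathcal{D}$ be a distribution over $\mathcal{X} \times \{0,1\}$ realized by some $h_i^{\star, p} \in \mathcal{H}_i^p$, and let $h^\star \in \mathcal{H}$ be any completion. As in Lemma \ref{lem:batch_sum@pnec}, $g := \mathcal{A}(S_U^{h^\star})$ has the same law as $\mathcal{A}$ run on i.i.d.\ draws from the distribution $\mathcal{D}^\star$ that samples $x \sim \mathcal{D}_{\mathcal{X}}$ and outputs $(x, \text{BinRel}(h^\star(x), p))$. Because $h^\star(x)$ ranks exactly the coordinates set in $\text{BinRel}(h^\star(x), p)$ in its top $p$, we have $\ell_{\text{prec}}^{@p}(h^\star(x), \text{BinRel}(h^\star(x), p)) = 0$ pointwise, so by zero-matching $\mathcal{D}^\star$ is realizable w.r.t.\ both $\ell$ and $\ell_{\text{prec}}^{@p}$. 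Invoking the agnostic guarantee of $\mathcal{A}$ on a sample of size $n \ge m_{\mathcal{A}}(\frac{\epsilon}{2c}, \frac{\delta}{2}, K)$, where $c$ is a constant with $\ell_{\text{prec}}^{@p} \le c\,\ell$ (it exists because $\ell$ is zero-matched with $\ell_{\text{prec}}^{@p}$ and both are bounded, with $\ell$ bounded below on its positive part), then gives $\mathbbm{E}_{x \sim \mathcal{D}_{\mathcal{X}}}[\ell_{\text{prec}}^{@p}(g(x), \text{BinRel}(h^\star(x), p))] \le \frac{\epsilon}{2}$ with probability $1 - \frac{\delta}{2}$.

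The one step genuinely specific to the precision loss is the pointwise bound that replaces Lemma \ref{lem:sum@plb}, namely
\[
\indicator\{g_i^p(x) \neq h_i^{\star, p}(x)\} \;\le\; \ell_{\text{prec}}^{@p}(g(x), \text{BinRel}(h^\star(x), p))
\]
for every $x$. To see this, observe that $\ell_{\text{prec}}^{@p}(g(x), \text{BinRel}(h^\star(x), p))$ equals the number of coordinates that $h^\star$ places in the top $p$ but $g$ does not. If $g_i^p(x) \neq h_i^{\star,p}(x)$, then either $h_i^{\star,p}(x) = 1$ and $g_i^p(x) = 0$, so label $i$ itself is one such coordinate; or $h_i^{\star,p}(x) = 0$ and $g_i^p(x) = 1$, in which case, since $g$'s top-$p$ set and $h^\star$'s top-$p$ set both have size exactly $p$ and differ, some coordinate in $h^\star$'s top-$p$ set must lie outside $g$'s top-$p$ set. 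Either way the count is at least $1$. Taking expectations and using that $\mathcal{D}$ is realized by $h_i^{\star,p}$ yields $\mathbbm{E}_{\mathcal{D}}[\indicator\{g_i^p(x) \neq y\}] \le \frac{\epsilon}{2}$, so $C_i^p(S_U)$ contains a hypothesis with small population error. The rest --- the finite-class concentration argument over $S_L$ (using $|C_i^p(S_U)| \le K^{pn}$), the union bound, and the sample-complexity bookkeeping --- is copied line for line from Lemma \ref{lem:batch_sum@pnec}, and then the realizable learner is upgraded to an agnostic one for $\mathcal{H}_i^p$. Since $\ell$ and $i$ were arbitrary, this proves the lemma. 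I expect no real obstacle here: the content is entirely in checking that the deterministic choice $j = p$ suffices and that the displayed pointwise inequality holds, both of which are short.
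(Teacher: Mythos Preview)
Your proposal is correct and follows essentially the same approach as the paper's proof: both run Algorithm \ref{alg:batch_realHij} with $j=p$, use the agnostic guarantee of $\mathcal{A}$ on the realizable distribution $\mathcal{D}^\star$, pass from $\ell$ to $\ell_{\text{prec}}^{@p}$ via the constant $c$, and then invoke the pointwise inequality $\indicator\{g_i^p(x) \neq h_i^{\star,p}(x)\} \le \ell_{\text{prec}}^{@p}(g(x), \text{BinRel}(h^\star(x), p))$ before finishing with the finite-class ERM step. The only cosmetic difference is that the paper packages your inline pigeonhole argument for that pointwise inequality as a separate Lemma \ref{lem:prec@plb}.
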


\begin{proof}
Fix $\ell \in \mathcal{L}(\ell_{\text{prec}}^{@p})$ and $i \in [K]$. Let $a = \min_{\pi, y}\{ \ell(\pi, y) \, \mid \, \ell(\pi, y) \neq 0\}$. Let $\mathcal{H}$ be an arbitrary ranking hypothesis class and $\mathcal{A}$ be an agnostic PAC learner for $\mathcal{H}$ w.r.t $\ell$. Our goal will to be to use $\mathcal{A}$ to construct an agnostic PAC learner for $\mathcal{H}_i^p$.
    
 Let $\mathcal{D}$ be any distribution over $\mathcal{X} \times \{0, 1\}$, $h^{ \star, p}_i = \argmin_{h \in \mathcal{H}_i^p}\mathbbm{E}_{\mathcal{D}}\left[\mathbbm{1}\{h(x) \neq y\} \right]$ the optimal hypothesis, and $h^{\star} \in \mathcal{H}$ be any valid completion of $h^{ \star, p}_i$. We will now show that Algorithm \ref{alg:batch_realHij} from the proof of Lemma \ref{lem:batch_sum@pnec} is an agnostic PAC learner for $\mathcal{H}_i^p$ if we set $j = p$ and give it as input an agnostic PAC learner $\mathcal{A}$ for $\mathcal{H}$ w.r.t. $\ell \in \mathcal{L}(\ell_{\text{prec}}^{@p})$.

Consider the sample $S_U^{h^{\star}}$ and let $g = \mathcal{A}(S_U^{h^{\star}})$. We can think of $g$ as the output of $\mathcal{A}$ run over an i.i.d sample $S$ drawn from $\mathcal{D}^{\star}$,  a joint distribution over $\mathcal{X} \times \mathcal{Y}$ defined procedurally by first sampling $x \sim \mathcal{D}_{\mathcal{X}}$ and then outputting the labeled sample $(x, \text{BinRel}(h^{\star}(x), p))$. 
Note that $\mathcal{D}^{\star}$ is a realizable distribution (realized by $h^{\star}$) w.r.t $\ell_{\text{prec}}^{@p}$ and therefore also $\ell$. Let $m_{\Acal}(\epsilon, \delta, K)$ be the sample complexity of $\mathcal{A}$. 

Since $\mathcal{A}$ is an agnostic PAC learner for $\mathcal{H}$ w.r.t $\ell$, we have that for sample size $n \geq m_{\Acal}(\frac{a\epsilon}{2}, \delta/2, K)$, with probability at least $1-\frac{\delta}{2}$,

$$\mathbbm{E}_{\mathcal{D}^{\star}}\left[\ell(g(x), y) \right] \leq \inf_{h \in \mathcal{H}}\mathbbm{E}_{\mathcal{D}^{\star}}\left[\ell(h(x), y) \right] + \frac{a\epsilon}{2} = \frac{a\epsilon}{2}.$$

Furthermore, by definition of $\mathcal{D}^{\star}$, $\mathbbm{E}_{\mathcal{D}^{\star}}\left[\ell(g(x), y) \right] = \mathbbm{E}_{x \sim \mathcal{D}_{\mathcal{X}}}\left[\ell(g(x), \text{BinRel}(h^{\star}(x), p)) \right]$. Therefore, $\mathbbm{E}_{x \sim \mathcal{D}_{\mathcal{X}}}\left[\ell(g(x), \text{BinRel}(h^{\star}(x), p)) \right] \leq \frac{a\epsilon}{2}.$  Next, using Lemma \ref{lem:prec@plb}, we have pointwise that

\begin{equation*}
    \begin{split}
        \indicator\{g_i^p(x) \neq h_i^{ \star, p}(x)\} &\leq \indicator\{\ell_{\text{prec}}^{@p}(g(x), \text{BinRel}(h^{\star}(x), p)) > 0\} \\
        &=  \indicator\{\ell(g(x), \text{BinRel}(h^{\star}(x), p)) > 0\} \\
        &\leq \frac{1}{a} \, \ell(g(x), \text{BinRel}(h^{\star}(x), p)).
    \end{split}
\end{equation*}

Taking expectations on both sides gives, 
$$
    \mathbbm{E}_{\mathcal{D}}\left[  \indicator\{g_i^p(x) \neq h_i^{ \star, p}(x)\}\right] \leq \frac{1}{a} \,\mathbbm{E}_{\mathcal{D}}\left[ \ell(g(x), \text{BinRel}(h^{\star}(x), p)) \right] 
    \leq \frac{\epsilon}{2},
$$
where in the last inequality we use the fact that $\mathbbm{E}_{x \sim \mathcal{D}_{\mathcal{X}}}\left[\ell(g(x), \text{BinRel}(h^{\star}(x), p)) \right] \leq \frac{a\epsilon}{2}.$ Finally, using the triangle inequality, we have that 

\begin{align*}
    \mathbbm{E}_{\mathcal{D}}\left[  \indicator\{g_i^p(x) \neq y\}\right] &\leq \mathbbm{E}_{\mathcal{D}}\left[  \indicator\{h^{\star, p}_i(x) \neq y\}\right] + \mathbbm{E}_{\mathcal{D}}\left[  \indicator\{g_i^p(x) \neq h^{\star, p}_i(x\}\right] \\
    &\leq \mathbbm{E}_{\mathcal{D}}\left[  \indicator\{h^{\star, p}_i(x) \neq y\}\right] + \frac{\epsilon}{2}\\
    &= \argmin_{h_i^p \in \mathcal{H}_i^p}\mathbbm{E}_{\mathcal{D}}\left[\mathbbm{1}\{h_i^p(x) \neq y\} \right] + \frac{\epsilon}{2} .
\end{align*}

Since $g_i^p \in C_i^p(S_U)$,  we have shown that $C_i^p(S_U)$ contains a hypothesis that generalizes well w.r.t $\mathcal{D}$. Now we want to show that the predictor $\hat{g}_i^p$ returned in step 4 also generalizes well. Crucially, observe that $C_i^p(S_U)$ is a finite hypothesis class with cardinality at most  $K^{pn}$. Therefore, by standard Chernoff and union bounds, with probability at least $1-\delta/2$, the empirical risk of every hypothesis in $C_i^p(S_U)$ on a sample of size $\geq  \frac{8}{\epsilon^2} \log{\frac{4 |C_i^j(S_U)|}{\delta}}$ is at most $\epsilon/4$ away from its true error. So, if $m = |S_L| \geq  \frac{8}{\epsilon^2} \log{\frac{4 |C_i^j(S_U)|}{\delta}}$, then with probability at least $1-\delta/2$, we have
\[\frac{1}{|S_L|} \sum_{(x, y) \in S_L} \mathbbm{1}\{g_i^p(x) \neq  y  \}\leq \expect_{\Dcal}\left[\mathbbm{1}\{g_i^p(x) \neq  y  \}\right] + \frac{\epsilon}{4} \leq \frac{3\epsilon}{4}. \]
Since $\hat{g}_i^p$ is the ERM on $S_L$ over $C_i^p(S_U)$, its empirical risk can be at most $\frac{3\epsilon}{4}$. Given that the population risk of $\hat{g}_i^p$ can be at most $\epsilon/4$ away from its empirical risk, we have that
\[\expect_{\Dcal}[\mathbbm{1}\{\hat{g}_i^p(x) \neq y\}]  \leq \argmin_{h_i^p \in \mathcal{H}_i^p}\mathbbm{E}_{\mathcal{D}}\left[\mathbbm{1}\{h_i^p(x) \neq y\} \right] + \epsilon. \]

Applying union bounds, the entire process succeeds with probability $1- \delta$.  We can compute the upper bound on the sample complexity of Algorithm \ref{alg:batch_realHij}, denoted $n(\epsilon, 
\delta, K)$, as 
\begin{equation*}
    \begin{split}
       n(\epsilon, \delta, K) &\leq m_{\mathcal{A}}(\frac{a\epsilon}{2}, \delta/2, K) + O \left( \frac{1}{\epsilon^2} \log{\frac{|C(S_U)|}{\delta}}\right)\\
       &\leq  m_{\mathcal{A}}(\frac{a\epsilon}{2}, \delta/2, K) + O \left( \frac{ p \; m_{\mathcal{A}}(\frac{a\epsilon}{2}, \delta/2, K)\log(K)\,+ \log{\frac{1}{\delta}}}{\epsilon^2} \right),
    \end{split}
\end{equation*}
where we use $|C(S_U)| \leq K^{p m_{\mathcal{A}}(\frac{a\epsilon}{2}, \delta/2, K)}$. This shows that Algorithm \ref{alg:batch_realHij} is an agnostic PAC learner for $\mathcal{H}_i^p$ w.r.t 0-1 loss.  Since our choice of loss $\ell \in \mathcal{L}(\ell_{\text{prec}}^{@p})$ and index $i$ were arbitrary, agnostic PAC learnability of $\mathcal{H}$ w.r.t $\ell$ implies agnostic PAC learnability of $\mathcal{H}_i^p$ w.r.t the 0-1 loss for all $i \in [K]$. \end{proof}

Combining Lemma \ref{lem:batch_suffprecloss}, \ref{lem:batch_precloss2arbloss} and \ref{lem:batch_prec@pnec} gives Theorem \ref{thm:batch_prec@p}. 

\section{Proofs for Online Multilabel Ranking}

\subsection{Proof of necessity in Theorem \ref{thm:ol_sum@p}}\label{appdx:necc_thm:ol_sum@p}

\begin{proof}
   Fix $\ell \in \Lcal(\ell_{\text{sum}}^{@p})$ and $(i, j) \in [K] \times [p]$. Given an online learner $\Acal$ for $\Hcal$ w.r.t $\ell$, our goal is to  construct an agnostic online learner $\Acal_i^j$ for $\Hcal_i^j$. To that end, let $(x_1, y_1), ..., (x_T,  y_T) \in (\mathcal{X} \times \{0,1\})^T$ denote a stream of labeled instances. Define $h_{i}^{\star, j} = \argmin_{h_i^{ j} \in \Hcal_{i}^j} \sum_{t=1}^T \indicator\{h_{i}^j(x_t) \neq y_t\}$ to be the optimal function in $\Hcal_{i}^j$ and $h^{\star}$ be an arbitrary completion of $h_{i}^{\star, j}$.  As in the sufficiency proof, our construction of the online learner for $\Hcal_i^j$ will run REWA over a set of experts  we construct below. 

  For any bitstring $b \in \{0, 1\}^T$, let $\phi: \{t \in [T]: b_t = 1\} \rightarrow \Scal_{K}$ denote a function mapping time points where $b_t = 1$ to permutations. Let $\Phi_b = \Scal_K^{\{t \in [T]: b_t = 1\}}$ denote all such functions $\phi$. For every $h\in \Hcal $, there exists a $\phi_b^{h} \in \Phi_b$  such that for all $t \in \{t: b_t = 1\}$, $\phi_b^h(t) = h(x_t)$. Let $|b| = |\{t \in [T]: b_t = 1\}|$. For every $b \in \{0, 1\}^T$ and $\phi \in \Phi_b$, define an Expert $E_{b, \phi}$. Expert $E_{b, \phi}$, formally presented in Algorithm \ref{alg:ness_expert_sum@p}, uses $\mathcal{A}$ to make predictions in each round.  For every $b \in \{0, 1\}^T$, let $\mathcal{E}_b = \bigcup_{\phi \in \Phi_b} \{E_{b, \phi}\}$ denote the set of all Experts parameterized by functions $\phi \in \Phi_b$. As before, we will actually define $\mathcal{E}_b = \{E_0\} \cup \bigcup_{\phi \in \Phi_b} \{E_{b, \phi}\}$, where $E_0$ is the expert that never updates $\mathcal{A}$ and only uses it to make predictions in each round.  Note that $1 \leq |\mathcal{E}_b| \leq (K!)^{|b|} \leq K^{K |b|}$.

\begin{algorithm}
\caption{Expert $(b, \phi)$}
\label{alg:ness_expert_sum@p}
\setcounter{AlgoLine}{0}
\KwIn{Independent copy of online learner $\Acal$ for $\Hcal$}
\For{$t = 1,...,T$} {
    Receive example $x_t$
    
    Predict $\indicator\{\hat{\pi}_i \leq j\}$ where $\hat{\pi} = \Acal(x_t)$

      \uIf{$b_t = 1$}{        
            Update $\mathcal{A}$ by passing $(x_t, \text{BinRel}(\phi(t), j))$
        }
}
\end{algorithm}

We are now ready to give the agnostic online learner for $\Hcal_i^j$, henceforth denoted by $\Qcal$. Our online learner $\mathcal{Q}$ is very similar to Algorithm \ref{alg:agn_lsum@p}. First, it will sample a $B \in \{0, 1\}^T$ s.t. $B_t \sim \text{Bernoulli}(T^{\beta}/T)$. Then, it will construct a set of experts $\Ecal_B$ using Algorithm \ref{alg:ness_expert_sum@p}. Finally, it will run REWA, denoted by $\Pcal$, on the 0-1 loss over the stream $(x_1, y_1), ..., (x_T,  y_T)$.  As before, let $A$ and $P$ be the random variables denoting  internal randomness of the algorithm $\Acal$ and $\Pcal$.  Using REWA guarantees and following exactly the same calculation as in the sufficiency proof, we arrive at
\[ \mathbbm{E}\left[\sum_{t=1}^T \indicator\{\mathcal{Q}(x_t) \neq y_t\} \right] \leq \mathbbm{E}\left[\sum_{t=1}^T \indicator\{E_{B, \phi_{B}^{h^{\star}}}(x_t) \neq y_t\}  \right] + \sqrt{2 T^{1+\beta} K \ln{ K}}. \]
The inequality above is the adaptation of Equation \eqref{REWA_guarantee} for this proof. Recall that $h_{i}^{\star, j}$ is the optimal function in hindsight for the stream and $h^{\star}$ is a completion of $h_{i}^{\star,j}$. Since $\indicator\{E_{B, \phi_{B}^{h^{\star}}}(x_t) \neq y_t\} \leq  \indicator\{h_i^{\star, j}(x_t) \neq y_t\} \, +\,  \indicator\{E_{B, \phi_{B}^{h^{\star}}}(x_t) \neq h_i^{\star, j}(x_t)  \}$, the inequality above reduces to
\begin{equation*}
    \begin{split}
        \mathbbm{E}\left[\sum_{t=1}^T \indicator\{\mathcal{Q}(x_t) \neq y_t\} \right] \leq \sum_{t=1}^T \indicator\{h_i^{\star, j}(x_t) \neq y_t\}   +  \mathbbm{E}\left[\sum_{t=1}^T \indicator\{E_{B, \phi_{B}^{h^{\star}}}(x_t) \neq h_i^{\star, j}(x_t)  \}  \right] + \sqrt{2 T^{1+\beta} K \ln{ K}}.
    \end{split}
\end{equation*}

It now suffices to show that $\mathbbm{E}\left[\sum_{t=1}^T \indicator\{E_{B, \phi_{B}^{h^{\star}}}(x_t) \neq h_i^{\star, j}(x_t) \}  \right]$ is sub-linear function of $T$.

Given an online learner $\mathcal{A}$ for $\Hcal$, an instance $x \in \mathcal{X}$, and an ordered finite sequence of labeled examples $L \in (\mathcal{X} \times \mathcal{Y})^*$, let $\mathcal{A}(x|L)$ be the random variable denoting the prediction of $\mathcal{A}$ on the instance $x$ after running and updating on $L$. For any $b\in \{0, 1\}^T$, $h \in \mathcal{H}$, and $t \in [T]$, let $L^h_{b_{< t}} = \{(x_i, \text{BinRel}(h(x_s), j)): s < t \text{ and } b_s = 1\}$ denote the \textit{subsequence} of the sequence of labeled instances $\{(x_s, \text{BinRel}(h(x_s), j))\}_{s=1}^{t-1}$ where $b_s = 1$. Thus, using Lemma \ref{lem:sum@plb}, we have
\begin{equation*}
    \begin{split}
        \indicator\{E_{B, \phi_{B}^{h^{\star}}}(x_t) \neq h_i^{ \star, j}(x_t)\} &\leq \indicator\{\ell_{\text{sum}}^{@p}(\Acal(x_t \mid L^{h^{\star}}_{B_{< t}}), \text{BinRel}(h^{\star}(x_t), j)) > 0\} \\
        &=  \indicator\{\ell(\Acal(x_t \mid L^{h^{\star}}_{B_{< t}}), \text{BinRel}(h^{\star}(x_t), j)) > 0\} \\
        &\leq \frac{1}{a} \, \ell(\Acal(x_t \mid L^{h^{\star}}_{B_{< t}}), \text{BinRel}(h^{\star}(x_t), j), \text{BinRel}(h^{\star}(x_t), j)),
    \end{split}
\end{equation*}
where equality follows from the fact that $\ell \in \mathcal{L}(\ell_{\text{sum}}^{@p})$. Here, $a$ is the lower bound whenever it is non-zero. Taking expectations of both sides and summing over $t \in [T]$ gives

\[\mathbbm{E}\left[\sum_{t=1}^T \indicator\{E_{B, \phi_{B}^{h^{\star}}}(x_t) \neq h_i^{\star,j}(x_t)\}  \right] \leq \frac{1}{a}\mathbbm{E}\left[\sum_{t=1}^T  \ell(\Acal(x_t \mid L^{h^{\star}}_{B_{< t}}), \text{BinRel}(h^{\star}(x_t), j))  \right].  \]

To upperbound the right-hand side, we will again use the fact that the prediction $\Acal(x_t \mid L^{h^{\star}}_{B_{< t}})$ only depends on $(B_1, \ldots, B_{t-1})$, but is independent of $ B_t$. The details of this calculation are omitted because they are identical to that of the sufficiency proof. Using independence of $\Acal(x_t \mid L^{h^{\star}}_{B_{< t}})$ and $B_t$, we obtain
\begin{equation*}
    \begin{split}
    \mathbbm{E}\left[\sum_{t=1}^T  \ell(\Acal(x_t \mid L^{h^{\star}}_{B_{< t}}), \text{BinRel}(h^{\star}(x_t), j))  \right] &=  \frac{T}{T^{\beta}}\, \mathbbm{E}\left[\sum_{t : B_t =1} \ell(\Acal(x_t \mid L^{h^{\star}}_{B_{< t}}), \text{BinRel}(h^{\star}(x_t), j))  \right] \\
    &=  \frac{T}{T^{\beta}}\, \mathbbm{E} \left[\mathbbm{E}\left[\sum_{t : B_t =1} \ell(\Acal(x_t \mid L^{h^{\star}}_{B_{< t}}), \text{BinRel}(h^{\star}(x_t), j))  
 \, \Bigg| B \right] \right] \\
    &\leq \frac{T}{T^{\beta}} \expect \left[R(|B|, K) \right],
\end{split}
\end{equation*}
where $R(|B|, K)$ is the regret of the algorithm $\Acal$, a sub-linear function of $|B|$. In the last step, we use the fact that $\Acal$ is a (realizable) online learner for $\Hcal$ w.r.t. $\ell$ and the feedback that the algorithm received was $(x_t, \text{BinRel}(h^{\star}(x_t), j))$ in the rounds whenever $B_t =1$. Again, using Lemma 5.17 from \cite{woess2017groups} and Jensen's inequality yields $\mathbbm{E}\left[ R(|B|, K) \right] \leq  R(T^{\beta}, K)$, a sub-linear function of $T^{\beta}$. Combining everything, we get
\begin{equation*}
    \begin{split}
        \mathbbm{E}\left[\sum_{t=1}^T \indicator\{\mathcal{Q}(x_t) \neq y_t\} \right] &\leq \sum_{t=1}^T \indicator\{h_i^{\star, j}(x_t) \neq y_t\} +   \frac{T}{a T^{\beta}} R(T^{\beta}, K) + \sqrt{2 T^{1+\beta} K \ln{ K}} \\
        &= \argmin_{h_{i}^j \in \Hcal_{i}^j}\sum_{t=1}^T \indicator\{h_i^{ j}(x_t) \neq y_t\} +   \frac{T}{aT^{\beta}} R(T^{\beta}, K) + \sqrt{2 T^{1+\beta} K \ln{ K}}
        \end{split}
\end{equation*}

For any choice of $\beta \in (0, 1)$, the regret above is a sub-linear function of $T$. Therefore, we have shown that $\Qcal$ is an agnostic learner for $\Hcal_i^j$ w.r.t. $0$-$1$ loss. 
\end{proof}

\subsection{Proof of Theorem \ref{thm:ol_prec@p}}\label{appdx:proof_ol_prec@p}

\begin{proof}(of sufficiency in Theorem \ref{thm:ol_prec@p})
Fix $\ell \in \mathcal{L}(\ell_{\text{prec}}^{@p})$ and let $M = \max_{\pi, y}\ell(\pi, y) $. This proof is virtually identical to the proof of sufficiency in Theorem \ref{thm:batch_sum@p}. However, we provide the full details here for completion. Our proof is also based on reduction. That is, given realizable learners $\Acal_i^p$ of $\Hcal_i^p$'s for $i \in [K]$ w.r.t. $0$-$1$ loss, we will construct an agnostic learner $\Qcal$ for $\Hcal$ w.r.t. $\ell$. We will construct a set of experts $\Ecal$ that uses $\Acal_i^p$ to make predictions and run the REWA algorithm using these experts. 

Let $(x_1, y_1), ..., (x_T, y_T) \in (\mathcal{X} \times \mathcal{Y})^T$ denote the stream of points to be observed by the online learner. As before, we will assume an oblivious adversary. Define $h^{\star} = \argmin_{h \in \Hcal} \sum_{t=1}^T \ell(h(x_t), y_t)$ to be the optimal hypothesis in hindsight.

For any bitstring $b \in \{0, 1\}^T$, let $\phi: \{t \in [T]: b_t = 1\} \rightarrow \Scal_{K}$ denote a function mapping time points where $b_t = 1$ to permutations. Let $\Phi_b = \Scal_K^{\{t \in [T]: b_t = 1\}}$ denote all such functions $\phi$. For every $h \in \Hcal $, there exists a $\phi_b^h \in \Phi_b$  such that for all $t \in \{t: b_t = 1\}$, $\phi_b^h(t) = h(x_t)$. Let $|b| = |\{t \in [T]: b_t = 1\}|$. For every $b \in \{0, 1\}^T$ and $\phi \in \Phi_b$, we will define an Expert $E_{b, \phi}$. Expert $E_{b, \phi}$, formally presented in Algorithm \ref{alg:agn_lsum@p}, uses $\mathcal{A}_i^p$'s to make predictions in each round. However, $E_{b, \phi}$ only updates the $\mathcal{A}_i^p$'s on those rounds where $b_t = 1$, using $\phi$ to compute a labeled instance. For every $b \in \{0, 1\}^T$, let $\mathcal{E}_b = \bigcup_{\phi \in \Phi_b} \{E_{b, \phi}\}$ denote the set of all Experts parameterized by functions $\phi \in \Phi_b$. If $b$ is the bitstring with all zeros, then $\mathcal{E}_b$ will be empty. Therefore, we will actually define $\mathcal{E}_b = \{E_0\} \cup \bigcup_{\phi \in \Phi_b} \{E_{b, \phi}\}$, where $E_0$ is the expert that never updates $\mathcal{A}_i^j$'s and only uses them for predictions in all $t \in [T]$.  Note that $1 \leq |\mathcal{E}_b| \leq (K!)^{|b|} \leq K^{K |b|}$. Using these experts, Algorithm \ref{alg:agn_lsum@p} is our agnostic online learner $\mathcal{Q}$ for $\mathcal{\Hcal}$ w.r.t $\ell \in \Lcal(\ell_{\text{prec}}^{@ p})$.

\begin{algorithm}
\caption{Expert $(b, \phi)$}
\label{alg:expert_prec@p}
\setcounter{AlgoLine}{0}
\KwIn{Independent copy of realizable learners  $\Acal_i^p$ of $\Hcal_i^p$ for $i \in [K]$}
\For{$t = 1,...,T$} {
    Receive example $x_t$
    
    Define a binary vote vector $v_t \in \{0,1\}^{K}$ such that $v_t[i] = \Acal_i^p(x_t)$
    
    Predict $\hat{\pi}_t \in \argmin_{\pi \in \Scal_K} \langle \pi, v_t \rangle$

      \uIf{$b_t = 1$}{        
            Let $\pi = \phi(t)$ and for each $i \in [K]$, update  $\mathcal{A}_i^p$ by passing $(x_t, \pi_i^p)$
        }
}
\end{algorithm}

Using REWA guarantees and following exactly the same calculation as in the proof of Theorem \ref{thm:ol_sum@p} we immediately arrive at
\begin{equation*}
    \mathbbm{E}\left[\sum_{t=1}^T \ell(\mathcal{Q}(x_t), y_t) \right] \leq \mathbbm{E}\left[\sum_{t=1}^T \ell(E_{B, \phi_B^{h^{\star}}}(x_t), y_t) \right] + M \sqrt{2 T^{1+\beta} K \ln{ K}},
\end{equation*}
the analog of Equation \eqref{REWA_guarantee} for this setting. Using Lemma \ref{lem:subaddprec@p}, we have
\[\ell(E_{B, \phi_B^{h^{\star}}}(x_t), y_t) \leq \ell(h^{\star}(x_t), y_t) + \frac{M}{a} \ell(E_{B, \phi_B^{h^{\star}}}(x_t), \text{BinRel}(h^{\star}(x_t), p))\]
pointwise, where $a = \min_{\pi, y}\{ \ell(\pi, y) \, \mid \, \ell(\pi, y) \neq 0\}$. By definition of $M$, we further get 
\begin{equation*}
    \begin{split}
        \ell(E_{B, \phi_B^{h^{\star}}}(x_t), \text{BinRel}(h^{\star}(x_t), p)) &\leq M \,\indicator\{\ell(E_{B, \phi_B^{h^{\star}}}(x_t), \text{BinRel}(h^{\star}(x_t), p)) > 0\}  \\
        &= M\, \indicator\{\ell_{\text{prec}}^{@p}(E_{B, \phi_B^{h^{\star}}}(x_t), \text{BinRel}(h^{\star}(x_t), p)) > 0 \},
    \end{split}
\end{equation*}
where the equality follows from the fact that $\ell \in \Lcal(\ell_{\text{prec}}^{@p})$.

In order to upperbound the indicator above, we need some more notations. Given the realizable online learner $\mathcal{A}_i^p$ for $i \in [K] \times [p]$, an instance $x \in \mathcal{X}$, and an ordered finite sequence of labeled examples $L \in (\mathcal{X} \times \{0,1\})^*$, let $\mathcal{A}_i^p(x|L)$ be the random variable denoting the prediction of $\mathcal{A}_i^p$ on the instance $x$ after running and updating on $L$. For any $b\in \{0, 1\}^T$, $h \in \mathcal{H}$, and $t \in [T]$, let $L_{b_{< t}}^{h}(i,p) = \{(x_{s}, h_{i}^p(x_{s})): s < t \text{ and } b_{s} = 1\}$ denote the \textit{subsequence} of the sequence of labeled instances $\{(x_{s}, h_i^p(x_{s}))\}_{s=1}^{t-1}$ where $b_s = 1$. Then, we have
\begin{equation*}
    \begin{split}
       \indicator\{\ell_{\text{prec}}^{@p}(E_{B, \phi_B^{h^{\star}}}(x_t), \text{BinRel}(h^{\star}(x_t), p)) > 0 \} &\leq \sum_{i=1}^K  \indicator\{\Acal_i^{p}(x_t \mid L_{B_{< t}}^{h^{\star}}(i,p)) \neq h_i^{\star, p}(x_t)\}.
    \end{split}
\end{equation*}
To prove this claimed inequality, consider the case when  $\sum_{i=1}^K  \indicator\{\Acal_i^{p}(x_t \mid L_{B_{< t}}^{h^{\star}}(i,p)) \neq h_i^{\star, p}(x_t)\}  =0$ because the inequality is trivial otherwise. Then, we must have $ 
 \Acal_i^{p}(x_t \mid L_{B_{< t}}^{h^{\star}}(i,p)) = h_i^{ \star,p}(x_t)$ for all $i \in [K]$. Let $v_t \in \{0,1\}^{K }$ such that $v_t[i] =  \Acal_i^{p}(x_t \mid L_{B_{< t}}^{h^{\star}}(i,p))$ be a binary vote vector that the expert $E_{B, \phi_B^{h^{\star}}}$ constructs in round $t$. Since $h^{\star}(x_t)$ is a permutation, the vote vector $v_t$ must  contain exactly $p$ labels with $1$ vote and $K-p$ labels with $0$ votes. Thus, every $\hat{\pi}_t \in \argmin_{\pi \in \Scal_K} \langle \pi, v_t\rangle$ must rank labels with $1$ vote in top $p$ and labels with $0$ votes outside top $p$. In other words, we must have $\hat{\pi}_t \stackrel{\mathclap{p}}{=} h^{\star}(x_t)$, and thus $\ell_{\text{prec}}^{@p}(\hat{\pi}_t, \text{BinRel}(h^{\star}(x_t), p)) =0 $  by definition of $ \ell_{\text{prec}}^{@p}$. Our claim follows because $E_{B, \phi_B^{h^{\star}}}(x_t ) \in \argmin_{\pi \in \Scal_K} \langle \pi, v_t\rangle$.

Combining everything, we obtain
\[\ell(E_{B, \phi_B^{h^{\star}}}(x_t), y_t) \leq \ell(h^{\star}(x_t), y_t) +\frac{M^2}{a}\:\sum_{i=1}^K  \indicator\{\Acal_i^{p}(x_t \mid L_{B_{< t}}^{h^{\star}}(i,p)) \neq h_i^{\star,p}(x_t)\}. \]
Taking expectations on both sides and summing over all $t \in [T]$ yields
\[\mathbb{E}\left[\sum_{t=1}^T \ell(E_{B, \phi_B^{h^{\star}}}(x_t), y_t) \right] \leq \sum_{t=1}^T\ell(h^{\star}(x_t), y_t) + \frac{M^2}{a}\:  \sum_{i=1}^K  \expect \left[ \sum_{t=1}^T \indicator\{\Acal_i^{p}(x_t \mid L_{B_{< t}}^{h^{\star}}(i,p)) \neq h_i^{\star, p}(x_t)\} \right]. \]
So, it now suffices to show that $\expect \left[ \sum_{t=1}^T \indicator\{\Acal_i^{p}(x_t \mid L_{B_{< t}}^{h^{\star}}(i,p)) \neq h_i^{\star, p}(x_t)\} \right]$ is a sub-linear function of $T$. Again, using the independence of $B_t$ and the algorithm's prediction in round $t$, we can write
\begin{equation*}
    \begin{split}
        \expect \left[ \sum_{t=1}^T \indicator\{\Acal_i^{p}(x_t \mid L_{B_{< t}}^{h^{\star}}(i,p)) \neq h_i^{\star, p}(x_t)\}\right] &=    \sum_{t=1}^T \expect \left[ \indicator\{\Acal_i^{p}(x_t \mid L_{B_{< t}}^{h^{\star}}(i,p)) \neq h_i^{\star, p}(x_t)\} \right] \frac{\mathbb{P}\left[B_t = 1 \right]}{\mathbb{P}\left[B_t = 1 \right]}\\
        &= \frac{T}{T^{\beta}}\,  \sum_{t=1}^T \expect \left[\indicator\{\Acal_i^{p}(x_t \mid L_{B_{< t}}^{h^{\star}}(i,p)) \neq h_i^{\star, p}(x_t)\} \right] \mathbb{E}\left[ \indicator\{B_t = 1 \}\right]\\
        &= \frac{T}{T^{\beta}}\,  \sum_{t=1}^T \expect \left[\indicator\{\Acal_i^{p}(x_t \mid L_{B_{< t}}^{h^{\star}}(i,p)) \neq h_i^{\star, p}(x_t)\} \indicator\left\{B_t = 1 \right\}\right].
    \end{split}
\end{equation*}
 Next, we can use the regret guarantee of the algorithm $\Acal_i^{p}$ on the rounds it was updated. That is, 
 \begin{equation*}
    \begin{split}
     \sum_{t=1}^T \expect \left[\indicator\{\Acal_i^{p}(x_t \mid L_{B_{< t}}^{h^{\star}}(i,p)) \neq h_i^{\star, p}(x_t)\}\indicator\left\{B_t = 1 \right\}\right] &=     \expect \left[ \sum_{t  :  B_t =1} \indicator\{\Acal_i^{p}(x_t \mid L_{B_{< t}}^{h^{\star}}(i,p)) \neq h_i^{\star, p}(x_t)\}\right] \\
     &= \expect \left[ \expect \left[ \sum_{t  :  B_t =1} \indicator\{\Acal_i^{p}(x_t \mid L_{B_{< t}}^{h^{\star}}(i,p)) \neq h_i^{\star, p}(x_t)\} \, \Bigg| B \right]  \right]\\
     &\leq \expect_{B}\left[R_i^p(|B|) \right] ,
    \end{split}
\end{equation*}
where $R_i^p(|B|)$ is the regret of  $\Acal_i^{p}$, a sub-linear function of $|B|$. In the last step, we use the fact that $\Acal_i^{p}$ is a realizable algorithm for $\Hcal_i^p$ and the feedback that the algorithm received was $(x_t, h_i^{\star,p}(x_t))$ in the rounds whenever $B_t =1$. Without loss of generality, we assume that $R_i^p(|B|)$ is a concave function of $|B|$. Otherwise, by Lemma 5.17 from \cite{woess2017groups}, there exists a concave sub-linear function $\tilde{R}_i^p(|B|)$ that upperbounds $R_i^p(|B|)$. By Jensen's inequality, $\mathbbm{E}_B\left[ R_i^p(|B|) \right] \leq  R_i^p(T^{\beta})$, a sub-linear function of $T^{\beta}$. 

Putting everything together, we obtain
\begin{equation*}
    \begin{split}
        \mathbbm{E}\left[\sum_{t=1}^T \ell(\mathcal{Q}(x_t), y_t) \right] &\leq \sum_{t=1}^T \ell(h^{\star}(x_t), y_t) + \frac{M^2}{a} \sum_{i=1}^K  \frac{T}{T^{\beta}}\, R_i^{p}(T^{\beta})+ M \sqrt{2 T^{1+\beta} K \ln{ K}}\\
        &=  \inf_{h \in \Hcal} \sum_{t=1}^T \ell(h(x_t), y_t) + \frac{pM^2}{a} \sum_{i=1}^K \frac{T}{T^{\beta}}\, R_i^{p}(T^{\beta})+ M \sqrt{2 T^{1+\beta} K \ln{ K}}.
    \end{split}
\end{equation*}

 Since $R_i^{p}(T^{\beta})$ is a sublinear function of $T^{\beta}$, $\frac{T}{T^{\beta}}R_i^{p}(T^{\beta}) $ is a sublinear function of $T$. As the sum of sublinear functions is sublinear, the second term above must be a sublinear function of $T$. Thus, the regret is sub-linear for any choice of $\beta \in (0,1)$.  This completes our proof as we have shown that the algorithm $\Qcal$ achieves sub-linear regret in $T$. \end{proof} 

  We will now show that the online learnability of $\Hcal$ w.r.t $\ell$ implies that  $\Hcal_i^{p}$ for each $i \in [K]$ is online learnable w.r.t 0-1 loss.

\begin{proof}(of necessity in Theorem \ref{thm:ol_prec@p})

   Fix $\ell \in \mathcal{L}(\ell_{\text{prec}}^{@p})$ and let $M = \max_{\pi, y}\ell(\pi, y)$. Given an online learner $\Acal$ for $\Hcal$ w.r.t $\ell$, our goal is to construct an agnostic online learner $\Acal_i^p$ for $\Hcal_i^p$ for a fixed $i \in [K]$. One can construct agnostic online learners for $\Hcal_i^p$ for all $i \in [K]$ by symmetry. Our construction uses the REWA and is similar to the sufficiency proof above. 

 Let us define function $\phi$'s, the collection of functions $\Phi_b$ for every $b$ in the same way we did before. For every $b \in \{0, 1\}^T$ and $\phi \in \Phi_b$, define an Expert $E_{b, \phi}$. Expert $E_{b, \phi}$ is the expert presented in Algorithm \ref{alg:ness_expert_sum@p} after setting $j=p$ and uses $\mathcal{A}$ to make predictions in each round.  For every $b \in \{0, 1\}^T$, let $\mathcal{E}_b = \bigcup_{\phi \in \Phi_b} \{E_{b, \phi}\}$ denote the set of all Experts parameterized by functions $\phi \in \Phi_b$. As before, we will actually define $\mathcal{E}_b = \{E_0\} \cup \bigcup_{\phi \in \Phi_b} \{E_{b, \phi}\}$, where $E_0$ is the expert that never updates $\mathcal{A}$ and only uses it to make predictions in each round.  Note that $1 \leq |\mathcal{E}_b| \leq (K!)^{|b|} \leq K^{K |b|}$. 

    The online learner for $\Hcal_i^p$, henceforth denoted by $\Qcal$, is similar to Algorithm \ref{alg:agn_lsum@p}. First, it samples a $B \in \{0, 1\}^T$ s.t. $B_t \sim \text{Bernoulli}(T^{\beta}/T)$, constructs a set of experts $\Ecal_B$ using Algorithm \ref{alg:ness_expert_sum@p} and runs REWA, denoted by $\Pcal$, on the 0-1 loss over the stream $(x_1, y_1), ..., (x_T,  y_T) \in (\mathcal{X} \times \{0,1\})^T$. Let $h_i^{\star, p} = \argmin_{h_i^{p} \in \Hcal_i^p} \sum_{t=1}^T \indicator\{h_i^p(x_t) \neq y_t\}$ be the optimal function in hindsight and $h^{\star}$ be any arbitrary completion of $h_i^{\star, p}$.

  Using REWA guarantees and following exactly the same calculation as in the sufficiency proof, we arrive at
\[ \mathbbm{E}\left[\sum_{t=1}^T \indicator\{\mathcal{Q}(x_t) \neq y_t\} \right] \leq \mathbbm{E}\left[\sum_{t=1}^T \indicator\{E_{B, \phi_{B}^{h^{\star}}}(x_t) \neq y_t\}  \right] + \sqrt{2 T^{1+\beta} K \ln{ K}}. \]
The inequality above is the adaptation of Equation \eqref{REWA_guarantee} for this proof. Since $\indicator\{E_{B, \phi_{B}^{h^{\star}}}(x_t) \neq y_t\} \leq  \indicator\{h_i^{\star, p}(x_t) \neq y_t\} \, +\,  \indicator\{E_{B, \phi_{B}^{h^{\star}}}(x_t) \neq h_i^{\star, p}(x_t)  \}$, the inequality above reduces to
\begin{equation*}
    \begin{split}
        \mathbbm{E}\left[\sum_{t=1}^T \indicator\{\mathcal{Q}(x_t) \neq y_t\} \right] \leq \sum_{t=1}^T \indicator\{h_i^{\star, p}(x_t) \neq y_t\}   +  \mathbbm{E}\left[\sum_{t=1}^T \indicator\{E_{B, \phi_{B}^{h^{\star}}}(x_t) \neq h_i^{\star, p}(x_t)  \}  \right] + \sqrt{2 T^{1+\beta} K \ln{ K}}.
    \end{split}
\end{equation*}

It now suffices to show that $\mathbbm{E}\left[\sum_{t=1}^T \indicator\{E_{B, \phi_{B}^{h^{\star}}}(x_t) \neq h_i^{\star, p}(x_t) \}  \right]$ is sub-linear in $T$.

Given an online learner $\mathcal{A}$ for $\Hcal$, an instance $x \in \mathcal{X}$, and an ordered finite sequence of labeled examples $L \in (\mathcal{X} \times \mathcal{Y})^*$, let $\mathcal{A}(x|L)$ be the random variable denoting the prediction of $\mathcal{A}$ on the instance $x$ after running and updating on $L$. For any $b\in \{0, 1\}^T$, $h \in \mathcal{H}$, and $t \in [T]$, let $L^h_{b_{< t}} = \{(x_i, \text{BinRel}(h(x_s), p)): s < t \text{ and } b_s = 1\}$ denote the \textit{subsequence} of the sequence of labeled instances $\{(x_s, \text{BinRel}(h(x_s), p))\}_{s=1}^{t-1}$ where $b_s = 1$.  Using Lemma \ref{lem:prec@plb}, we have 
\begin{equation*}
    \begin{split}
        \indicator\{E_{B, \phi_{B}^{h^{\star}}}(x_t) \neq h_i^{\star,p}(x_t)\} &\leq \indicator\{\ell_{\text{prec}}^{@p}(\Acal(x_t \mid L^{h^{\star}}_{B_{< t}}), \text{BinRel}(h^{\star}(x_t), p)) > 0\} \\
        &=  \indicator\{\ell(\Acal(x_t \mid L^{h^{\star}}_{B_{< t}}), \text{BinRel}(h^{\star}(x_t), p) )> 0\} \\
        &\leq \frac{1}{a} \, \ell(\Acal(x_t \mid L^{h^{\star}}_{B_{< t}}), \text{BinRel}(h^{\star}(x_t), p)),
    \end{split}
\end{equation*}
where the equality follows from the definition of the loss class. Here,  $a$ is the lower bound on $\ell$ whenever it is non-zero. Thus, we obtain
\[\mathbbm{E}\left[\sum_{t=1}^T \indicator\{E_{B, \phi_{B}^{h^{\star}}}(x_t) \neq h_i^{ \star,p}(x_t)\}  \right] \leq \frac{1}{a}\mathbbm{E}\left[\sum_{t=1}^T  \ell(\Acal(x_t \mid L^{h^{\star}}_{B_{< t}}), \text{BinRel}(h^{\star}(x_t), p)) \right]  \]

Now, we will again use the fact that the prediction $\Acal(x_t \mid L^{h^{\star}}_{B_{< t}})$ only depends on $(B_1, \ldots, B_{t-1})$, but is independent of $ B_t$. Using this independence, we obtain
\begin{equation*}
    \begin{split}
    \mathbbm{E}\left[\sum_{t=1}^T  \ell(\Acal(x_t \mid L^{h^{\star}}_{B_{< t}}), \text{BinRel}(h^{\star}(x_t), p)) \right] &=  \frac{T}{T^{\beta}}\, \mathbbm{E}\left[\sum_{t : B_t =1} \ell(\Acal(x_t \mid L^{h^{\star}}_{B_{< t}}), \text{BinRel}(h^{\star}(x_t), p)) \right] \\
    &= \frac{T}{T^{\beta}}\, \mathbbm{E} \left[ \mathbbm{E}\left[\sum_{t : B_t =1} \ell(\Acal(x_t \mid L^{h^{\star}}_{B_{< t}}), \text{BinRel}(h^{\star}(x_t), p)) \, \Bigg| B \right]  \right] \\
    &\leq \frac{T}{T^{\beta}} \expect\left[R(|B|, K) \right],
\end{split}
\end{equation*}
where $R(|B|, K)$ is the regret of the algorithm $\Acal$ and is a sub-linear function of $|B|$. In the last step, we use the fact that $\Acal$ is a (realizable) online learner for $\Hcal$ w.r.t. $\ell$ and the feedback that the algorithm received was $(x_t, \text{BinRel}(h^{\star}(x_t), p))$ in the rounds whenever $B_t =1$. Again, using Lemma 5.17 from \cite{woess2017groups} and Jensen's inequality yields $\mathbbm{E}_B\left[ R(|B|, K) \right] \leq  R(T^{\beta}, K)$, a sub-linear function of $T^{\beta}$. Combining everything, we get
\begin{equation*}
    \begin{split}
      \mathbbm{E}\left[\sum_{t=1}^T \indicator\{\mathcal{Q}(x_t) \neq h_i^{ \star,p}(x_t)\} \right] &\leq \sum_{t=1}^T \indicator\{h_i^{\star, p}(x_t) \neq y_t\}  +   \frac{T}{ a\, T^{\beta}} R(T^{\beta}, K) + \sqrt{2 T^{1+\beta} K \ln{ K}} \\
      &\leq \inf_{h_i^p \in \Hcal_i^p}\sum_{t=1}^T \indicator\{h_i^{p}(x_t) \neq y_t\}  + \frac{T}{a\, T^{\beta}} R(T^{\beta}, K) + \sqrt{2 T^{1+\beta} K \ln{ K}}
    \end{split}
\end{equation*}

For any choice of $\beta \in (0, 1)$, the regret above is a sub-linear function of $T$. Therefore, we have shown that $\Qcal$ is an agnostic learner for $\Hcal_i^p$ w.r.t. $0$-$1$ loss. This completes our proof.
\end{proof}

\section{Technical Lemmas}

Throughout this section, for any ranking (permutation) $\pi \in \mathcal{S}_K$, we let $\pi_i^j = \mathbbm{1}\{\pi_i \leq j\}$ for all $(i, j) \in [K]$.

\begin{lemma} \label{lem:subaddsum@p}
    For any $y \in \mathcal{Y}$,  $(\pi, \hat{\pi}) \in \mathcal{S}_k$, and $\ell \in \mathcal{L}(\ell_{\text{sum}}^{@p})$ 
    $$\ell(\pi, y) \leq \ell(\hat{\pi}, y) + c \:p\:\mathbbm{E}_{j \sim \text{Unif}([p])}\left[\ell(\pi, \text{BinRel}(\hat{\pi}, j)) \right].$$
    where $c = \frac{\max_{\tilde{\pi}, y}\ell(\tilde{\pi}, y)}{\min_{\tilde{\pi}, y}\{ \ell(\tilde{\pi}, y) \, \mid \, \ell(\tilde{\pi}, y) \neq 0\}}.$
\end{lemma}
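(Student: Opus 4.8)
The plan is to split on whether $\pi \pbeq \hat{\pi}$, writing throughout $b = \max_{\tilde{\pi}, y}\ell(\tilde{\pi}, y)$ and $a = \min_{\tilde{\pi}, y\,:\,\ell(\tilde{\pi},y)>0}\ell(\tilde{\pi},y)$, so that $c = b/a$, and noting that since $\ell$ is non-negative every summand $\ell(\pi, \text{BinRel}(\hat{\pi}, j))$ of the expectation, as well as $\ell(\hat{\pi}, y)$, is $\geq 0$. If $\pi \pbeq \hat{\pi}$, the second defining property of $\mathcal{L}(\ell_{\text{sum}}^{@p})$ forces $\ell(\pi, y) = \ell(\hat{\pi}, y)$, and since the expectation term on the right is non-negative the claimed inequality is immediate.

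For the case $\pi \not\pbeq \hat{\pi}$, I would first establish the key claim that some $j_0 \in [p]$ satisfies $\ell(\pi, \text{BinRel}(\hat{\pi}, j_0)) \geq a$. By the definition of $[p]$-equivalence, $\pi \pbeq \hat{\pi}$ holds iff $\text{BinRel}(\pi, j) = \text{BinRel}(\hat{\pi}, j)$ for every $j \in [p]$, so choose $j_0 \in [p]$ with $\text{BinRel}(\pi, j_0) \neq \text{BinRel}(\hat{\pi}, j_0)$. Setting $y' = \text{BinRel}(\hat{\pi}, j_0) \in \mathcal{Y}$, whose support $S = \{i : \hat{\pi}_i \leq j_0\}$ has exactly $j_0 \leq p$ elements, a short optimization over permutations $\sigma$ shows that $\sum_{i \in S}\min(\sigma_i, p+1)$ is minimized precisely when $\{i : \sigma_i \leq j_0\} = S$, so $Z^p_{y'} = 1 + 2 + \cdots + j_0$, and therefore $\ell_{\text{sum}}^{@p}(\pi, y') = 0$ iff $\text{BinRel}(\pi, j_0) = \text{BinRel}(\hat{\pi}, j_0)$, which fails by the choice of $j_0$. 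Since $\ell$ is zero-matched with $\ell_{\text{sum}}^{@p}$, this forces $\ell(\pi, y') > 0$, hence $\ell(\pi, y') \geq a$. Then, bounding the expectation below by its $j_0$-th term, $\mathbbm{E}_{j \sim \text{Unif}([p])}[\ell(\pi, \text{BinRel}(\hat{\pi}, j))] \geq \frac{1}{p}\ell(\pi, y') \geq \frac{a}{p}$, so $cp\,\mathbbm{E}_{j}[\ell(\pi, \text{BinRel}(\hat{\pi}, j))] \geq c a = b = \max_{\tilde{\pi},y}\ell(\tilde{\pi},y) \geq \ell(\pi,y)$; since $\ell(\hat{\pi}, y) \geq 0$, the right-hand side dominates $\ell(\pi, y)$, completing the case.

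The step I expect to be the main obstacle is the computation inside the key claim: identifying the normalization constant $Z^p_{\text{BinRel}(\hat{\pi}, j_0)}$ and showing that $\ell_{\text{sum}}^{@p}(\pi, \text{BinRel}(\hat{\pi}, j_0))$ vanishes exactly when $\pi$ and $\hat{\pi}$ agree on the top-$j_0$ set. This is the point where the ``$\ell = 0$ iff $\ell_{\text{sum}}^{@p} = 0$'' clause of the family definition gets turned into a quantitative lower bound on the expectation, and it is precisely what powers the black-box reductions in Lemma \ref{lem:batch_sumloss2arbloss} and in the online sufficiency proof of Theorem \ref{thm:ol_sum@p}.
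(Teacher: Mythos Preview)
Your proposal is correct and follows essentially the same argument as the paper: both reduce to finding some $j_0 \in [p]$ with $\ell_{\text{sum}}^{@p}(\pi,\text{BinRel}(\hat\pi,j_0))>0$, invoke zero-matching to get $\ell(\pi,\text{BinRel}(\hat\pi,j_0))\geq a$, and then lower-bound the expectation by this single term to obtain $cp\cdot\tfrac{a}{p}=b\geq\ell(\pi,y)$. The only cosmetic differences are that the paper case-splits on $\ell(\pi,y)>\ell(\hat\pi,y)$ rather than on $\pi\pbeq\hat\pi$ (these are equivalent via the contrapositive of your first case), and the paper asserts the existence of $j_0$ without your explicit computation of $Z^p_{\text{BinRel}(\hat\pi,j_0)}$.
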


\begin{proof}
Assume that $\ell(\pi, y) > \ell(\hat{\pi}, y) \geq 0$ (as otherwise the inequality trivially holds). Then, since $\ell \in \mathcal{L}(\ell_{\text{sum}}^{@p})$, it must be the case that $\hat{\pi} \stackrel{[p]}{\neq} \pi$. That is, $\hat{\pi}$ and $\pi$ assign different ranks to the labels in the top $p$. Therefore, there exists  $i \in [p]$ s.t. $\ell_{\text{sum}}^{@p}(\pi, \text{BinRel}(\hat{\pi}, i)) > 0$. Since $\ell \in \mathcal{L}(\ell_{\text{sum}}^{@p})$, for this same $i \in [p]$, $\ell(\pi, \text{BinRel}(\hat{\pi}, i)) > 0$. Therefore, we have 
\begin{align*}
c\:p\:\mathbbm{E}_{j \sim \text{Unif}([p])}\left[\ell(\pi, \text{BinRel}(\hat{\pi}, j)) \right] &\geq c\ell(\pi, \text{BinRel}(\hat{\pi}, i))\\
&= \frac{\max_{\tilde{\pi}, y}\ell(\tilde{\pi}, y)}{\min_{\tilde{\pi}, y}\{ \ell(\tilde{\pi}, y) \, \mid \, \ell(\tilde{\pi}, y) \neq 0\}}\ell(\pi, \text{BinRel}(\hat{\pi}, i))\\
&\geq \max_{\tilde{\pi}, y}\ell(\tilde{\pi}, y)\\
&\geq \ell(\pi, y).
\end{align*}
Combining the upperbounds in both cases gives the desired inequality. 
\end{proof}

\begin{lemma} \label{lem:subaddprec@p}
    For any $y \in \mathcal{Y}$,  $(\pi, \hat{\pi}) \in \mathcal{S}_k$, and $\ell \in \mathcal{L}(\ell_{\text{prec}}^{@p})$ 
    $$\ell(\pi, y) \leq \ell(\hat{\pi}, y) + c\:\ell(\pi, \text{BinRel}(\hat{\pi}, p)).$$
    where $c = \frac{\max_{\tilde{\pi}, y}\ell(\tilde{\pi}, y)}{\min_{\tilde{\pi}, y}\{ \ell(\tilde{\pi}, y) \, \mid \, \ell(\tilde{\pi}, y) \neq 0\}}.$
\end{lemma}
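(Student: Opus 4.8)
The plan is to mirror, nearly verbatim, the proof of Lemma~\ref{lem:subaddsum@p}, replacing $[p]$-equivalence by $p$-equivalence throughout. First I would dispose of the trivial case: if $\ell(\pi, y) \le \ell(\hat{\pi}, y)$, the claimed inequality is immediate because $c\,\ell(\pi, \text{BinRel}(\hat{\pi}, p)) \ge 0$. So assume $\ell(\pi, y) > \ell(\hat{\pi}, y) \ge 0$.

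In this case, since every $\ell \in \mathcal{L}(\ell_{\text{prec}}^{@p})$ assigns equal loss to $p$-equivalent predicted permutations, the inequality $\ell(\pi, y) \ne \ell(\hat{\pi}, y)$ forces $\pi$ and $\hat{\pi}$ to \emph{not} be $p$-equivalent; i.e., the top-$p$ label sets $\{i : \pi_i \le p\}$ and $\{i : \hat{\pi}_i \le p\}$ are distinct. Next I would show $\ell_{\text{prec}}^{@p}(\pi, \text{BinRel}(\hat{\pi}, p)) > 0$. The key observation is that $\text{BinRel}(\hat{\pi}, p)$ is the $\{0,1\}$-relevance vector that equals $1$ exactly on the top-$p$ labels of $\hat{\pi}$, so it has $\ell_1$-norm $p$; consequently its normalizing constant is $Z^p_{\text{BinRel}(\hat{\pi}, p)} = p$ (the maximum attainable top-$p$ mass), and therefore $\ell_{\text{prec}}^{@p}(\pi, \text{BinRel}(\hat{\pi}, p)) = p - \bigl|\{i : \pi_i \le p\} \cap \{i : \hat{\pi}_i \le p\}\bigr|$. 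Since two distinct subsets of $[K]$, both of size $p$, must have intersection of size strictly less than $p$, this quantity is positive. Because $\ell$ is zero-matched with $\ell_{\text{prec}}^{@p}$, it follows that $\ell(\pi, \text{BinRel}(\hat{\pi}, p)) > 0$.

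Finally, I would close the argument exactly as in Lemma~\ref{lem:subaddsum@p}: since $\ell(\pi, \text{BinRel}(\hat{\pi}, p))$ is a nonzero value of $\ell$, it is at least $\min_{\tilde{\pi}, y : \ell(\tilde{\pi}, y) > 0} \ell(\tilde{\pi}, y)$, so $c\,\ell(\pi, \text{BinRel}(\hat{\pi}, p)) \ge \max_{\tilde{\pi}, y} \ell(\tilde{\pi}, y) \ge \ell(\pi, y)$, whence $\ell(\pi, y) \le \ell(\hat{\pi}, y) + c\,\ell(\pi, \text{BinRel}(\hat{\pi}, p))$. The only step that needs care is the middle one — pinning down $Z^p_{\text{BinRel}(\hat{\pi}, p)} = p$ and translating $p$-inequivalence of $\pi, \hat{\pi}$ into a strictly-smaller-than-$p$ intersection of their top-$p$ sets; everything else is the routine bookkeeping with the constant $c$ identical to the sum-loss case, and indeed this is the simpler of the two lemmas since no averaging over $j \in [p]$ is needed.
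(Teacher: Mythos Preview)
Your proposal is correct and follows essentially the same approach as the paper's proof: both dispose of the trivial case $\ell(\pi,y)\le\ell(\hat\pi,y)$, then use the $p$-equivalence invariance of $\ell$ to deduce $\pi \stackrel{p}{\neq} \hat\pi$, conclude $\ell_{\text{prec}}^{@p}(\pi,\text{BinRel}(\hat\pi,p))>0$, transfer this to $\ell$ via zero-matching, and finish with the $c$-constant bookkeeping. If anything, your version is slightly more detailed in justifying $\ell_{\text{prec}}^{@p}(\pi,\text{BinRel}(\hat\pi,p))>0$ via the explicit computation $Z^p_{\text{BinRel}(\hat\pi,p)}=p$ and the intersection-of-size-$p$-sets argument, which the paper simply asserts.
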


\begin{proof}
Assume that $\ell(\pi, y) > \ell(\hat{\pi}, y) \geq 0$ (as otherwise the inequality trivially holds). Then, since $\ell \in \mathcal{L}(\ell_{\text{prec}}^{@p})$, it must be the case that $\hat{\pi} \stackrel{p}{\neq} \pi$. That is, $\hat{\pi}$ and $\pi$ assign different labels in the top $p$. Therefore, $\ell_{\text{prec}}^{@p}(\pi, \text{BinRel}(\hat{\pi}, p)) > 0$. Since $\ell \in \mathcal{L}(\ell_{\text{prec}}^{@p})$, $\ell(\pi, \text{BinRel}(\hat{\pi}, p)) > 0$. Therefore, we have 
\begin{align*}
c\: \ell(\pi, \text{BinRel}(\hat{\pi}, p)) &= \frac{\max_{\tilde{\pi}, y}\ell(\tilde{\pi}, y)}{\min_{\tilde{\pi}, y}\{ \ell(\tilde{\pi}, y) \, \mid \, \ell(\tilde{\pi}, y) \neq 0\}}\ell(\pi, \text{BinRel}(\hat{\pi}, p))\\
&\geq \max_{\tilde{\pi}, y}\ell(\tilde{\pi}, y)\\
&\geq \ell(\pi, y).
\end{align*}
Combining the upperbounds in both cases gives the desired inequality. 
\end{proof}

\begin{lemma} \label{lem:sum@plb}
    Let $\pi, \hat{\pi} \in \mathcal{S}_k$. Then, for all $(i, j) \in [K] \times [p]$,  $\ell^{@p}_{\text{sum}}(\pi, \text{BinRel}(\hat{\pi}, j)) \geq \mathbbm{1}\{\pi_{i}^j \neq \hat{\pi}_{ i}^j\}$.
\end{lemma}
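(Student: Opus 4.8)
The plan is to unpack $\ell^{@p}_{\text{sum}}(\pi, \text{BinRel}(\hat\pi,j))$ and reduce the claim to a short counting inequality about the $\pi$-ranks of the labels that $\hat\pi$ places in its top $j$. Set $S = \{m \in [K] : \hat\pi_m \le j\}$, so that $y := \text{BinRel}(\hat\pi,j)$ has $y^m = \mathbbm{1}\{m \in S\}$ and $|S| = j$. By definition of $\ell^{@p}_{\text{sum}}$,
\[ \ell^{@p}_{\text{sum}}(\pi, y) \;=\; \sum_{m \in S} \min(\pi_m, p+1) \;-\; Z^p_y, \qquad Z^p_y = \min_{\pi' \in \mathcal{S}_K} \sum_{m \in S} \min(\pi'_m, p+1), \]
so everything comes down to lower bounding $\sum_{m \in S}\min(\pi_m, p+1)$.

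The core estimate I would isolate is the following. For any $\pi' \in \mathcal{S}_K$, list the ranks $\{\pi'_m : m \in S\}$ in increasing order as $t_1 < t_2 < \dots < t_j$; since these are distinct positive integers, $t_k \ge k$, and because $k \le j \le p$ we get $\min(t_k, p+1) \ge \min(k, p+1) = k$. Summing, $\sum_{m\in S}\min(\pi'_m,p+1) \ge j(j+1)/2$, with equality exactly when $t_k = k$ for all $k$, i.e.\ when $\{\pi'_m : m \in S\} = \{1,\dots,j\}$. In particular $Z^p_y = j(j+1)/2$, so $\ell^{@p}_{\text{sum}}(\pi,y) \ge 0$ always. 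Moreover, since $\sum_{m\in S}\min(\pi'_m,p+1)$ is an integer, whenever $\{\pi'_m : m \in S\} \ne \{1,\dots,j\}$ there is a least index $k^\star$ with $t_{k^\star} > k^\star$, and then $\min(t_{k^\star}, p+1) \ge k^\star + 1$ --- here one uses $p+1 \ge j+1 \ge k^\star + 1$, so the cap cannot pull this term back down to $k^\star$ --- which forces $\sum_{m\in S}\min(\pi'_m,p+1) \ge j(j+1)/2 + 1$.

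Given this, the lemma follows by a two-case split on whether $\mathbbm{1}\{\pi_i \le j\} = \mathbbm{1}\{\hat\pi_i \le j\}$. If they are equal, the right-hand side $\mathbbm{1}\{\pi_i^j \ne \hat\pi_i^j\}$ is $0$ and the inequality is just $\ell^{@p}_{\text{sum}} \ge 0$. If they differ, then label $i$ separates the two size-$j$ sets $S$ and $T := \{m : \pi_m \le j\}$, so $S \ne T$ and hence there is some $m_0 \in S \setminus T$, i.e.\ $\pi_{m_0} \ge j+1$; thus $\{\pi_m : m \in S\} \ne \{1,\dots,j\}$, and the core estimate gives $\ell^{@p}_{\text{sum}}(\pi,y) \ge 1 = \mathbbm{1}\{\pi_i^j \ne \hat\pi_i^j\}$.

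I do not expect a real obstacle; the only point that needs care is the interaction with the cap $\min(\cdot, p+1)$, and the hypothesis $j \le p$ is exactly what is needed there: it guarantees that a label which $\hat\pi$ ranks in its top $j$ but $\pi$ ranks outside its top $j$ still contributes at least $j+1$ (hence strictly more than $j$) to the sum, so that the unit gap survives clipping and integrality delivers the clean ``$+1$''.
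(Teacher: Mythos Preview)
Your proof is correct and follows essentially the same route as the paper's: both identify the set $S = \{m : \hat\pi_m \le j\}$, compute the normalization as $\sum_{k=1}^{j} k$, and argue that when $\pi_i^j \neq \hat\pi_i^j$ some label in $S$ is pushed outside the top $j$ by $\pi$, forcing the sum to exceed the minimum by at least one. Your presentation is a bit tidier in two places --- the ordering argument $t_1 < \dots < t_j$ with $t_k \ge k$ gives the core estimate (and the $+1$ gap) cleanly, and observing $|S|=|T|=j$ with $S\neq T$ immediately yields $S\setminus T \neq \emptyset$, which collapses the paper's two cases ($i^\star \in \mathcal I$ versus $i^\star \notin \mathcal I$) into one line --- but the substance is the same.
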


\begin{proof}
Fix label $i^{\star} \in [K]$ and threshold $j^{\star} \in [p]$. Our goal is to show that $\ell^{@p}_{\text{sum}}(\pi, \text{BinRel}(\hat{\pi}, j^{\star})) \geq \mathbbm{1}\{\pi_{i^{\star}}^{j^{\star}} \neq \hat{\pi}_{i^{\star}}^{j^{\star}}\}$.  Recall that $\text{BinRel}(\hat{\pi}, j^{\star})[i^{\star}] = \mathbbm{1}\{\hat{\pi}_{i^{\star}} \leq j^{\star}\}$ by definition. Since $\ell_{\text{sum}}^{@p}(\hat{\pi}, \text{BinRel}(\hat{\pi}, j^{\star})) = 0$, we have that 

\begin{align*}
    \ell_{\text{sum}}^{@p}(\pi, \text{BinRel}(\hat{\pi}, j^{\star})) &= \ell^{@p}_{\text{sum}}(\pi, \text{BinRel}(\hat{\pi}, j^{\star})) - \ell^{@p}_{\text{sum}}(\hat{\pi}, \text{BinRel}(\hat{\pi}, j^{\star}))\\
    &= \sum_{i=1}^K \min(\pi_{i}, p+1) \text{BinRel}(\hat{\pi}, j^{\star})[i] - \sum_{i=1}^K \min(\hat{\pi}_{i}, p+1) \text{BinRel}(\hat{\pi}, j^{\star})[i]\\
    &= \sum_{i=1}^K \min(\pi_{i}, p+1) \mathbbm{1}\{\hat{\pi}_{i} \leq j^{\star}\}  - \sum_{i=1}^K \min(\hat{\pi}_{i}, p+1) \mathbbm{1}\{\hat{\pi}_{i} \leq j^{\star}\}\\
    &= \sum_{i=1}^K \min(\pi_{i}, p+1) \mathbbm{1}\{\hat{\pi}_{i} \leq j^{\star}\}  - \sum_{i=1}^K \hat{\pi}_{i} \mathbbm{1}\{\hat{\pi}_{i} \leq j^{\star}\}
\end{align*}

Let $\mathcal{I} \subset [K]$ s.t. for all $i \in \mathcal{I}$, $\hat{\pi}_{i}^{j^{\star}} = \mathbbm{1}\{\hat{\pi}_{i} \leq j^{\star}\} = 1$. Then, we have that 

\begin{align*}
\ell_{\text{sum}}^{@p}(\pi, \text{BinRel}(\hat{\pi}, j^{\star})) &= \sum_{i \in \mathcal{I}}\min(\pi_{i}, p+1) - \sum_{i \in \mathcal{I}}\hat{\pi}_{i}\\
&= \sum_{i \in \mathcal{I}} \min(\pi_{i}, p+1) - \sum_{i=1}^{j^{\star}} i
\end{align*}

Suppose that $\mathbbm{1}\{\pi_{i^{\star}}^{j^{\star}} \neq \hat{\pi}_{i^{\star}}^{j^{\star}}\} = 1$. It suffices to show that $\ell_{\text{sum}}^{@p}(\pi, \text{BinRel}(\hat{\pi}, j^{\star})) \geq 1$. There are two cases to consider. Suppose $i^{\star} \in \mathcal{I}$. Then, it must be the case that $\mathbbm{1}\{\pi_{i^{\star}} \leq j^{\star}\} = \pi_{i^{\star}}^{j^{\star}} = 0$, implying that $\pi_{i^{\star}} \geq j^{\star} + 1$.  It then follows that in the best case $\sum_{i \in \mathcal{I}} \min(\pi_{i}, p+1) \geq  \sum_{i=1}^{j^{\star}-1} i + (j^{\star} + 1) > \sum_{i=1}^{j^{\star}} i$ showcasing that indeed $\ell_{\text{sum}}^{@p}(\pi, \text{BinRel}(\hat{\pi}, j)) \geq 1$. Now, suppose $i^{\star} \notin \mathcal{I}$. Then, $\mathbbm{1}\{\hat{\pi}_{i^{\star}} \leq j^{\star}\} = 0$, which means that $\mathbbm{1}\{\pi_{i^{\star}} \leq j^{\star}\} = 1$. Accordingly, while $\hat{\pi}$ did not rank label $i^{\star}$ in the top $j^{\star}$, $\pi$ \textit{did} rank label $i^{\star}$ in the top $j^{\star}$. Since $|\mathcal{I}| = j^{\star}$, there must exist an label $\hat{i} \in \mathcal{I}$ which $\pi$ does not rank in the top $j^{\star}$. That is, there exists $\hat{i} \in \mathcal{I}$ s.t. $\pi_{\hat{i}} \geq j^{\star} + 1$.  Using the same logic,  in the best case $\sum_{i \in \mathcal{I}} \min(\pi_{i}, p+1) \geq  \sum_{i=1}^{j-1} i + (j^{\star} + 1)$ showcasing that again $\ell_{\text{sum}}^{@p}(\pi, \text{BinRel}(\hat{\pi}, j^{\star})) \geq 1$. Thus, we have shown that when $\mathbbm{1}\{\pi_{i^{\star}}^{j^{\star}} \neq \hat{\pi}_{i^{\star}}^{j^{\star}}\} = 1$, $\ell_{\text{sum}}^{@p}(\pi, \text{BinRel}(\hat{\pi}, j^{\star})) \geq 1$. Since $i^{\star}$ and $j^{\star}$ were arbitrary, this must be true for any $(i, j) \in [K] \times [p]$, completing the proof. \end{proof}

\begin{lemma} \label{lem:prec@plb}
    Let $\pi, \hat{\pi} \in \mathcal{S}_k$. Then, for all $i \in [K]$,  $\ell^{@p}_{\text{prec}}(\pi, \text{BinRel}(\hat{\pi}, p)) \geq \mathbbm{1}\{\pi_{i}^p \neq \hat{\pi}_{i}^p\}$.
\end{lemma}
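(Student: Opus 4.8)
The plan is to follow the same two-case strategy as in the proof of Lemma \ref{lem:sum@plb}, but the computation is considerably simpler since $\ell^{@p}_{\text{prec}}$ depends only on the top-$p$ \emph{set}, not on the order within it. First I would observe that for $y = \text{BinRel}(\hat{\pi}, p)$ we have $y_i = \mathbbm{1}\{\hat{\pi}_i \leq p\}$, so writing $A = \{i : \pi_i \leq p\}$ and $B = \{i : \hat{\pi}_i \leq p\}$, the summation term becomes $\sum_{i=1}^K \mathbbm{1}\{\pi_i \leq p\}\, y_i = |A \cap B|$. Since both $\pi$ and $\hat{\pi}$ are permutations, $|A| = |B| = p$, and hence this term attains its maximum value $p$ precisely when $A = B$; because $Z^p_y$ is chosen so that $\min_{\pi \in \mathcal{S}_K}\ell^{@p}_{\text{prec}}(\pi, y) = 0$, this forces $Z^p_y = p$. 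Therefore $\ell^{@p}_{\text{prec}}(\pi, \text{BinRel}(\hat{\pi}, p)) = p - |A \cap B|$.

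Next I would split into the two cases indicated by the indicator on the right-hand side. If $\pi_i^p = \hat{\pi}_i^p$ the right-hand side is $0$ and the inequality is immediate from nonnegativity of the loss. If $\pi_i^p \neq \hat{\pi}_i^p$, then $\mathbbm{1}\{\pi_i \leq p\} \neq \mathbbm{1}\{\hat{\pi}_i \leq p\}$, so $i$ lies in exactly one of $A$ and $B$; in particular $A \neq B$. Two distinct subsets of $[K]$, each of size exactly $p$, must satisfy $|A \cap B| \leq p - 1$, whence $\ell^{@p}_{\text{prec}}(\pi, \text{BinRel}(\hat{\pi}, p)) = p - |A \cap B| \geq 1 = \mathbbm{1}\{\pi_i^p \neq \hat{\pi}_i^p\}$. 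Since $i \in [K]$ was arbitrary, the claim follows.

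There is essentially no hard step here; the only point deserving care is pinning down the normalization constant $Z^p_y = p$ for this particular relevance vector, which follows directly from the fact that both top-$p$ index sets have size exactly $p$. Everything else reduces to a one-line set-cardinality argument, in contrast with Lemma \ref{lem:sum@plb}, whose corresponding bound required a more delicate rank-accounting argument because $\ell^{@p}_{\text{sum}}$ is sensitive to the \emph{order} within the top $p$.
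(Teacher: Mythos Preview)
Your proof is correct and follows essentially the same approach as the paper: both reduce the loss to $p - |A \cap B|$ where $A,B$ are the top-$p$ index sets of $\pi,\hat\pi$, and then observe that $\pi_i^p \neq \hat\pi_i^p$ forces $A \neq B$ and hence $|A\cap B|\le p-1$. The only cosmetic difference is that you pin down $Z_y^p=p$ directly while the paper eliminates the normalization by subtracting $\ell^{@p}_{\text{prec}}(\hat\pi,\text{BinRel}(\hat\pi,p))=0$, and you replace the paper's explicit two-case analysis with the equivalent one-line set-cardinality observation.
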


\begin{proof}
Fix label $i^{\star} \in [K]$. Our goal is to show that $\ell^{@p}_{\text{prec}}(\pi, \text{BinRel}(\hat{\pi}, p)) \geq \mathbbm{1}\{\pi_{i^{\star}}^{p} \neq \hat{\pi}_{i^{\star}}^{p}\}$.  Recall that $\text{BinRel}(\hat{\pi}, p)[i^{\star}] = \mathbbm{1}\{\hat{\pi}_{i^{\star}} \leq p\}$ by definition. Since $\ell_{\text{prec}}^{@p}(\hat{\pi}, \text{BinRel}(\hat{\pi}, p)) = 0$, we have that 
\begin{align*}
    \ell_{\text{prec}}^{@p}(\pi, \text{BinRel}(\hat{\pi}, p)) &= \ell^{@p}_{\text{prec}}(\pi, \text{BinRel}(\hat{\pi}, p)) - \ell^{@p}_{\text{prec}}(\hat{\pi}, \text{BinRel}(\hat{\pi}, p))\\
    &= \sum_{i=1}^K \mathbbm{1}\{\hat{\pi}_{i} \leq p\} \text{BinRel}(\hat{\pi}, p)[i] - \sum_{i=1}^K \mathbbm{1}\{\pi_{i} \leq p\} \text{BinRel}(\hat{\pi}, p)[i] \\
    &= p - \sum_{i=1}^K  \mathbbm{1}\{\pi_{i} \leq p\} \mathbbm{1}\{\hat{\pi}_{i} \leq p\}  \\
\end{align*}
Let $\mathcal{I} \subset [K]$ s.t. for all $i \in \mathcal{I}$, $\hat{\pi}_{i}^{p} = \mathbbm{1}\{\hat{\pi}_{i} \leq p\} = 1$. Then, we have that 

$$\ell_{\text{prec}}^{@p}(\pi, \text{BinRel}(\hat{\pi}, p)) = p - \sum_{i \in \mathcal{I}}\mathbbm{1}\{\pi_{i} \leq p\}.$$ 

Suppose that $\mathbbm{1}\{\pi_{i^{\star}}^{p} \neq \hat{\pi}_{i^{\star}}^{p}\} = 1$. It suffices to show that $\ell_{\text{prec}}^{@p}(\pi, \text{BinRel}(\hat{\pi}, p)) \geq 1$. There are two cases to consider. Suppose $i^{\star} \in \mathcal{I}$. Then, it must be the case that $\mathbbm{1}\{\pi_{i^{\star}} \leq p\} = \pi_{i^{\star}}^{p} = 0$, implying that $\pi_{i^{\star}} \geq p + 1$.  It then follows that in the best case $\sum_{i \in \mathcal{I}} \mathbbm{1}\{\pi_{i} \leq p\} \leq p - 1 < p$ showcasing that indeed $\ell_{\text{sum}}^{@p}(\pi, \text{BinRel}(\hat{\pi}, p)) \geq 1$. Now, suppose $i^{\star} \notin \mathcal{I}$. Then, $\mathbbm{1}\{\hat{\pi}_{i^{\star}} \leq p\} = 0$, which means that $\mathbbm{1}\{\pi_{i^{\star}} \leq p\} = 1$. Accordingly, while $\hat{\pi}$ did not rank label $i^{\star}$ in the top $p$, $\pi$ \textit{did} rank label $i^{\star}$ in the top $p$. Since $|\mathcal{I}| = p$, there must exist an label $\hat{i} \in \mathcal{I}$ which $\pi$ does not rank in the top $p$. That is, there exists $\hat{i} \in \mathcal{I}$ s.t. $\pi_{\hat{i}} \geq p + 1$.  Using the same logic,  in the best case $\sum_{i \in \mathcal{I}} \mathbbm{1}\{\pi_{i} \leq p\} \leq p-1 < p $ showcasing that again $\ell_{\text{prec}}^{@p}(\pi, \text{BinRel}(\hat{\pi}, p)) \geq 1$. Thus, we have shown that when $\mathbbm{1}\{\pi_{i^{\star}}^{p} \neq \hat{\pi}_{i^{\star}}^{p}\} = 1$, $\ell_{\text{prec}}^{@p}(\pi, \text{BinRel}(\hat{\pi}, p)) \geq 1$. Since $i^{\star}$ was arbitrary, this must be true for any $i \in [K]$, completing the proof. \end{proof}

\end{document}